\newcommand{\comb}[1]{\textcolor{black}{#1}}
\newcommand{\comblue}[1]{\textcolor{black}{#1}}
\newtheorem{proposition}{Proposition}
\newtheorem{lemma}{Lemma}
\newtheorem{theorem}{Theorem}
\newtheorem{example}{Example}
\begin{document}
\begin{center}

{\bf Designing ReLU Generative Networks to Enumerate Trees with a Given Tree Edit Distance\\}

\vspace{1cm}
Mamoona Ghafoor
~and
 Tatsuya Akutsu


  {Bioinformatics Center, Institute for Chemical Research, Department of Intelligence Sciences and Technology, 
Kyoto University, Uji 611-0011, Japan }

{Email: mamoona.ghafoor@kuicr.kyoto-u.ac.jp; takutsu@kuicr.kyoto-u.ac.jp
}
\end{center}
\vspace{1cm}

\begin{abstract}
The generation of trees with a specified tree edit distance has significant applications across various fields, including computational biology, structured data analysis, and image processing.
Recently, generative networks have been increasingly employed to synthesize new data that closely resembles the original datasets.
However, the appropriate size and depth of generative networks required to generate data with a specified tree edit distance remain unclear.
In this paper, we theoretically establish the existence and construction of generative networks capable of producing trees similar to a given tree with respect to the tree edit distance.
Specifically, for a given rooted, ordered, and vertex-labeled tree $T$ of size $n+1$ with labels from an alphabet $\Sigma$, and a non-negative integer $d$, we prove that any rooted, ordered, and vertex-labeled tree over $\Sigma$ with tree edit distance at most $d$ from $T$ can be generated using an appropriate random input sequence to a ReLU-based generative network of size $\mathcal{O}(n^3)$ and constant depth. 
The proposed networks were implemented and evaluated for generating trees with up to 21 nodes.
Due to their deterministic architecture, the networks successfully generated all valid trees within the specified tree edit distance. In contrast, state-of-the-art graph generative models GraphRNN and GraphGDP, which rely on non-deterministic mechanisms, produced significantly fewer valid trees, achieving validation rates of only up to 35\% and 48\%, respectively.
These findings provide a theoretical foundation towards construction of compact generative models and open new directions for exact and valid tree-structured data generation.
An implementation of the proposed networks is available at 
\url{https://github.com/MGANN-KU/TreeGen\_ReLUNetworks}.
\end{abstract}
\noindent
{\bf Keywords:}{
{Generative networks; ReLU function; trees; tree edit distance; enumeration; Euler string}


\section{Introduction}
Over the past few years, generative networks have been widely studied due to its vast applications in different fields such as natural language
processing, data augmentation, DNA sequence synthesis, and
drug discovery~\cite{KN2017, TB2013, BA2017, GE2022}.
Generative networks are a class of machine learning models that learn the underlying patterns, structures, and dependencies in training data. By capturing this statistical information, they can generate new data samples that resemble the original data. These models are not limited to data synthesis but are also used in tasks such as data augmentation, imputation, and representation learning. Applications include generating realistic images, text, audio, and other complex data modalities~\cite{WC2019, VD2016, AN2022}.
For example, the application of generative models in bioinformatics includes motif discovery, secondary structure prediction, drug discovery, cancer research, the generation of new molecules and the analysis of single-cell RNA sequencing data~\cite{ZJ2014}, \cite{LJ2018}, \cite{SK2020}, \cite{GC2020}. 

There are various types of generative models, each with distinct characteristics. Autoencoder-based models include variational autoencoders~(VAEs)~\cite{KD2019} and denoising autoencoders~(DAEs)~\cite{BY2013}, which are designed to learn compact representations of data by encoding and decoding it. 
Generative adversarial networks~(GANs)~\cite{GI2020}, including specialized versions like deep convolutional generative adversarial networks~(DCGANs)~\cite{GF2018}, use adversarial training to generate realistic data by having a generator and discriminator compete against each other. 
Deep belief networks, such as deep Boltzmann machines~(DBMs)~\cite{HG2009}, are probabilistic models that represent complex data distributions through a stack of restricted Boltzmann machines. 
Generative stochastic networks~(GSNs)~\cite{AG2016} use stochastic processes to generate data by iteratively refining it. 
Autoregressive models, including pixel convolutional neural networks~(PixelCNN)~\cite{VO2016} and pixel recurrent neural networks~(PixelRNN)~\cite{VD2016b}, model the distribution of image pixels, generating data pixel by pixel in a sequential manner. The deep recurrent attentive writer (DRAW) model~\cite{GK2015} combines recurrent neural networks and attention mechanisms to generate images, focusing on specific parts of the data during the generation process. 
Diffusion models have recently gained significant popularity in generative AI.
These models generate data by first learning how to gradually add noise to real data until it becomes random noise. Then, they are trained to reverse this process, step by step, by removing the noise and recovering the original data distribution. During generation, the model starts with pure noise and progressively denoises it to produce a realistic sample, like an image or audio clip. 
This step-by-step denoising is guided by a neural network, often trained to predict the noise added at each stage~\cite{HJ2020}, \cite{YL2023}.

Selecting the right function family and network model is essential in machine learning. A function family that is too broad may cause high computational costs and overfitting, while a limited one might not produce accurate predictions~\cite{KA2022}. 
Choosing the appropriate network remains challenging, as there is no clear choice for every problem. 
The universal approximation theorem states that a two-layer neural network can approximate any Borel measurable function~\cite{HSW1989}. 
But such networks often require a large number of nodes. 
Studies have examined how the choice of function family relates to network size, revealing that deeper networks significantly enhance representational power~\cite{MPCB2014, RPKGS2017}. 
Not all functions can be efficiently represented by any architecture. 
For instance, Telgarsky~\cite{T2015} identified functions requiring exponentially more nodes in shallow networks compared to deep ones. 
Szymanski and McCane~\cite{SM2014} showed that deep networks are well-suited for modeling periodic functions, while Chatziafratis et al.~\cite{CNPW2019} established width lower bounds based on depth for such functions. 
Hanin and Rolnick~\cite{BR2019} further found that networks with piecewise linear activation functions do not exponentially increase their expressive regions. 
Additionally, Bengio et al.\cite{YDS2010} and Biau et al.\cite{BSW2019} demonstrated that decision trees and random forests can be approximated by neural networks using sigmoidal, Heaviside, or tanh activation functions. 
Kumano and Akutsu~\cite{KA2022} later extended this result to networks using ReLU and similar activations.
Recently, Ghafoor and Akutsu~\cite{MT2024} discussed the existence of 
generative networks with ReLU as activation function and constant depth to generate similar strings with a given edit distance.  

Selkow~\cite{SM1977} introduced the problem of tree edit distance as a generalization of the classical string edit distance problem. 
The tree edit distance problem has several applications in applied fields including  computational biology~\cite{Gus97, SZ90, HTGK03, Wat95}, analysis of structured data~\cite{BGK03, Cha99, FLMM09}, and image processing~\cite{BK99, KTSK00, KSK01, SKK04}. 
Different algorithms have been developed to compute the tree edit distance between two rooted, labeled and ordered trees. 
For instance, Tai~\cite{KC1979} proposed an algorithm for the tree edit problem with time complexity $\mathcal{O}(n^6)$, where $n$ is the size of the underlying tree. Zhang and Shasha~\cite{KZ1989}, Klein~\cite{PN1998} and Demaine et al.~\cite{ED2010} proposed improved algorithms with a time complexity $\mathcal{O}(n^4)$, $\mathcal{O}({n^3}\log n)$ and $O(n^3)$, respectively. 
Later on Bringmann et al.~\cite{KB2020} further improved the complexity to 
$\mathcal{O}(n^{3-\epsilon})$ for weighted trees. 
In 2022, Mao~\cite{XM2022} introduced an algorithm for unweighted trees with complexity $\mathcal{O}(n^{2.9148})$. 
Recently, Nogler et al.~\cite{NJ2024}, proposed an efficient algorithm with  complexity $\mathcal{O}(n^{ 3/2 \Omega( \sqrt{\log n})})$ for weighted trees 
and $\mathcal{O}(n^{2.6857})$ for unweighted trees.  

Recent years have witnessed significant advancements in structured generative networks, with a growing focus on models that balance empirical performance and structural validity. GraphRNN is a foundational autoregressive model that generates graphs by sequentially adding nodes and edges, widely used in molecular design \cite{You2018GraphRNN}. Building on this, Wang et al.~\cite{Wang2025LearningOrder} proposed a variational autoregressive model that learns generation order dynamically, achieving state-of-the-art molecular graph results without diffusion. AutoGraph \cite{Chen2025AutoGraph} applies transformers to autoregressively generate graphs as sequences. TreeGAN by Liu et al.~\cite{Liu2018TreeGAN} is a syntax-aware generative adversarial network designed for sequence generation that respects tree-structured syntactic constraints. In parallel, diffusion-based generative models have recently advanced structured data generation. For instance, Huang et al.~\cite{Huang2022GraphGDP} proposed a continuous-time generative diffusion model, GraphGDP, to generate permutation-invariant graphs. 
Liu et al.~\cite{Liu2025BetaDiff} introduced a beta-noise process that effectively models both discrete graph structures and continuous node attributes, achieving strong results on biochemical and social network benchmarks. The framework by Madeira et al.~\cite{Madeira2024StructGraph} enforced hard structural constraints such as planarity or acyclicity via an edge-absorbing noise mechanism, ensuring generated graphs rigorously maintain desired properties throughout the diffusion process. 
While existing generative models offer powerful empirical frameworks for producing high-quality and diverse structured data, their guarantees are inherently probabilistic and data-dependent. These methods require training on limited datasets, and their performance is influenced by the quality and coverage of this data. As a result, they cannot provide exact enumeration of the underlying combinatorial space, nor can they ensure complete validity or coverage of all possible structured instances.

As a step towards addressing these issues, we study the exact generation of rooted, ordered, and vertex-labeled trees similar to a given tree using neural networks, and prove the existence of ReLU-based generative networks for the tree edit distance problem.
Given a rooted, ordered, and vertex-labeled tree $T$ of size $n+1$ with labels from a symbol set $\Sigma$, we establish the existence of a ReLU-activated network that takes as input a finite random sequence $x$, and outputs a similar tree with edit distance at most $d$ from $T$. 
The network is deterministic in the sense that, for each random input sequence $x$, it systematically identifies and performs the specified substitution, deletion, and insertion operations to produce a unique output tree $T'$ whose tree edit distance from $T$ is at most $d$ (see the proofs of Theorems~\ref{thm:Esnn}, \ref{thm:Ednn}, \ref{thm:EInn}, and \ref{thm:Enn} for details).   
Different random input sequences may lead to the same output tree. 
Nevertheless, by considering an appropriate collection of random input sequences that covers all admissible edit operations, every tree similar to $T$ within distance $d$ can be generated.
The key idea of our approach is to first construct a directed, rooted, ordered, and edge-labeled tree based on $T$.
We then reduce the tree edit distance problem to a string edit distance problem by representing the tree as an Euler string~\cite{PN1998}, obtained through a depth-first search (DFS) traversal.
The proposed networks are applied on trees with up to 21 nodes to generate similar trees, and are compared with the state-of-the-art graph generative models GraphRNN by You et al.~\cite{You2018GraphRNN} and GraphGDP by Huang et al.~\cite{Huang2022GraphGDP}.
An implementation of the proposed networks is available at~\url{https://github.com/MGANN-KU/TreeGen\_ReLUNetworks}.

The paper is organized as follows: 
Preliminaries are discussed in Section~\ref{sec:pre}. 
ReLU generative networks that can identify the indices and labels of the directed edges in the Euler string are discussed in Section~\ref{sec:OutE}. 
Existence of  
ReLU networks to generate any tree with tree edit distance at most $d$ due to substitution operations is discussed in Section~\ref{sec:ESGR}. 
Existence of ReLU networks to generate any tree with edit distance at most (resp., exactly) $d$ due to deletion (resp., insertion) operations is discussed in 
Section~\ref{sec:EdGR}. 
Generation of any tree with tree edit distance at most $d$ due 
to simultaneous application of deletion, substitution and insertion operations by using  a ReLU network is discussed in Section~\ref{sec:EIGR}. 
Computational experiments are discussed in Section~\ref{sec:comp}.
A conclusion and future directions are given in Section~\ref{sec:concl}. 
Proofs of some theorems, examples and explanations of the program codes with sample instances are 
given in Appendix~\ref{sec:app}.   

\section{Preliminaries}\label{sec:pre}
Edit distance between two vertex-labeled, rooted and ordered trees $T$  and $U$  is defined as the minimum number of operations needed to transform $T$  into $U$. 
These operations are substitution, deletion, and insertion. 
Substitution involves simply changing the label of a node in $T$; 
deletion removes a non-root node $a$ in $T$, reassigning its parent $b$ as the new parent of all children of $a$; and 
insertion is the complement of the deletion operation, i.e., 
a node $a$ is inserted as a child of a node $b$, and $a$ is set as the new parent of an ordered subset of consecutive children of $b$. 
The order among the children is preserved during both the deletion and insertion operations, as illustrated in Fig.~\ref{fig:ETD}.
\begin{figure}[h]
	\centering
	\includegraphics[scale = 0.6]{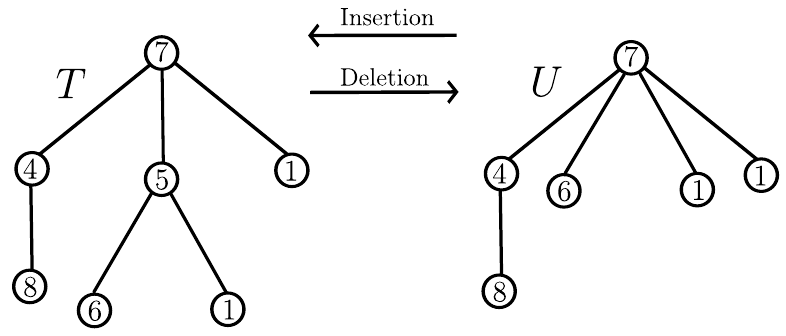}
	\caption{Tree deletion and insertion operations. 
	In $T$, the node with label 7 is the parent of the node with label 5, which has been deleted. The node with label 7 becomes the new parent of the children with labels 1 and 6 of node 5 in $U$. Similarly, a node with label 5 is inserted as a child of the node with label 7 in $U$, and the nodes with labels 1 and 6 are set as children of node 5 in $T$.
The order among the children 1 and 6 is preserved in the deletion and insertion operations. 
}\label{fig:ETD}
\end{figure}

Let $T$ be a vertex-labeled, rooted and ordered tree with $n$ edges ($n+1$ vertices) with
vertex labels from  the set $\Sigma = \{1, 2, \ldots, m\}$, and
 left-to-right ordering on the siblings of each vertex. 
 We consider the depth-first search (DFS) index on the vertices of $T$ starting from the root with index $0$. 
For $T$, we define a directed edge-labeled, rooted and ordered tree with $n+1$ vertices and $2n$ edges as follows: 
replace the edge between any two adjacent vertices $u$ and $v$ with labels $a$ and $b$, resp., where $u$ is the parent of $v$, 
by two directed edges $(u, v)$ and $(v, u)$ with labels $b$ and $b+m$, respectively.
In this setting, we call $(u, v)$ and $(v, u)$, the inward edge and the outward edge, resp.,  of the vertex $v$. 
If $i$ is the DFS index of $v$, then we call the inward and outward edges of $v$, the inward and outward edges of $i$.
We call $(v, u)$, the outward edge of the inward edge $(u, v)$. 
 The Euler string of $T$ is defined to be the string obtained by 
 listing the labels of the inward and outward edges of the directed tree corresponding to $T$ in the DFS order on edges starting from index $1$. 
 We denote by $E(T)=t_1, t_2, \ldots, t_{2n}$, the Euler string of $T$ with $n$ edges.
 An example tree $T$, its directed tree, and Euler string are given in Figs.~\ref{fig:ET}(a), (b), and (c), respectively. 
 Henceforth, we will use the terms, edge and label interchangeably. 
\begin{figure}[t!]
	\centering
	\includegraphics[scale = 0.6]{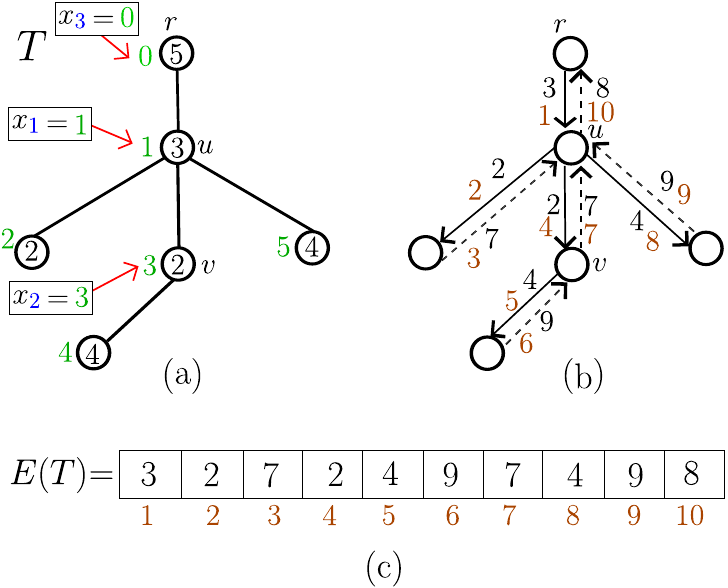}
	\caption{(a) A vertex-labeled, rooted and ordered tree $T$ with six vertices, root $r$, label set $\Sigma = \{1, 2,3, 4, 5\}$ and a random sequence $x_1, x_2, x_3 = 
	1, 3, 0$, where the labels are depicted inside the vertices, and the DFS indices are shown in green;
	(b) The directed tree corresponding to $T$ given in (a). The inward edges and outward edges are depicted by solid and dashed directed lines, respectively. The labels and DFS indices of these edges are shown in black and brown color, respectively. 
	The vertex $u$ with label $3$ is the parent of $v$ with label $2$. 
	Corresponding to the edge $uv$ in $T$, there is an inward edge 
	$(u, v)$ and an outward edge $(v, u)$ with labels $2$ and $7$, respectively, in the directed tree. 
	These edges $(u, v)$ and $(v, u)$ are the inward and outward, resp., edges of 
	$x_2 = 3$ since the DFS index of $v$ is 3 in $T$. Note that there is no edge that corresponds to $x_3=0$;  and 
	(c) The Euler string $E(T)$.
}\label{fig:ET}
\end{figure}

Observe that a tree can be completely determined by its Euler string, i.e., 
 $E(T)$ is a canonical representation of $T$ when the labels of roots of the underlying trees is fixed. 
Therefore any tree edit operations (substitution, deletion, and insertion) on a 
non-root vertex $u$ of a given tree can be viewed as edit operations on the Euler string of the tree on the entries that correspond to the inward edge $(u, v)$ and the outward edge $(v, u)$ with some refinements to obtain the desired tree. 
Furthermore, a vertex in $T$ can be specified by its DFS index. 
Therefore in the rest of the paper, we will use a random sequence $x_1, x_2, \ldots, x_d$, 
$d \geq 1$, with integers $x_j \in [0, n]$, unless stated otherwise, to specify the DFS indices of the vertices under consideration in a tree with $n+1$ vertices. 
In this work, we focus on generation of similar trees with the same root as that of the input tree, therefore $x_j = 0$ is ignored in substitution and deletion operations. Similarly, repeated entries in the case of substitution and deletion operations are ignored. 
An example random sequence is given in Fig.~\ref{fig:ET}(a). 
In the rest of the discussion, we call a vertex-labeled or edge-labeled, rooted and ordered tree simply a tree. 
\comblue{A list of symbols, variables, and their descriptions used throughout the discussion is provided in Table~\ref{tab:reset}.}

\begin{table}[h!]
\centering
\caption{List of symbols, variables, and their descriptions.}
\comblue{\begin{tabular}{|lp{14cm}|}
\hline
\textbf{Symbols} & \textbf{Explanations} \\
\hline
$T$ & Vertex-labeled, rooted and ordered tree. See Fig.~\ref{fig:ETD}.\\
$(u, v)$ & Inward edge of $v$, where $u$ is the parent of $v$. See Fig.~\ref{fig:ET}(b.)\\
$(v, u)$ & Outward edge of $v$,  where $u$ is the parent of $v$. See Fig.~\ref{fig:ET}(b).\\
$E(T)$ & Euler string of tree $T$. See Fig.~\ref{fig:ET}(c). \\
$n$ &  Number of edges in a tree.   \\
$d$ & Edit distance.   \\
$B$ &  A sufficiently large number.  \\
$\Sigma$ & Set $\{1, 2, \ldots, m\}$ of labels. \\
$\rm{in}_{\ell i}$ & Number of inward edges between ${\ell}$-th and $i$-th entries of an Euler string. See Fig.~\ref{fig:inofout} . \\
$\rm{out}_{\ell i}$ &  Number of outward edges between ${\ell}$-th and $i$-th entries of an Euler string. See Fig.~\ref{fig:inofout}. \\
DFS index &  Depth first search indexing of vertices. See Fig.~\ref{fig:ET}(a)   \\
TS$_d$& Generative ReLU network for tree edit distance $d$ due to substitution. See Fig.~\ref{fig:sub_N}.  \\
TD$_d$ &Generative ReLU network for tree edit distance $d$ due to deletion. See Fig.~\ref{fig:Del_N}.  \\
TI$_d$& Generative ReLU network for tree edit distance $d$ due to insertion. See Fig.~\ref{fig:in_N}.  \\
TE$_d$& Generative ReLU network for tree edit distance $d$ due to substitution, deletion or insertion. See Fig.~\ref{fig:Uni_N}. \\
$x_1, x_2, \ldots, x_{2d}$ & A random input sequence for TS$_d$. $x_j, 1 \leq j \leq d$
specifies the DFS index of a vertex for substitution, and $x_{d+j},  1 \leq j \leq d$ denote the value to be substituted. See Fig.~\ref{fig:sub_N}. \\
$x_1, x_2, \ldots, x_d$ & A random input sequence for TD$_d$. 
$x_j$ specifies the DFS index of a vertex of a tree to be deleted. See Fig.~\ref{fig:ET}(a). \\
$x_1, x_2, \ldots, x_{4d}$ & 
A random input sequence for TI$_d$. $x_j, 1 \leq j \leq d$
specifies the DFS index of a vertex for insertion, and $x_{d+j}$ and $x_{2d+j},  1 \leq j \leq d$ specify bounds on the children of the vertex with index $x_j$, and  $x_{3d+j},  1 \leq j \leq d$ denotes the value to be inserted. See Fig.~\ref{fig:in_N}.  \\
$x_1, x_2, \ldots, x_{7d}$ & 
A random input sequence for TE$_d$. $x_j, 1 \leq j \leq d$
specifies the input for deletion, 
$x_{d+j}, 1 \leq j \leq 2d$ specifies the input for substitution, and 
$x_{3d+j}, 1 \leq j \leq 4d$ specifies the input for insertion. See Fig.~\ref{fig:Uni_N}.  \\
&\\
&{ All local variables used in Lemma~\ref{thm:inward} are explained in Example~\ref{ex:in} and Fig.~\ref{fig:inward}.}\\
& {All local variables used in Proposition~\ref{pro:e-bar1} are explained in Example~\ref{ex:desc} and Fig.~\ref{fig:inofout}.}\\
&{ All local variables used in Lemma~\ref{thm:ebar4} are explained in Example~\ref{ex:out} and Fig.~\ref{fig:outward}.}\\
&{ All local variables used in Theorem~\ref{thm:Esnn} are explained in Example~\ref{exa:Sub} and Fig.~\ref{fig:sub}.}\\
&{ All local variables used in Theorem~\ref{thm:Ednn} are explained in Example~\ref{exa:Del} and Fig.~\ref{fig:Del}.}\\
&{ All local variables used in Theorem~\ref{thm:EInn} are explained in Example~\ref{exa:ins} and Fig.~\ref{fig:Ins}.}\\
&{ All local variables used in Theorem~\ref{thm:Enn} are explained in Example~\ref{exa:Uni}.}\\
\hline
\end{tabular}}
\label{tab:reset}
\end{table}

\section{Identification of Edge Labels by ReLU Network}\label{sec:OutE}
We focus on designing generative networks with ReLU as an activation function to generate any tree that are similar to a given tree. More precisely, we are interested in the following problem:\smallskip\\
{\bf Input:} A rooted, ordered, and vertex-labeled tree $T$ with labels from an alphabet $\Sigma$, and a non-negative integer $d$.\\
{\bf Output:} Construct generative networks with ReLU as an activation function that can generate any rooted, ordered, and vertex-labeled tree over $\Sigma$ with tree edit distance at most $d$ from $T$. 

We target this problem by reducing the tree edit distance problem to the string edit distance problem by representing trees as their Euler strings as explained in Section~\ref{sec:pre}.
As a sub-task, the positions and labels of the under consideration inward and outward edges in the Euler string are required to perform the edit operations. 
However such positions and labels are not readily available. 
Therefore we first discuss the existence of ReLU networks to identify the positions and labels of the 
inward and outward edges in an Euler string corresponding to a given random sequence 
$x_1, x_2, \ldots, x_d$ in \comblue{Lemma~\ref{thm:inward}}. 

\begin{lemma}
\label{thm:inward}
\comblue{Let $T$ be a tree with $n$ edges, 
and $ x = x_1, x_2, \ldots, x_{d}$ be a random DFS sequence of integers 
over the interval $[0, n]$.  
Then there exists a ReLU network with size 
$\mathcal{O}(dn)$ and constant depth that can identify the label
of inward edge of the vertex with non-zero DFS index $x_j$ in the Euler string $E(T)$.} 
\end{lemma}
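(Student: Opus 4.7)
The plan is to exploit the fact that $T$ is given, so the map from DFS index $k \in \{1,\ldots,n\}$ to the label $\ell_k \in \Sigma$ of the inward edge of the vertex with DFS index $k$ is a fixed function determined by $T$ via a single DFS traversal. The network therefore only has to implement this lookup table on the unknown input coordinates $x_1, \ldots, x_d$. Since each $x_j$ is known to be an integer in $[0, n]$, a standard ReLU triangle indicator suffices.

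Concretely, for each $k \in \{1, \ldots, n\}$ I would build the function
\[
  \phi_k(x) \;=\; \mathrm{ReLU}(x - k + 1) \;-\; 2\,\mathrm{ReLU}(x - k) \;+\; \mathrm{ReLU}(x - k - 1).
\]
A direct check at integer arguments gives $\phi_k(k) = 1$ and $\phi_k(x) = 0$ for every other integer $x$; in particular $\phi_k(0) = 0$ whenever $k \geq 1$, so the ``non-zero DFS index'' clause is handled for free. Each $\phi_k$ uses three ReLU units in a single hidden layer.

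Next, for each input coordinate $x_j$ I would produce the output
\[
  L_j \;=\; \sum_{k=1}^{n} \ell_k \, \phi_k(x_j),
\]
which equals $\ell_{x_j}$ whenever $x_j \in \{1, \ldots, n\}$ and equals $0$ when $x_j = 0$. Each $L_j$ is a single linear combination of $3n$ ReLU activations, so the full construction consists of one hidden layer with $3 d n$ ReLU gates and a linear readout layer. This gives a network of constant depth and size $\mathcal{O}(dn)$, matching the bound claimed in the lemma.

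The only obstacle worth flagging is making the indicator vanish simultaneously at every wrong integer in $[0, n]$; the triangle construction above is the standard device for achieving this, and because the inputs $x_j$ are guaranteed to be integers, no real-valued subtleties (tie-breaking, soft outputs, etc.) arise. The labels $\ell_k$ and the DFS indexing of $T$ can be precomputed once from the fixed tree, so they appear only as hard-wired weights in the output layer and the network itself has no dependence on $T$ beyond these constants.
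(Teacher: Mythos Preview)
Your proof is correct and meets the claimed $\mathcal{O}(dn)$ size and constant depth, but it follows a different route from the paper. You exploit that $T$ is fixed to precompute the map $k \mapsto \ell_k$ offline and hard-wire it as output weights, reducing the task to a pure integer lookup via the triangle indicators $\phi_k$. The paper instead works position-by-position along the Euler string $E(T)=t_1,\ldots,t_{2n}$: it sets binary flags $p_i$ marking inward-edge positions, accumulates them to recover the DFS index $p''_i$ carried at each position, matches against the input with a Kronecker-delta gadget $q_{ji}=\delta(p''_i,x_j)$, and outputs $r'_{ji}=t_i\cdot q_{ji}$. Your construction is shorter (one hidden layer versus the paper's eight) and conceptually cleaner for the lemma as stated. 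The trade-off is that the paper's intermediate quantities $p_i$, $q_{ji}$, and especially the position-indexed $r'_{ji}$ are reused verbatim in Lemma~\ref{thm:ebar4} and Theorems~\ref{thm:Esnn}--\ref{thm:Enn}, where knowing \emph{where} in $E(T)$ the inward edge sits, not just its label, is essential; your lookup gives only $L_j$ and would need to be supplemented with position information before it could feed those downstream constructions.
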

\begin{proof}
Let $E(T)=t_1, t_2, \ldots, t_{2n}$.  
The following system of equations  can be used to obtain the labels of the 
required inward edges in $E(T)$ (see Example~\ref{ex:in}), where 
$i\in \{1, 2, \ldots, 2n\}$, $j \in \{1, 2, \ldots, d\}$ and $C$ is a large number.
\begin{align}
p_i &= 
\begin{cases}
1 &\text{~if~} t_i \leq m,\\
0 &\text{~otherwise},
\end{cases} \label{eqp3}\\
p'_i  &= \max(\sum_{k=1}^{i} p_k  - C\delta(p_i, 0) , 0)  \label{eqp'3}, \\
p''_i  &= p'_i  +  \max(2n - C(1-\delta(p'_i, 0) ), 0)  \label{eqp''3}, \\
q_{ji}  &= \delta(p''_i, x_j),  \label{eqq3}\\
r'_{ji}  &= t_i \cdot q_{ji}. \label{eqr'3}
\end{align}
Eq.~(\ref{eqp3}) outputs a binary variable $p_i$ which is 1 if and only if the $i$-th entry of $E(T)$ is the label of an inward edge.  
Eq.~(\ref{eqp'3}) identifies the DFS index of only inward edges in $E(T)$ (see Example~\ref{ex:in}).
That is $p_i' = \ell \neq 0$ if and only if the $i$-th entry of $E(T)$ corresponds to the $\ell$-th inward edge in the directed tree. 
Eq.~(\ref{eqp''3}) replaces $p_i' = 0$ by $2n$ to ignore the root case. 
The variable $q_{ji} = 1$ if and only if $p_i' = x_j$ in Eq.~(\ref{eqq3}), and
Eq.~(\ref{eqr'3}) is used to identify the labels of the desired inward edges. 
Note that all these equations involve the maximum function or $\delta$ function which can be simulated 
by the ReLU activation function based on \comblue{Proposition~1 by Ghafoor and Akutsu~\cite{MT2024}.}
Therefore we can construct an eight-layer neural network with ReLU as an activation function with size 
$\mathcal{O}(dn)$ and constant depth that can identify the labels of the 
inward edges of non-zero $x_j$. 
\end{proof}
A demonstration of Lemma~\ref{thm:inward} is given in Example~\ref{ex:in}. 
\begin{example}\label{ex:in}
Consider the tree $T$ shown in Fig.~(\ref{fig:ET})(a) with 
$E(T) =3, 2, 7, 2, $ $4, 9, 7, 4, 9, 8$, and the random sequence
$x =1, 3, 0 $. 
We wish to identify the labels of the inward edges of 
$x_j \neq0$ in $E(T)$. 
For $x_1 = 1$ and $x_2 = 3$, the labels of the inward edges are 
$3$ and $2$, resp., whereas $x_3 = 0$ does not correspond to 
any inward edge, and therefore it is ignored. 
The variables that are used in the process of obtaining the required labels by using Lemma~\ref{thm:inward} are discussed and illustrated in Table~\ref{tb:inward} and Fig.~(\ref{fig:inward}).
\begin{table*}[h!]
\centering
\caption{The variables, their meaning and example values used in Lemma~\ref{thm:inward}. }
\begin{tabular}{|c|p{7.5cm}|l|} 
\hline
Variable & \hspace{3cm}Meaning &  \hspace{2.2cm} Value \\ \hline \hline
$x_j$ & Specify the inward edge of $x_j$ of which the label is required. & 
\makecell[l]{$x = 1, 3, 0$\\
(Fig.~\ref{fig:ET}(a))} \\ \hline 
$p_i$   &  A binary variable which is one if there is an inward edge at the $i$-th position of $E(T)$.   & \makecell[l]{$p = [1, 1, 0, 1, 1, 0, 0, 1, 0, 0]$\\
 (Fig.~\ref{fig:inward}(a))}  \\ \hline
 $p_i'$   & The DFS index of the inward edge, among all the inward edges, that is at the $i$-th position of $E(T)$.   
 	& \makecell[l]{$p' = [1, 2, 0, 3, 4, 0, 0, 5, 0, 0]$ \\
 	 (Fig.~\ref{fig:inward}(b))}   \\ \hline
$p''_i$   & Replacing $p'_i = 0$ with $2n = 10$ in $p'$ to ignore the root case. & \makecell[l]{$p'' =$\\ $[1, 2, 10, 3, 4, 10, 10, 5, 10, 10]$}  \\ \hline
$q_{ji}$ & A binary variable which identifies the position $i$ of the non-zero input $x_j$ in $E(T)$, i.e., 
$q_{ji} = 1$ if and only if $p''_i = x_j$. & 
	\makecell[l]{$q_{1,1}=q_{2,4}=1$, \\other variables are zero\\ 
	(Fig.~\ref{fig:inward}(c))  }\\ \hline
$r'_{ji}$ & The required label of the inward edge of $x_j$. &
	\makecell[l]{$r'_{1,1}=3, r'_{2,4}=2$,\\ other variables are zero\\
	(Fig.~\ref{fig:inward}(c))}\\ \hline
\end{tabular}
\label{tb:inward}
\end{table*}
\begin{figure*}[ht!]
	\centering
	\includegraphics[scale = 0.6]{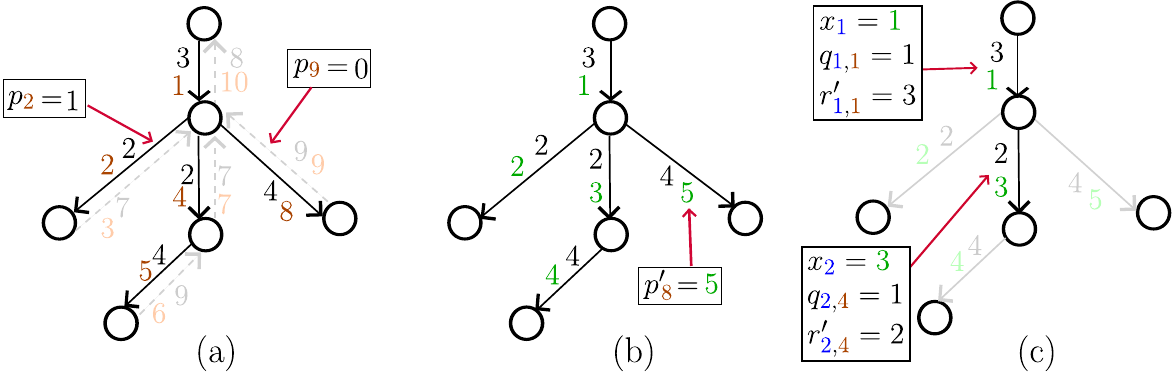}
	\caption{Illustrations of the variables used Eqs.(\ref{eqp3})-(\ref{eqr'3}) in Lemma~\ref{thm:inward}: 
	(a) The variable $p_i$ which is 1 for the inward (black) edges and 0 for the outward (gray) edges in the directed tree corresponding to the tree 
	$T$ given in Fig.~\ref{fig:ET}(a), e.g., $p_2 = 1$ (resp., 
	$p_9 = 0$) as there is an inward edge (resp., outward edge) with the DFS index 2 
	(resp., 9);
	(b) The variable $p'_i$ (green), e.g., $p'_8 = 5$ means that 
	the inward edge of $5$ has the DFS index 8 in (a); 
	(c) For a fixed DFS index $i$, the variable $q_{ji} =1$ for some $j$ 
	(black inward edge), and 
	$q_{ji} =0$  for all $j$ (gray inward edges), e.g., 
	for the DFS index $i  =4$, we have $j = 2$ such that $q_{2,4} = 1$ and 
	thus the inward edge with the DFS index $4$ is depicted in black,  
	whereas for $i = 8$ there does not exist any $j$ such that $q_{j8} = 1$, and 
	so the inward edge with the DFS index $8$ is depicted in gray. 
	The dark edges are the
	desired inward edges of which labels are required. The labels of these edges are 
	stored by the 
	variable $r'_{ji}$, e.g., $r'_{2,4} = 2$ means that the desired inward edge 
	specified by $x_2$ has the DFS index $4$ and label $2$. }\label{fig:inward}
\end{figure*}
\end{example}

Proposition~\ref{pro:e-bar1} gives a necessary and sufficient condition for the $i$-th entry to be the outward edge of an inward edge at $\ell$-th position in an Euler string $E(T)=t_1, t_2, \ldots, t_{2n}$. 
The condition essentially depends on the number of inward edges and outward edges between the two given positions $i$ and $\ell$.
Before going into the details, for any two positions ${i}$ and ${\ell}$, with $1 \leq \ell < i \leq 2n$, we denote by $\rm{in}_{\ell, i}$ (resp., $\rm{out}_{\ell, i}$), the total number of 
inward (resp., outward) edges among the entries $t_k$, $\ell < k < i$, where 
$t_k$ is the $k$-th entry of $E(T)$.  
Consider the tree $T$ given in Fig.~\ref{fig:ET}(a) and its Euler string $E(T)$ given in Fig.~\ref{fig:ET}(c), for 
$i = 7$, $\ell = 2$, $\rm{in}_{2,7} = 2$ as there are two inward edges 
$t_4, t_5$ and $\rm{out}_{2,7} = 2$ as there are two outward edges  
$t_3, t_6$ (see Fig.~\ref{fig:inofout}(a)).
So in this case $\rm{in}_{2,7} = \rm{out}_{2,7}$. 
Similarly, for $i = 9, \ell = 3$, $\rm{in}_{3,9} = 3 > 2 = \rm{out}_{3,9}$ (see Fig.~\ref{fig:inofout}(b)), for $i = 9, \ell = 5$, $\rm{in}_{5,9} =1 < 2 = \rm{out}_{5,9}$ (see Fig.~\ref{fig:inofout}(c)), and for $i = 7, \ell = 4$, $\rm{in}_{4,7} =1 = \rm{out}_{4,7}$ (see Fig.~\ref{fig:inofout}(d)).
\begin{figure}[t!] 
	\centering
	\includegraphics[scale = 0.6]{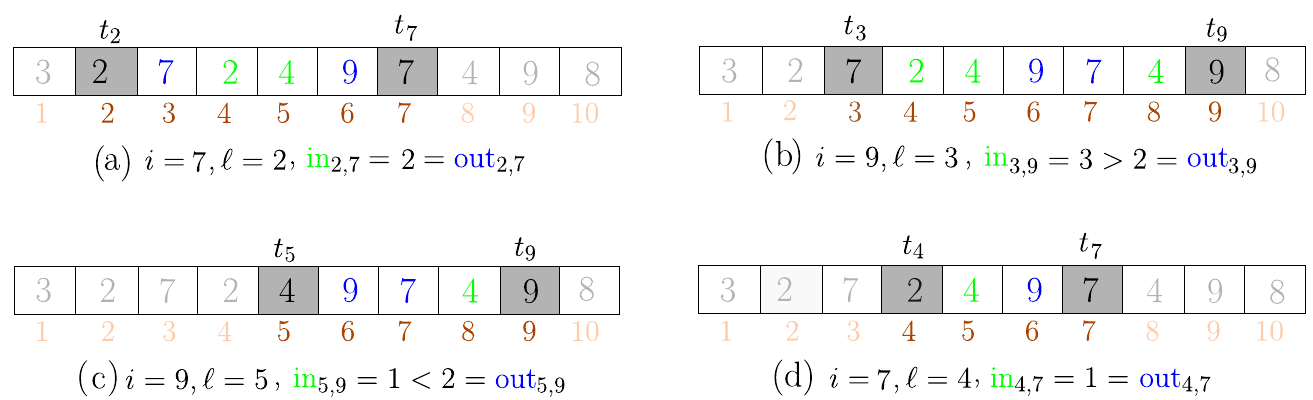}
	\caption{(a)-(c)An illustration of the number of inward edges (green) (resp., outward edges (blue)) for the edges $t_i, i= 7, 9$  and $t_{\ell}, \ell = 2, 3, 5$ which are depicted in gray boxes; 
	(d) An example in which Proposition~\ref{pro:e-bar1} holds.  
	}\label{fig:inofout}
\end{figure}	
\pagebreak
\begin{proposition}
\label{pro:e-bar1}
Let $E(T) = t_1, t_2, \ldots, t_{2n}$ denote the Euler string of a tree $T$ with $n$ edges and  $t_i \in \Sigma =\{1, 2, \ldots, m\}$.
Then $t_i$,  $1 \leq i \leq 2n$,  is an outward edge of $t_{\ell}$ if and only if 
(i)-(iv) hold
\begin{enumerate}
\item[(i)]$ {\ell} \in [1, i-1]$,
\item[(ii)]$ t_i= t_{\ell}+m$,  
\item[(iii)]$\rm{in}_{\ell, i} = \rm{out}_{\ell, i}$, and 
\item[(iv)]${\ell}$ is the largest index that satisfies (i)-(iii).
\end{enumerate}
\end{proposition}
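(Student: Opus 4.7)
The plan is to set up a depth invariant on the Euler string and then apply it in both directions of the equivalence. For $k \in \{0, 1, \ldots, 2n\}$, let $D(k)$ denote the number of inward edges among $t_1, \ldots, t_k$ minus the number of outward edges among $t_1, \ldots, t_k$. A short induction on $k$ that tracks the current vertex visited by the DFS after the first $k$ edges shows that $D(k)$ equals the depth (from the root) of the most recently entered and not-yet-exited vertex; in particular $D(0)=0$, $D(2n)=0$, and $D(k)\geq 1$ for $1\leq k\leq 2n-1$. This invariant lets me rewrite condition (iii) as $D(\ell) = D(i-1)$, because $\text{in}_{\ell,i} - \text{out}_{\ell,i} = D(i-1) - D(\ell)$.

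For the necessity direction, suppose $t_i$ is the outward edge of $t_\ell$, both corresponding to a vertex $v$ of depth $d$. Conditions (i) and (ii) follow immediately from the DFS order and the inward/outward labeling convention of Section~\ref{sec:pre}. For (iii), $D(\ell) = d$ holds because $v$ has just been entered, while $D(i-1) = d$ holds because the subtree rooted at $v$ has just been fully traversed and the DFS is about to leave $v$ via $t_i$; the two are therefore equal. For (iv), any candidate $\ell'$ with $\ell < \ell' \leq i-1$ satisfying (ii) must satisfy $t_{\ell'} = t_i - m \in \{1,\ldots,m\}$, so $t_{\ell'}$ is an inward edge entering some vertex $v'$ inside the traversal of $v$'s subtree. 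Since $v$ has already been entered at position $\ell$, the vertex $v'$ must be a proper descendant of $v$, hence $D(\ell') > d = D(i-1)$, which violates (iii).

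For the sufficiency direction, let $\ell^*$ be the position of the inward edge of the (unique) vertex whose outward edge is $t_i$; this $\ell^*$ is well-defined because every non-root vertex contributes exactly one inward and one outward edge to $E(T)$. By the necessity direction applied to $\ell^*$, the index $\ell^*$ satisfies (i)-(iv); in particular, $\ell^*$ is the largest index satisfying (i)-(iii). Since the hypothesis gives the same property for $\ell$, we conclude $\ell = \ell^*$, and therefore $t_i$ is the outward edge of $t_\ell$.

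The main technical step is the depth invariant itself: the whole argument rests on reading the prefix sums of signed edge counts (inward contributing $+1$, outward contributing $-1$) as the current DFS depth. I would spell out this induction explicitly and refer back to Fig.~\ref{fig:inofout} to make the subtree-balancing picture behind (iii) and (iv) visually transparent. Once the invariant is in place, both directions reduce to short depth comparisons.
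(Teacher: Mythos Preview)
Your argument is correct and rests on the same underlying DFS observation as the paper's one-line proof (``all descendant edges of a given inward edge appear between the inward edge and its outward edge''); you simply make this precise via the depth invariant $D(k)$, which is the standard balanced-parentheses formalization of that sentence. The paper leaves the verification of (iv) and the converse direction entirely implicit, whereas your depth comparison $D(\ell')>D(i-1)$ and the uniqueness-of-maximum trick for sufficiency actually supply those missing steps, so your write-up is strictly more complete. One tiny point worth stating explicitly in the sufficiency direction: condition (ii) forces $t_i=t_\ell+m>m$, hence $t_i$ is indeed an outward edge and your $\ell^*$ is well-defined; you use this silently when you invoke ``the (unique) vertex whose outward edge is $t_i$''.
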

\begin{proof}
We know that the Euler string follows the DFS. 
Therefore the DFS index of all descendant edges of a given inward edge appear between the inward edge and its outward edge from which the result follows.
\end{proof}
\noindent
A demonstration of Proposition~\ref{pro:e-bar1} is given in Example~\ref{ex:desc}. 
\begin{example}\label{ex:desc}
Consider the tree $T$ given in Fig.~\ref{fig:ET}(a) and its Euler string $E(T)$ shown in
Fig.~\ref{fig:ET}(c). 
We want to determine if the edges $t_i$, $i = 4, 7, 9$ in $E(T)$ are outward edges of some inward edges by using Proposition~\ref{pro:e-bar1}. 
We see that Proposition~\ref{pro:e-bar1}(ii) does not hold for $i = 4$ and any 
$\ell \in [1, 3]$, therefore $t_4$ is not an outward edge of any inward edge, which is consistent with the fact.  
For $i = 7$, Proposition~\ref{pro:e-bar1}(ii) and~(iii) are satisfied for both $\ell = 2, 4$.  
For $\ell = 2$ (resp., $\ell = 4$), $\rm{in}_{2,7} = 2=\rm{out}_{2,7}$ (resp., $\rm{in}_{4,7} =1= \rm{out}_{4,7}$) (see Fig.~\ref{fig:inofout}(a)). That is $t_2$ and $t_4$ are both candidate inward edges of $t_7$. By using Proposition~\ref{pro:e-bar1}(iv), $t_7$ is the outward edge of $t_4$(see Fig.~\ref{fig:inofout}(d)). 
For $i=9$, we see that $t_5$ and $t_8$ are the only edges that satisfy 
Proposition~\ref{pro:e-bar1}(i)-(ii), but $t_5$ does not satisfy the Proposition~\ref{pro:e-bar1}(iii). Therefore 
$t_8$ is the inward edge of $t_9$ implying that $t_9$ is an outward edge.
\end{example} 
%
The existence of a ReLU network to identify the 
positions and labels of outward edges in an Euler string based on   
Proposition~\ref{pro:e-bar1} is discussed in Lemma~\ref{thm:ebar4}. 
\begin{lemma}
\label{thm:ebar4}
Let $T$ be a tree with $n$ edges, 
and $ x = x_1, x_2, \ldots, x_{d}$ be a random sequence of integers 
over the interval $[0, n]$.  
Then there exists a ReLU network with size 
$\mathcal{O}(dn^2)$ and constant depth that can identify the position and label
of outward edge of each non-zero input $x_j$ in the Euler string $E(T)$. 
\end{lemma}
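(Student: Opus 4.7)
The plan is to combine Lemma~\ref{thm:inward} with the characterization in Proposition~\ref{pro:e-bar1} inside a single constant-depth ReLU circuit. First I would invoke Lemma~\ref{thm:inward} as a subnetwork of size $\mathcal{O}(dn)$ and reuse two of its intermediate outputs: the indicator $q_{j\ell}$, which equals $1$ exactly at the position $\ell_j$ of the inward edge of the vertex with DFS index $x_j$, and the binary vector $p_k$ indicating which entries of $E(T)$ are inward edges. These feed directly into the outward-edge tests.

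Second, I would implement conditions~(i)-(iv) of Proposition~\ref{pro:e-bar1} as a parallel check on every pair $(\ell, i)$ with $1 \leq \ell < i \leq 2n$. A prefix-sum layer over $p_k$ of size $\mathcal{O}(n^2)$ turns $\mathrm{in}_{\ell i}$ and $\mathrm{out}_{\ell i}$ into constant-time linear combinations, after which each pair admits a constant-size indicator $h_{\ell i}$ of (i)-(iii) built from two $\delta$-tests (one for the label-matching condition $t_i = t_\ell + m$ and one for $\mathrm{in}_{\ell i}=\mathrm{out}_{\ell i}$). Condition~(iv), the maximality of $\ell$, is realized in parallel by noting that ``$\ell$ is the largest index satisfying (i)-(iii) for the given $i$'' is equivalent to ``$h_{\ell' i}=0$ for every $\ell' > \ell$''. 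I would therefore set $\bar h_{\ell i} = h_{\ell i} \cdot \bigl(1 - \max_{\ell' > \ell} h_{\ell' i}\bigr)$; the gadgets $\delta$, $\max$, and binary product are all ReLU-simulable in constant depth by Proposition~1 of Ghafoor and Akutsu~\cite{MT2024}.

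Third, I would couple the two blocks by forming $\bar q_{ji} = \sum_{\ell} q_{j\ell} \cdot \bar h_{\ell i}$, which equals $1$ exactly at the unique position $i$ that is the outward-edge position of $x_j$; the required label is then $t_i \cdot \bar q_{ji}$ and the required position is $i \cdot \bar q_{ji}$. The dominant cost is the triple-indexed step implementing condition~(iv): evaluating $\max_{\ell' > \ell_j} h_{\ell' i}$ for every $(j,i)$ uses $\mathcal{O}(n)$ edges per pair, yielding $\mathcal{O}(dn^2)$ in total. The prefix sums, the per-pair $h_{\ell i}$, and the final coupling contribute only $\mathcal{O}(n^2)$, $\mathcal{O}(n^2)$, and $\mathcal{O}(dn^2)$, respectively. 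Since every stage uses a bounded number of layers, the depth remains constant.

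The main obstacle is condition~(iv), since the phrase ``largest $\ell$'' sounds inherently sequential. The workaround is to turn it into the parallel clause ``$h_{\ell' i}=0$ for every $\ell' > \ell$'', which is just a $\max$ over $\mathcal{O}(n)$ ReLU-simulable booleans per $(j,i)$ pair and therefore fits inside constant depth. With the prefix-sum reduction absorbing the per-pair linear cost of the in/out counts, the total size settles at $\mathcal{O}(dn^2)$ as claimed.
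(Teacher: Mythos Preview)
Your approach is essentially the paper's: both compute a per-pair indicator of conditions (i)--(iii) (your $h_{\ell i}$, the paper's $v'_{\ell i}$), enforce maximality~(iv) by subtracting or maxing over larger indices (your $\bar h_{\ell i}$, the paper's $w_{\ell i}$), and then couple with $j$ through a triple-indexed product (your $q_{j\ell}\cdot\bar h_{\ell i}$, the paper's $w'_{j\ell i}$). One minor slip in the accounting: $\bar h_{\ell i}$ as you define it is double-indexed and costs $\mathcal{O}(n^2)$ nodes, so the $d$ factor actually enters at the coupling step $\bar q_{ji}=\sum_\ell q_{j\ell}\cdot\bar h_{\ell i}$, which needs $\mathcal{O}(dn^2)$ binary products---the total is still $\mathcal{O}(dn^2)$ as claimed.
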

\begin{proof}
Let $E(T) = t_1, t_2, \ldots, t_{2n}$. 
The proof completes by expressing the conditions of Proposition~\ref{pro:e-bar1}
in terms of ReLU activation function. 
Proposition~\ref{pro:e-bar1} requires the labels of the inward edges of each $x_j \neq 0$ which can be computed by using 
Eqs.(\ref{eqp3})-(\ref{eqr'3}) of Lemma~\ref{thm:inward}. 
Then the positions and labels of the 
required outward edges in $E(T)$ can be obtained by using the following system of equations (see Example~\ref{ex:out}), where 
$i, \ell\in \{1, 2, \ldots, 2n\}$, $j \in \{1, 2, \ldots, d\}$ and $C$ is a large number.

\begin{align}
r_i  &= t_i \cdot p_i,  \label{eqr3}\\
s_i &= t_i - r_i, \label{eqs3}\\
%
v_{{\ell}i} &= \begin{cases}
0 &\text{~if~}  i\leq {\ell},\\
\max ( \delta(s_i, r_{\ell} + m) - \\
	~~C ( \sum_{k={\ell}+1}^{i-1} H(s_k-1)  - \\ 
~~\sum_{k={\ell}+1}^{i-1} H(r_{k} -1)), 0 ) & \text{otherwise}, 
\end{cases} \label{eqv3}\\
v'_{{\ell}i}  &= \delta(v_{{\ell}i} , 1), \label{eqv'3}\\
w_{{\ell}i}&= 
\max(v'_{{\ell}i} - \sum_{k={\ell}+1}^{i-1} v'_{ki} , 0), \label{eqw3}\\
w'_{j{\ell}i}&= 
\begin{cases}
0 &\text{~if~}  i\leq {\ell},\\
\max(\delta(s_i, r'_{j \ell} + m) - \sum_{k=1, k \neq {\ell}}^{2n} w_{ki}, 0) &\text{~otherwise}, \label{eqw'3}\\
\end{cases} 
\end{align}
\begin{align}
z_{j}&=i \cdot \sum_{i=1}^{2n} \sum_{{\ell}=1}^{2n} w'_{j{\ell}i}  , \label{eqz3}\\
z'_{ji}&= t_i \cdot \sum_{{\ell}=1}^{2n} w'_{j{\ell}i} . \label{eqz'3}
\end{align}
The non-zero variable $r_i$ (resp., $s_i$) in Eq.~(\ref{eqr3}) (resp., Eq.~(\ref{eqs3})) stores the label of the inward edge (resp., outward edge) at the $i$-th position of $E(T)$.  
Eqs.~(\ref{eqv3}) and~(\ref{eqv'3}) encode 
Proposition~\ref{pro:e-bar1}(ii) and (iii), and 
Eq.~(\ref{eqw3}) encodes Proposition~\ref{pro:e-bar1}(iv). 
In Eq.~(\ref{eqw'3}), $w'_{j\ell i} = 1$  if and only if 
$s_i = r'_{j\ell}+ m$ and $r'_{j\ell}$ is the largest index with this property, 
i.e., all conditions of Proposition~\ref{pro:e-bar1} are satisfied. 
Eq.~(\ref{eqz3}) (resp., Eq.~(\ref{eqz'3})) determines the positions (resp., labels) in $E(T)$ of the desired outward edges. 
These equations involve the maximum function, $\delta$ function and Heaviside function which can be simulated by the ReLU activation function based on \comblue{Proposition~1 by Ghafoor and Akutsu~\cite{MT2024} and Theorem~1 by Kumano and Akutsu~\cite{KA2022}.} 
Therefore we can construct a twelve-layer neural network with ReLU as an activation function with size 
$\mathcal{O}(dn^2)$ and constant depth that can identify the positions and labels of the 
desired outward edges of non-zero $x_j$. 
\end{proof}

A demonstration of Lemma~\ref{thm:ebar4} is given in Example~\ref{ex:out}.
\begin{example}\label{ex:out}
Reconsider the tree $T$ shown in Fig.~(\ref{fig:ET})(a) with 
$E(T) =3, 2, 7, 2, $ $4, 9, 7, 4, 9, 8$, and the random sequence
$x =1, 3, 0 $. 
We wish to identify the positions and labels of the outward edges of 
$x_j \neq0$ in $E(T)$. 
Note that the positions of outward edges are their DFS indices in the directed tree corresponding to $T$ as demonstrated in Figs.~(\ref{fig:ET})(b) and~(c). 
For $x_1 = 1$ (resp., $x_2 = 3$), the positions and labels of the outward edges are 
$10$ and $8$ (resp., $7$ and $7$), resp., whereas $x_3 = 0$ does not correspond to 
any outward edge, and therefore it is ignored. 
We discuss the variables used in Lemma~\ref{thm:ebar4} to get the required positions and labels as follows. 
The variables $p_i, p_i', p_i'', q_{ji}, r_{ji}'$ used in Eqs.~(\ref{eqp3})-(\ref{eqr'3})
are discussed, in detail, in Example~\ref{ex:in} and Fig.~\ref{fig:inward}.
We discuss the variables of Eqs.~(\ref{eqr3})-(\ref{eqz'3}). 
An illustration of these variables is given in Fig.~(\ref{fig:outward}). 
In the rest of the discussion, more than one subscripts are separated by the commas to avoid confusion.
\begin{figure*}[t!]
	\centering
	\includegraphics[scale = 0.6]{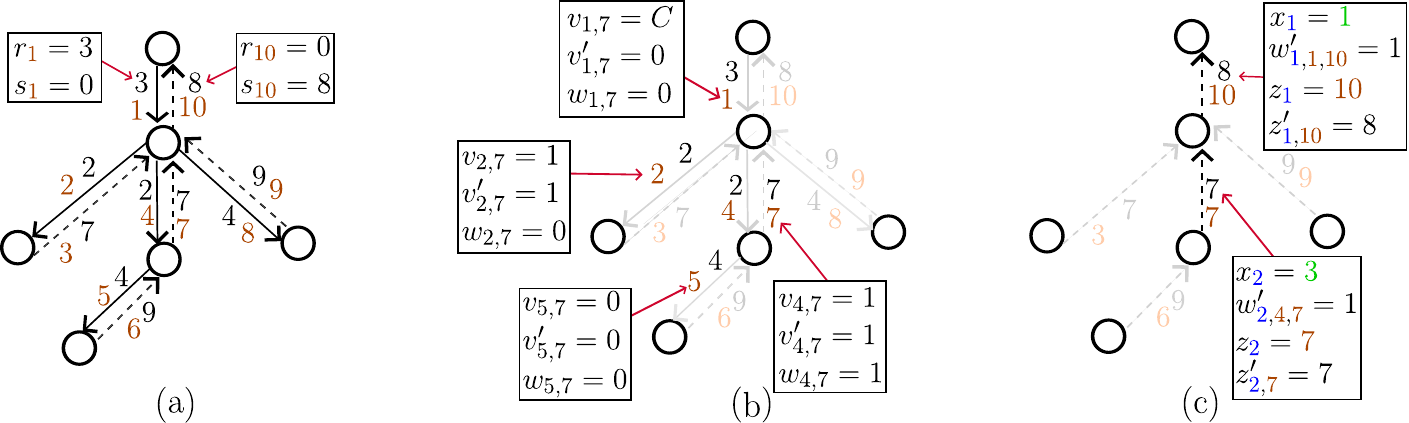}
	\caption{	An illustration of the variables used in Eqs.~(\ref{eqr3})-(\ref{eqz'3}) of Lemma~\ref{thm:ebar4}
	to identify the positions and labels of the desired outward edges. 
	}\label{fig:outward}
\end{figure*}

\begin{longtable}{c p{16cm}} 
\addtocounter{table}{-1}
$x_j$ &Specify the outward edge of $x_j$ of which the position and label are required. 
In this case $x = 1, 3, 0$, which is illustrated in Fig.~\ref{fig:ET}(a). \\   
\multicolumn{2}{l}{$p_i, p_i', p_i'', q_{ji}, r_{ji}'$ are explained in Example~\ref{ex:in}, Table~\ref{tb:inward} and Fig.~\ref{fig:inward}.}\\
$r_i$ &     The label of the inward edge of $i$, if it exists, e.g., 
 in Fig.~\ref{fig:outward}(a), $r_1 = 3$ (resp., $r_{10} = 0$) as there exists (resp., does not exist) an 
	inward edge of $1$ (resp., $10$). The label of the inward edge of $1$ is $3$. 
	The values of these variables are listed in $r = [ 3, 2, 0, 2, 4, 0, 0, 4, 0, 0 ] $.
   \\
 $s_i$   &  The label of the outward edge of $i$, if it exists, e.g.,  
 in Fig.~\ref{fig:outward}(a), $s_{10} = 8$ (resp., $s_1 = 0$) as there exists (resp., does not exist)
	an outward edge of $10$ (resp.,  $1$). The label of the outward edge of $10$ is $8$.  Similarly, we get
	 $s = [ 0, 0, 7, 0, 0, 9, 7, 0, 9, 8 ]$.\\   
$v_{\ell i}$ &  A variable which can take a value from $\{1, C, 0\}$: 
$v_{\ell i} = 1$ if the conditions of Proposition~\ref{pro:e-bar1}(ii) and (iii) are satisfied, e.g., $i = 7, \ell = 2, 4$, 
$\rm{in}_{\ell i} = \rm{out}_{\ell i}$ (see Example~\ref{ex:desc} and Fig.~\ref{fig:inofout}(a));
 $v_{\ell i} = C$ if Proposition~\ref{pro:e-bar1}(iii) is violated with the number of number of inward edges greater than the number of outward edges, i.e., 
 $\rm{in}_{\ell i} > \rm{out}_{\ell i}$ holds, e.g., 
 $v_{3, 9} = C$ as $\rm{in}_{3, 9} > \rm{out}_{3, 9}$ (see Fig.~\ref{fig:inofout}(b)); 
 $v_{\ell i} = 0$ if Proposition~\ref{pro:e-bar1}(iii) is violated with the number of inward edges less than that of outward edges, i.e.,
 $\rm{in}_{\ell i} < \rm{out}_{\ell i}$ holds, e.g., 
 $v_{5, 9} = 0$ as  $\rm{in}_{5, 9} < \rm{out}_{5, 9}$ (see Fig.~\ref{fig:inofout}(c)).
 The values $v_{2, 7} = v_{4,7} = 1$, $v_{5, 7} = 0$ and $v_{1, 7} = C$ are illustrated in Fig.~\ref{fig:outward}(b). 
	Thus we have,
	  {$v_{1,10}= v_{2,3}= v_{2,7}= v_{4,7}= v_{5,6}=v_{8,9}=1$,   $
v_{1,3}=v_{1,5}=v_{1,6}=v_{1,7}=v_{1,9}=v_{2,6}= v_{3,5}=v_{3,6}=v_{3,7}=v_{3,9}=v_{4,6}=v_{7,9}=C$},  and other variables are zero.\\
$v'_{\ell i}$ &  Replacing $C$ with $0$ in $v_{\ell i}$, e.g., 
$v_{2,6} = C$, and therefore $v'_{2,6} = 0$ (see Fig.~\ref{fig:outward}(b)). 
Thus $v'_{1,10}= v'_{2,3}= v'_{2,7}= v'_{4,7}= v'_{5,6}=v'_{8,9}=1$, and other variables are zero.  \\
$w_{\ell i}$  & A binary variable which is one when $\ell$ is the largest number that satisfies Proposition~\ref{pro:e-bar1}(i)-(iii), i.e., Proposition~\ref{pro:e-bar1}(iv)is satisfied, e.g., 
$v'_{2,7} = v'_{4,7} = 1$ are the only non-zero variables for $i = 7$. 
Since $4$ is largest among such variables, we have $w_{4,7} = 1$ (see Fig.~\ref{fig:outward}(b)).  
Similarly, $w_{1,10}=w_{2,3}=w_{4,7}=w_{5,6}=w_{8,9}=1$,  
and other variables are zero.    \\  
$w'_{j\ell i}$ & A binary variable to identify the desired outward edges corresponding to $x_j \neq 0$. More precisely $w'_{j\ell i}$ is one when $t_i$ is the outward edge of the inward edge $t_{\ell}$ (Proposition~\ref{pro:e-bar1} is satisfied), and $t_{\ell}$ is the inward edge of $x_j$, e.g., 
$w_{2,4,7} = 1$ because $t_7$ is the outward edge of the inward edge $t_4$ which corresponds to 
 $x_2 = 3$  (see Fig.~\ref{fig:inward}(c)). 
In Fig.~\ref{fig:outward}(c) the outward edges that have non-zero (resp., zero) value of $w'_{j\ell i}$ are depicted by dark (resp., gray) edges. Thus, 
   $w'_{1,1,10}=w'_{2,4,7}=1$, and  
other variables are zero.\\
$z_j$  & Position $i$ of the outward edge of $x_j \neq 0$. 
In Fig.~\ref{fig:outward}(c), $z_1=10$ and $z_2=7$ means that the position of the outward edges of $x_1$ (resp., $x_2$) is $10$ (resp., $7$). Thus 
  $z =[10, 7, 0]$. \\
$z'_{ji}$ &  Label of the outward edge of $x_j \neq 0$ which is at the position $i$. 
In this case 
  $z'_{1,10}= 8, z'_{2,7}= 7$,   
and other variables are zero  
(see Fig.~\ref{fig:outward}(c)).
\end{longtable}
\end{example}
\section{TS$_d$-generative ReLU}\label{sec:ESGR}
Let $T$ be a tree with $n+1$ nodes and labels from 
$\Sigma = \{1, 2, \ldots, m\}$, Euler string 
$E(T)=t_1, t_2, \ldots, t_{2n}$, and a non-negative integer $d$. 
We define the {\em TS$_d$-generative ReLU} to be a 
ReLU neural network with $2d$ input nodes 
$ x = x_1, x_2, \ldots, x_{2d}$ over $\{0, \ldots, n\}$, 
and $2n$ output nodes 
$u = u_1, u_2, \ldots, u_{2n}$ over $\Sigma$ such that all Euler strings 
$u$ of trees with the tree edit distance at most $2d$ 
from $E(T)$ can be obtained by the substitution of appropriate $x_{j+d}$ and $x_{j+d} + m$ at 
the inward edge and outward edge of $x_{j} \neq 0$, respectively. 
In this context $x_1, \ldots, x_d$ represents the inward edges for the substitution operations, while $x_{d+1}, \ldots, x_{2d}$ denotes the values to be substituted during these operations. The generated Euler strings have distance at most $2d$ as substitution at root and repeated terms in $x$ will be ignored. 
An illustration of such a network is given in Fig.~\ref{fig:sub_N}, 
where $m = 5$, $d = 3$, $x=1, 3, 0, 5, 1, 2$,
means that the labels of the inward edges (resp., outward edges) of $x_1 = 1, x_2 = 3$ are substituted by $x_{4} = 5, x_{5} = 1$ (resp., $10, 6$). 
Since the entry $x_3 = 0$ and $x_6=2$ corresponds to the root and its label, resp., and so they are ignored.
\begin{figure}[h!]
	\centering
	\includegraphics[scale = 0.6]{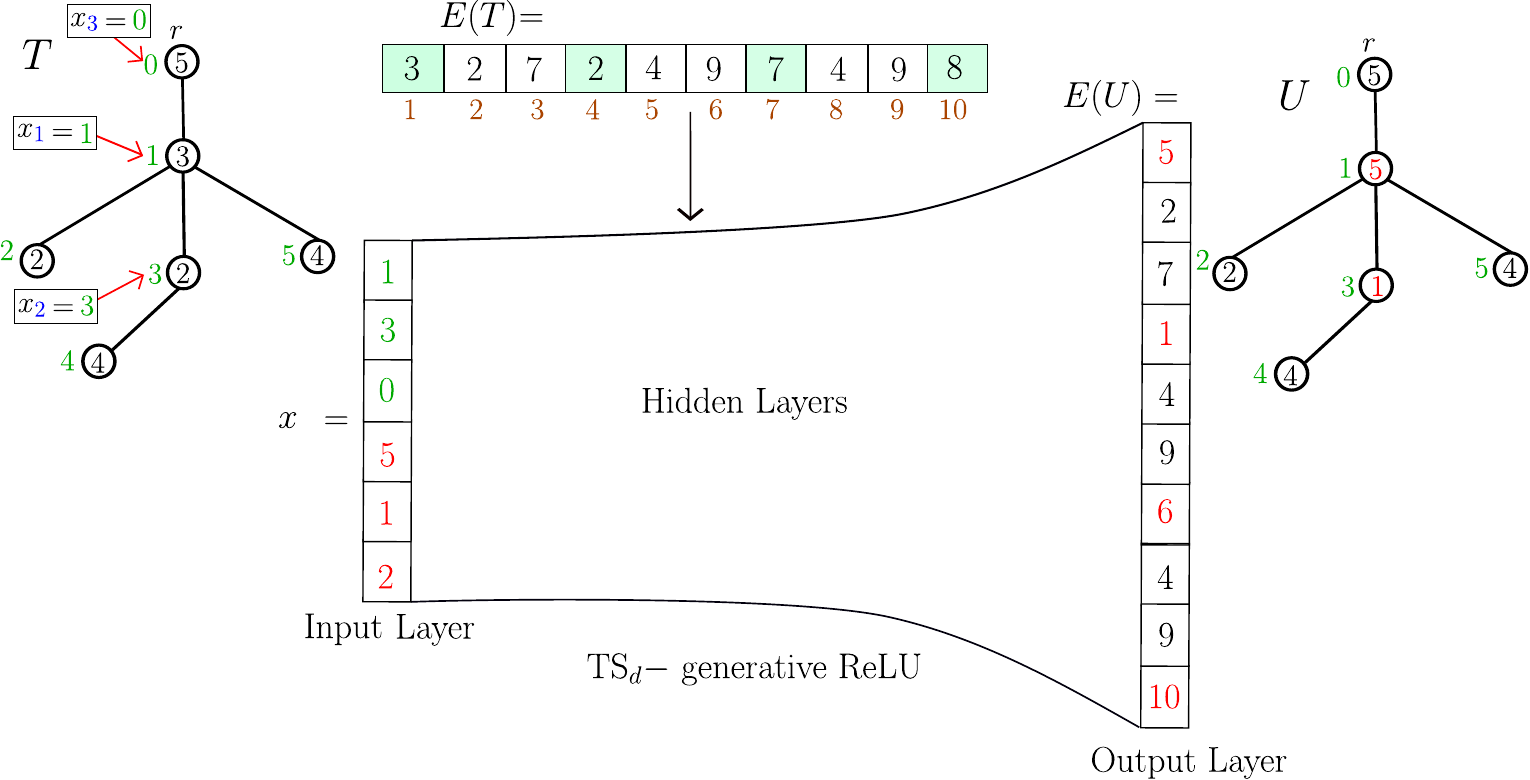}
	\caption{	An illustration of a TS$_d$-generative ReLU with the input layer 
$x=1, 3, 0, 5, 1, 2$, $E(T)=3, 2, 7, 2, 4, 9, 7, 4, 9, 8$ and output layer $u= E(U)=5, 2, 7, 1, 4, 9, 6, 4, 9, 10$. 
The substitution operations on $E(T)$ and $E(U)$ are depicted with green boxes and red values, respectively.
	}\label{fig:sub_N}
\end{figure}
\noindent
 
 \comblue{The existence of TS$_d$-generative ReLU network is discussed in Theorem~\ref{thm:Esnn}.}
\begin{theorem}
\label{thm:Esnn}
For a rooted ordered tree $T$ of size $n+1$ with a label set
$\Sigma = \{1, 2, \ldots, m\}$, and a non-negative integer $d$, 
there exists a TS$_d$-generative ReLU network with size 
$\mathcal{O}(dn^2)$ and constant depth.
\end{theorem}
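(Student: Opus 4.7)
The plan is to reuse the subnetworks constructed in Lemmas~\ref{thm:inward} and~\ref{thm:ebar4}, which already locate the positions of the inward and outward edges corresponding to each non-zero index in the input. Once those positions are marked by binary indicators, a tree substitution is completely captured by overwriting exactly two entries of the Euler string $E(T)$: the inward-edge entry is replaced by $x_{d+j}$ and the outward-edge entry by $x_{d+j}+m$. The task therefore reduces to assembling the output string $u_1,\ldots,u_{2n}$ position-by-position from $E(T)$, the inputs $x$, and these indicators.

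Concretely, I would first apply Lemma~\ref{thm:inward} to the inputs $x_1,\ldots,x_d$ to obtain the indicators $q_{ji}$ of Eq.~(\ref{eqq3}), at cost $\mathcal{O}(dn)$ in constant depth. Then I would apply Lemma~\ref{thm:ebar4} to obtain the outward indicators $w'_{j\ell i}$ of Eq.~(\ref{eqw'3}) at cost $\mathcal{O}(dn^2)$, and aggregate them into
\[
\hat q_{ji} \;:=\; \sum_{\ell=1}^{2n} w'_{j\ell i},
\]
which, by Proposition~\ref{pro:e-bar1}(iv), is the 0/1 indicator that position $i$ is the outward edge of the vertex addressed by $x_j$. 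Under the convention from Section~\ref{sec:pre} that repeated non-zero entries in $x_1,\ldots,x_d$ are ignored, the sums $\sum_j q_{ji}$ and $\sum_j \hat q_{ji}$ take values in $\{0,1\}$ and are never both non-zero at the same $i$.

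Each output entry can then be written as
\[
u_i \;=\; t_i \cdot \Bigl(1 - \sum_{j=1}^{d} q_{ji} - \sum_{j=1}^{d}\hat q_{ji}\Bigr) \;+\; \sum_{j=1}^{d} x_{d+j}\,q_{ji} \;+\; \sum_{j=1}^{d}(x_{d+j}+m)\,\hat q_{ji}.
\]
Since $t_i$ is a fixed constant from $E(T)$, the first term is realized by a single ReLU with constant weight applied to $\max(1-\sum_j q_{ji}-\sum_j\hat q_{ji},0)$. The only nontrivial pieces are the products $x_{d+j}\,q_{ji}$ and $x_{d+j}\,\hat q_{ji}$; each is the product of a bounded non-negative input with a $\{0,1\}$ value, so it can be simulated by the masked-copy gadget $\max(x_{d+j}-B(1-q_{ji}),0)$ with $B$ a sufficiently large constant, using Proposition~1 of Ghafoor and Akutsu~\cite{MT2024} to realize it in constant depth. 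Each $u_i$ then costs $\mathcal{O}(d)$ neurons in constant depth, so the assembly layer contributes $\mathcal{O}(dn)$ overall.

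The main obstacle is largely bookkeeping: I need to verify that the correctness invariant $\sum_j q_{ji} + \sum_j \hat q_{ji} \leq 1$ holds at every $i$, which follows from the uniqueness of the DFS index of each vertex, Proposition~\ref{pro:e-bar1}(iv), and the convention that repeated non-zero $x_j$'s are discarded; this guarantees that the formula for $u_i$ is well-defined and reproduces exactly the tree-substitution semantics described in Section~\ref{sec:pre}. Once this is settled, the dominant cost is the outward-edge identification of Lemma~\ref{thm:ebar4}, giving the claimed total size $\mathcal{O}(dn^2)$ and constant depth.
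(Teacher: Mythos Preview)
Your approach is essentially the paper's: locate the inward/outward positions via Lemmas~\ref{thm:inward} and~\ref{thm:ebar4}, mask out the corresponding entries of $E(T)$, and splice in $x_{d+j}$ and $x_{d+j}+m$ using the $\max(\cdot - B(1-\text{indicator}),0)$ gadget. The paper organizes the same computation through Eqs.~(\ref{eqx'1})--(\ref{equ1}), and its $P'_i$, $Q_{ji}$, $Q'_{ji}$ are exactly your three summands.

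There is one concrete gap. You rely on the ``convention from Section~\ref{sec:pre}'' that repeated non-zero entries among $x_1,\ldots,x_d$ are ignored to conclude that $\sum_j q_{ji}+\sum_j \hat q_{ji}\le 1$. That convention is a specification of what the network \emph{should output} when it sees repetitions; it is not a promise that the input sequence is repetition-free. If $x_1=x_2\neq 0$, the raw indicators from Lemma~\ref{thm:inward} give $q_{1i}=q_{2i}=1$ at the same position $i$, your invariant fails, and the formula for $u_i$ produces garbage (e.g.\ a negative first term and a doubled substitution term). The paper handles this explicitly as its Step~1, Eq.~(\ref{eqx'1}): a constant-depth, $\mathcal{O}(d^2)$-size preprocessing layer that zeroes out each $x_j$ that equals some earlier $x_k$. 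Once you insert this step before invoking the lemmas, your invariant holds and the rest of your argument goes through unchanged within the $\mathcal{O}(dn^2)$ budget.
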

\noindent
A proof and an explanation of each variable of Theorem~\ref{thm:Esnn} is given in 
Example~\ref{exa:Sub} in Appendix~\ref{sec:app}.
\section{TD$_d$, TI$_d$-generative ReLU}\label{sec:EdGR}
Let $T$ be a tree with $n+1$ nodes, labels from 
$\Sigma = \{1, 2, \ldots, m\}$ and Euler string 
$E(T)=t_1, t_2, \ldots, t_{2n}$, and a non-negative integer $d$. 
We define the {\em TD$_d$-generative ReLU} to be a 
ReLU neural network 
that can generate all Euler strings over $\Sigma$ with the tree edit distance at most $2d$ from $E(T)$ by deleting from $E(T)$, the appropriate inward and outward edges of 
$x = x_1, x_2,  \ldots, x_d$, over $\{0, \ldots, n\}$. 
The generated Euler strings have distance at most $2d$ as deletion of root and repeated terms in $x$ will be ignored. 
Since the network does not always delete exactly $2d$ elements from the input Euler string, while a neural network must have a fixed number of nodes in each layer, we pad $E(T)$ with $2d$ number of $B$s where $B \gg m$, to ensure a fixed output dimension. To delete a vertex, both its inward and outward edges must be deleted.   Therefore, for each $x_j \neq 0$, the network deletes two elements of the input Euler string corresponding to the inward and outward edges of $x_j$, and for each $x_j = 0$, it deletes two padding symbols $B$.
We call such a string a {\em padded Euler string
We call such a string a {\em padded Euler string}. 
In this setting, we fix $2n$ output nodes $y = y_1, \ldots, y_{2n}$ of the network, where 
$y_1, \ldots, y_{2n-2d'}$, $d' \leq d $ is the Euler string of some tree $U$ with the tree edit distance 
$2d$ from $E(T)$ by deleting $d'$ inward and outward edges of non-zero entries of $x$, and the remaining $2d'$ entries $y_{2n-2d'+1}, \ldots, y_{2n}$ are $B$s. 
An illustration of a TD$_d$-generative ReLU is given in Fig.~\ref{fig:Del_N}, 
where $m = 5$, $d = 3$, $x=1, 3, 0$, and $T$ is given in Fig.~\ref{fig:ET} with $E(T) = 3, 2, 7, 2, 4, 9, 7, 4, 9, 8$, and the padded $E(T)$ with six $B$s. 
The network will delete the inward and outward edges of $1$ and $3$ as depicted in the figure, and delete two $B$s corresponding to $0$. 
The resultant string is $y=2, 7, 4, 9, 4, 9, B, B, B, B$, and by removing 
$B$s from $y$ we get the desired Euler string $E(U)=2, 7, 4, 9, 4, 9$ of the tree $U$ as shown in the Fig.~\ref{fig:Del_N}.

\begin{figure}[h!]
	\centering
	\includegraphics[scale = 0.6]{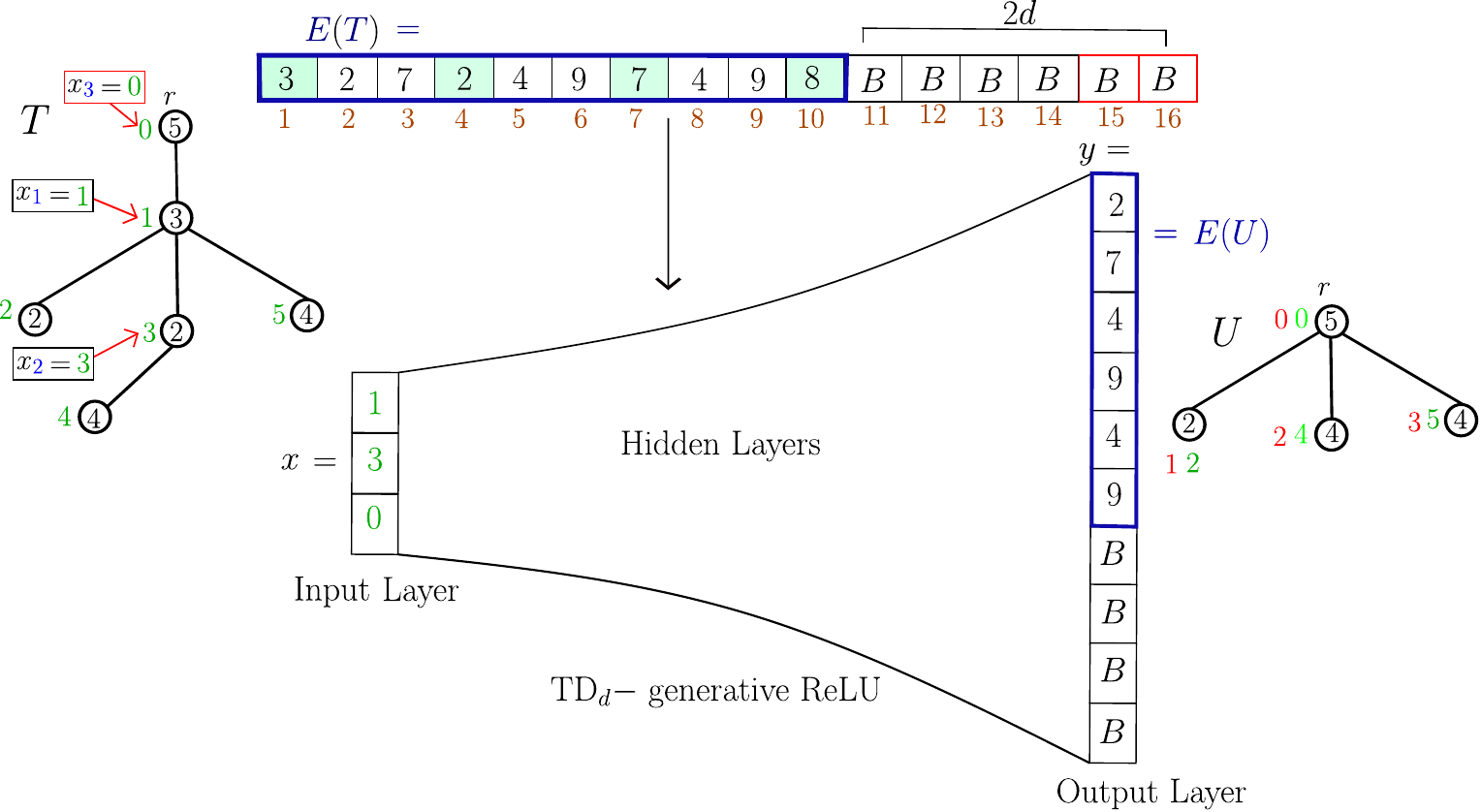}
	\caption{An illustration of a TD$_d$-generative ReLU with the input layer 
$x=1, 3, 0$, padded Euler string $3, 2, 7, 2, 4, 9, 7, 4, 9, 8, B, B, B, B, B, B$ and the output layer $2, 7, 4, 9, 4, 9, B, B, B, B$. 
By deleting $B$s we can get the resultant string $E(U)=2, 7, 4, 9, 4, 9$ obtained by deleting the inward and outward edges of $x = 1, 3$. 
	}\label{fig:Del_N}
\end{figure}
\noindent
}
\comblue{The existence of TD$_d$-generative ReLU network is discussed in Theorem~\ref{thm:Ednn}.}

\begin{theorem}
\label{thm:Ednn}
For a rooted ordered tree $T$ of size $n+1$ with nodes from 
$\Sigma = \{1, 2, \ldots, m\}$, and a non-negative integer $d$, 
there exists a TD$_d$-generative ReLU network with size 
$\mathcal{O}(n^2)$ and constant depth.
\end{theorem}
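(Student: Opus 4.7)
The plan is to reduce the TD$_d$ task to (i) marking exactly $2d$ positions of the padded Euler string for deletion and (ii) compacting the surviving entries into the first $2n$ output slots. Unlike the substitution case, a deletion need not record \emph{which} particular $x_j$ marks a given position, only whether that position is marked; the per-$j$ indicators can therefore be collapsed by summation, and this is what brings the size down from $\mathcal{O}(dn^2)$ to $\mathcal{O}(n^2)$.

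First, I would apply Lemma~\ref{thm:inward} to obtain the inward-edge indicators $q_{ji}$ and collapse them into a single inward deletion mask
\[
D_i^{\rm in}=\min\!\Bigl(\sum_{j=1}^{d}q_{ji},\,1\Bigr),
\]
implemented as a ReLU clipping gadget. For the outward edges I would not invoke Lemma~\ref{thm:ebar4} once per $j$; instead I would compute only the $j$-independent matching variables $w_{\ell i}$ from Eq.~(\ref{eqw3}), which identify, for each outward position $i$, the unique $\ell$ paired with $i$ by Proposition~\ref{pro:e-bar1}, and then set
\[
D_i^{\rm out}=\sum_{\ell=1}^{2n}w_{\ell i}\cdot D_\ell^{\rm in}.
\]
The matching costs $\mathcal{O}(n^2)$ units (the prefix-sum test $\mathrm{in}_{\ell i}=\mathrm{out}_{\ell i}$ is evaluated by differences of precomputed cumulative sums, $O(1)$ per pair), and the two masks together add another $\mathcal{O}(n^2)$.

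Next I would handle zeros and repeats of $x$. Let $a_j\in\{0,1\}$ equal $1$ iff $x_j\neq 0$ and $x_j\neq x_{j'}$ for every $j'<j$; this is computable by $\mathcal{O}(d^2)$ $\delta$-gadgets. The shortfall $2(d-\sum_j a_j)$ is precisely the number of trailing $B$'s that must also be marked for deletion so that the total deletion count is always $2d$. I would mark the leftmost such $B$-positions using a Heaviside threshold on the padding index, contributing $\mathcal{O}(d)$ units. The result is a full mask $D_k$, $k=1,\dots,2n+2d$, with $\sum_k D_k=2d$ guaranteed for every input.

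Finally, I would compact. Using a single linear layer I would compute the rank $S_k=\sum_{k'\le k}(1-D_{k'})$ and set
\[
y_i=\sum_{k=1}^{2n+2d}(\text{padded entry at }k)\cdot \delta(S_k,i)\cdot(1-D_k)
\]
for $i=1,\dots,2n$. Because every non-deleted real entry lies to the left of every non-deleted $B$ in the padded string, this compaction places $E(U)$ into the leading slots and the $B$'s into the trailing ones, as required by the specification. The routing step is the main implementation obstacle: each of the $2n$ output nodes reads $\Theta(n)$ padded positions via $\delta$- and bounded-multiplication gadgets, yielding $\mathcal{O}(n)$ units per output and $\mathcal{O}(n^2)$ in total. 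Since every gadget (max, $\delta$, $H$) is already available from Lemma~\ref{thm:inward}, Lemma~\ref{thm:ebar4}, and Proposition~1 of~\cite{MT2024}, the resulting network has size $\mathcal{O}(n^2)$ and constant depth, as claimed.
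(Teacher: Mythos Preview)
Your proposal is correct and follows essentially the paper's approach: collapse the per-$j$ inward indicators into a single mask (the paper's $q_i=\sum_j q_{ji}$), pair outward positions through the $j$-independent $w_{\ell i}$, and compact the survivors via a rank computation. The paper's proof differs only in minor bookkeeping---it simply truncates the compacted sequence to length $2n$ rather than explicitly marking surplus $B$'s for deletion, and its shift-indexed routing ${R'}^{j}_{i}$ with $j\le 2d{+}1$ uses $\mathcal{O}(nd)$ units instead of your full $\mathcal{O}(n^2)$ $\delta$-routing---but both fit within the stated bound.
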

\noindent
A proof and an explanation of each variable of Theorem~\ref{thm:Ednn} is given in 
Example~\ref{exa:Del} in Appendix~\ref{sec:app}.

We define {TI$_d$-generative ReLU} as follows.
Let $T$ be a tree with $n+1$ nodes and labels from 
$\Sigma = \{1, 2, \ldots, m\}$, Euler string 
$E(T)$, and a non-negative integer $d$. 
We define the {\em TI$_d$-generative ReLU} to be a 
ReLU neural network with $4d$ input nodes 
$ x = x_1, x_2, \ldots, x_{4d}$ over $\{0, \ldots, n\}$, 
and $2n + 2d$ output nodes 
$u = u_1, u_2, \ldots, u_{2n +2d}$ over $\Sigma$ such that all Euler strings 
$u$ of trees with the tree edit distance exactly $2d$ 
from $E(T)$ can be obtained by 
the insertion of appropriate child nodes $x'_{j}$ of $x_j$ with 
inward edges and outward edges of labels 
$x_{j+3d}$ and $x_{j+3d} + m$, resp., of $x'_{j}$,
and setting the appropriate ($x_{j+d}$, $x_{j+d}+1$, $x_{j+d}+2$, \ldots, $x_{j+2d}$)-th
children of the node $x_j$ as the children of $x'_{j}$. It means $x'_{j}$ becomes the parent of a sequence of children that starts from $x_{j+d}$ and ends at $x_{j+2d}$. 
In this study, we do not consider inserting new nodes as children of a newly inserted node.
In this context, $x_1, \ldots, x_d$ represent the nodes for the insertion operations,  $x_{1+d}, \ldots, x_{d+d}$ and 
$x_{1+2d}, \ldots, x_{d+2d}$ represent the lower and upper bounds 
to determine the subsequences of children that will be set as the children of the inserted nodes, and 
$x_{1+3d}, \ldots, x_{d+3d}$ represents the labels to be inserted.
For convenience, we denote  
$x_1, \ldots, x_{d}$ , 
$x_{1+d}, \ldots, x_{2d}$, 
$x_{1+2d}, \ldots, x_{3d}$, and 
$x_{1+3d}, \ldots, x_{4d}$
by 
$x^{1}$, $x^{2}$, $x^{3}$, and $x^{4}$, respectively. 
Note that lower and upper bounds on the number of children must not exceed the total number of children of a node $x_j^1$. Let $D(x_j^1)$ denote the total number of children of a node $x_j^1$ then we require $x_j^2, x_j^3 \leq D(x_j^1)$. Due to the random nature of $x$, some of the lower and upper bounds of children may not be valid, and thus need to be refined to perform appropriate insertion operations.  
Such invalid bounds, their refinements and appropriate insertions are listed in Table~\ref{tab:reset}, where $D(x_j^1)$ denotes the number of children of the node 
$x_j^1$. 
\begin{figure}[h!]
	\centering
	\includegraphics[scale = 0.6]{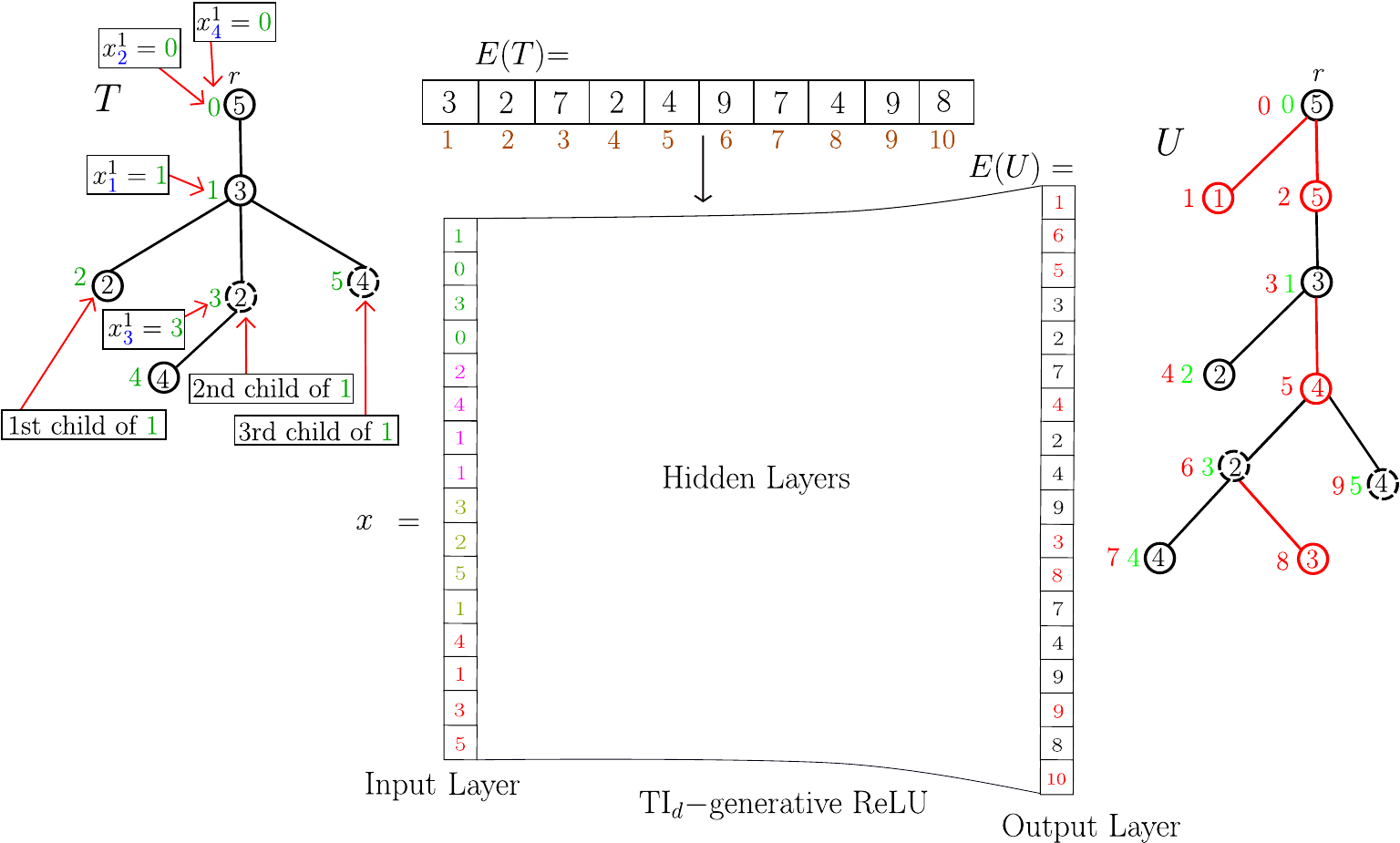}
	\caption{	An illustration of a TI$_d$-generative ReLU with the input layer 
$x=1, 0, 3, 0, 2, 4, 1, 1, 3, 2, 5, 1, 4, 1, 3, 5$, $E(T)=3, 2, 7, 2, 4, 9, 7, 4, 9, 8$ and output layer $u= E(U)=1, 6, 5, 3, 2, 7, 4, 2, 4, 9, 3, 8, 7, 4, 9, 9, 8, 10$. 
The insertions are depicted in red in $U$. 
	}\label{fig:in_N}
\end{figure}
\begin{table*}[h!]
\setlength{\tabcolsep}{0pt}
\centering
\caption{ Invalid bounds of children, their refinements and appropriate insertions.}
\begin{tabular}{|c|c|c|c|}\hline
S. no. & Invalid bounds & Refinements & Insertions\\\hline
(i)& $D(x_j^1) < x_j^2$ &  $x_j^2:=0$ & Insert a leaf before the first child of $x_j^1$\\
(ii)& $D(x_j^1) < x_j^3$ &  $x_j^3:=0$ & Insert a leaf after the $x_j^2$-th child of $x_j^1$\\
(iii)& $x_j^2 > x_j^3$ &  $x_j^3:=0$ & Insert a leaf after the $x_j^2$-th child of $x_j^1$\\
(iv)& $x_j^1=  x_k^1, j< k, x_j^3 > x_k^2$ &  $x_j^3:=0$ & Insert a leaf after the $x_j^2$-th child of $x_j^1$\\
(v)& $x_j^1=  x_k^1, j> k, x_j^2 = x_k^3$ &  $x_j^3:=0$ & Insert a leaf after the $x_j^2$-th child of $x_j^1$\\
(vi)& $x_j^1=  x_k^1, j< k, x_j^2 > x_k^2$ &  $x_j^2:=0$ & Insert a leaf before the first child of $x_j^1$\\
(vii)& $x_j^1=  x_k^1, x_j^2 = x_k^2, x_j^3 > x_j^2$ &  $x_j^2:=0$ & Insert two leaves before the first child of $x_j^1$\\
(viii)& $x_j^2=0$ &  $x_j^3:=0$ & Insert a leaf before the first child of $x_j^1$\\
(ix)& $x_j^2 \neq 0, x_j^3 = 0$&  $x_j^2:=x_j^2+1$ & Insert a leaf after the $x_j^2$-th child of $x_j^1$\\\hline
\end{tabular}
\label{tab:reset}
\end{table*}
An illustration of a TI$_d$-generative ReLU is given in Fig.~\ref{fig:in_N}, 
where $m = 5$, $d = 4$ and $x=$1, 0, 3, 0, 2, 4, 1, 1, 3, 2, 5, 1, 4, 1, 3, 5. 
For convenience, we perform the insertion operations in the ascending order of the values $x^1$, i.e., in this case, the insertion is performed by considering the sequence 
$0, 0, 1, 3, 4, 1, 2, 1, 2, 1, 3, 5, 1, 5, 4, 3$. 
We discuss the insertion process as follows. 
For the node $x_2^1 = 0$, the bounds are $x_2^2 = 4$ and $x_2^3 = 2$, which are invalid as $D(x_2^1) = 1$. Therefore by applying refinements (i) and (ii), we set 
$x_2^2 := 0$ and $x_2^3 := 0$, and thus insert a leaf with label $1$ and DFS index $1$  (see Fig.~\ref{fig:in_N}), inward and outward edges with labels 1 and 6 at 1st and 2nd positions of the resultant Euler string $E(U)$, respectively. 
The bounds for the node $x_4^1 = 0$ are valid, and hence a new node with label $5$ is inserted with index $2$, as shown in Fig.~\ref{fig:in_N}, and insert inward and outward edges 5 and 10 at 3rd and 18th positions of $E(U)$. 
For the node $x_1^1 = 1$, the given lower bound and upper bound for the children are $x_1^2 = 2$ and $x_1^3 = 3$, which are valid as the number $D(x_1^1)$ of children of $x_1^1$ are 3 as depicted in Fig.~\ref{fig:in_N}. 
The 2nd and 3rd children of  $x_1^1$ have the DFS indices $3$ and $5$, respectively. 
Thus a new node ${x'}_1^1$ with label $4$ and index $5$ is inserted by setting the 2nd and 3rd children of  $x_1^1$ as the children of ${x'}_1^1$. The revised indices of the children are $6$ and $9$, resp., as shown in Fig.~\ref{fig:in_N}. 
Inward and outward edges with labels 4 and 9 are inserted at 7th and 16th position of $E(U)$, respectively.  
For the node $x_3^1 = 3$, the bounds are $x_3^2 = 1$ and $x_3^3 = 5$, where the upper bound is invalid as $D(x_3^1) = 1$. 
By applying the refinement (iii), we set $x_3^3 := 0$, and by 
(ix) we have $x_3^2 := 2$, therefore insert a leaf after the first child of  $x_3^1$ with DFS index $8$, and insert inward and outward edges with labels 3 and 8 at the 11th and 12th positions of $E(U)$, respectively.   
The resultant tree $U$ has the Euler string 
$E(U)=1, 6, 5, 3, 2, 7, 4$, $2, 4, 9, 3, 8, 7, 4, 9, 9, 8, 10$. 

 \comblue{The existence of TI$_d$-generative ReLU network is discussed in Theorem~\ref{thm:EInn}.}
\begin{theorem}
\label{thm:EInn}
For a rooted ordered tree $T$ of size $n+1$ with nodes from 
$\Sigma = \{1, 2, \ldots, m\}$, and a non-negative integer $d$, 
there exists a TI$_d$-generative ReLU network with size 
$\mathcal{O}(n^3)$ and constant depth.
\end{theorem}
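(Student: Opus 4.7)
The plan is to extend the constructions of Theorems~\ref{thm:Esnn} and~\ref{thm:Ednn} by realizing each insertion as the insertion of two new symbols into the Euler string $E(T)$ at positions determined from the children of the target vertex $x_j^1$, preceded by a refinement layer that enforces the rules of Table~\ref{tab:reset}. The construction proceeds in four stages: (a) locating the inward and outward edges of each $x_j^1$, (b) enumerating the direct children of $x_j^1$ and their positions in $E(T)$, (c) refining the bounds $x_j^2, x_j^3$ according to Table~\ref{tab:reset}, and (d) assembling the output Euler string of length $2n+2d$.

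For stage~(a), I would apply Lemma~\ref{thm:inward} and Lemma~\ref{thm:ebar4} to each non-zero $x_j^1$, yielding the positions $p_j$ and $q_j$ of its inward and outward edges in $E(T)$; this contributes $\mathcal{O}(dn^2)$ ReLU units. For stage~(b), an index $k$ with $p_j < k < q_j$ marks the inward edge of a direct child of $x_j^1$ iff $t_k$ is an inward edge and $\mathrm{in}_{p_j,k} = \mathrm{out}_{p_j,k}$; this characterization (analogous to Proposition~\ref{pro:e-bar1}) is encoded using Heaviside and $\delta$ gates. Scanning left-to-right and accumulating ones over such positions yields both $D(x_j^1)$ and, for each $s\in\{1,\ldots,n\}$, the position of the $s$-th child's inward edge as well as its matching outward edge. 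This stage uses $\mathcal{O}(dn^2)$ units.

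For stage~(c), each row of Table~\ref{tab:reset} is a conditional update triggered by comparisons among $D(x_j^1), x_j^2, x_j^3$ (rules (i)-(iii), (viii), (ix)) or among pairs of insertions that share the same parent (rules (iv)-(vii)); these are implemented with $\mathcal{O}(d^2)$ additional $\delta$ and Heaviside gates, all simulable by ReLU via Proposition~1 of~\cite{MT2024} and Theorem~1 of~\cite{KA2022}. After refinement, each insertion $j$ has two well-defined insertion positions in $E(T)$: the new inward edge of label $x_j^4$ is placed just before the inward edge of the $x_j^2$-th child of $x_j^1$ (or immediately after the inward edge of $x_j^1$ if $x_j^2=0$), and the new outward edge of label $x_j^4 + m$ is placed just after the outward edge of the $x_j^3$-th child (or immediately following the inserted inward edge when $x_j^3=0$). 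For stage~(d), each output cell $u_k$ is computed by aggregating over all $d$ insertions and all $2n$ original positions: $u_k$ equals $x_j^4$ (resp.\ $x_j^4+m$) when one of the new inward (resp.\ outward) edges is placed at position $k$ after accounting for the shifts induced by earlier insertions, and otherwise $u_k=t_{k'}$ for the appropriate shifted index $k'$. This stage uses $\mathcal{O}(n(n+d))$ units.

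The main obstacle is ensuring correctness when several insertions share the same parent $x_j^1 = x_k^1$; rules (iv)-(vii) of Table~\ref{tab:reset} are tailored precisely to convert every conflicting combination of bounds into non-overlapping leaf insertions, so that the composite output is the padded Euler string of a tree at tree edit distance exactly $2d$ from $T$. Verifying this combinatorial correctness case by case over all invalid bound patterns will be the most delicate part of the argument. The total gate count is $\mathcal{O}(dn^2) + \mathcal{O}(d^2) + \mathcal{O}(n(n+d)) = \mathcal{O}(n^3)$ under the standard assumption $d = \mathcal{O}(n)$, and every stage consists of a bounded number of layers, yielding constant depth.
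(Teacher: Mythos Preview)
Your proposal is correct and follows essentially the same route as the paper: locating the inward/outward edges of each $x_j^1$, enumerating its direct children via a balance condition in the Euler string, applying the refinement rules of Table~\ref{tab:reset}, and then splicing the $2d$ new symbols into the output string with the appropriate shifts. The paper carries out your stage~(d) more explicitly through additional sorting passes (arranging the raw insertion positions $L_j$ and then the final output positions $S_j,S_j'$ in ascending order) so that the cumulative shifts can be computed by simple prefix sums; you absorb this into the phrase ``after accounting for the shifts induced by earlier insertions,'' which is where most of the remaining bookkeeping lives, but the overall architecture, the ReLU simulability via Proposition~1 of~\cite{MT2024} and Theorem~1 of~\cite{KA2022}, and the $\mathcal{O}(n^3)$ count match.
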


A proof of Theorem~\ref{thm:EInn} and an explanation of each variable used in it are given in Example~\ref{exa:ins} in Appendix~\ref{sec:app}.
\section{TE$_d$-generative ReLU}\label{sec:EIGR}
Let $T$ be a tree with $n+1$ nodes and labels from 
$\Sigma = \{1, 2, \ldots, m\}$, Euler string 
$E(T)$, and a non-negative integer $d$. 
We define the {\em TE$_d$-generative ReLU} to be a 
ReLU neural network such that each Euler string over $\Sigma$ with edit distance at most $2d$ from $E(T)$ due to deletion, substitution and insertion operations can be obtained by appropriately choosing an input $ x = x_1, x_2, \ldots, x_{7d}$ of $7d$ nodes with $x_j \in [0,1)$, where $x_j$ is of the form $i\cdot \Delta$, $i$ is an integer and $\Delta$ is a small constant to set the number of digits to be considered after decimal in $x_j$. For example, if $\Delta = 0.01$, then $x_j$ can be any number in [0,1) with two decimal places, i.e., $x_j$ cannot take the value 0.011. . 
The input $x_1, x_2, \ldots, x_{d}$ (resp., $x_{d+1}, x_{d+2}, \ldots, x_{3d}$ and 
$x_{3d+1}, x_{3d+2}, \ldots, x_{7d}$) represents the nodes for deletion (resp., substitution and insertion) operations.
As a preprocessing step, the random inputs 
$x_j$ for $1 \leq j \leq 2d$ and $3d+1 \leq j \leq 6d$ (resp., 
$2d+1 \leq j \leq 3d$ and $6d+1 \leq j \leq 7d$) are converted into integers $i \in \{0, \ldots, n\}$
(resp.,  $\ell \in \Sigma$) if 
$x_j \in ((i-1)/n, i/n]$ (resp., $x_{\ell} \in [(\ell -1)/m, \ell/m]$ for $\ell = 1$, 
$((\ell -1)/m, \ell/m]$ otherwise).
For example, when $n = 5$ and $m  = 10$, the conversion table is given in Table~\ref{tab:con} in Appendix~\ref{sec:app}. 
To output a fixed number of nodes, we assume that $E(T)$ is padded with $2d$ $B$s, where $B \gg \max(m,n)$.
The network outputs  $2n+2d$ nodes $y = y_1, \ldots, y_{2n+2d}$ from which 
the desired string $E(U)$ can be obtained by trimming all $B$s from the start and end. 
More precisely, if $d_1$ (resp., $d_2$) deletion (resp., insertion) operations are performed, then $2d-2d_1$ (resp., ${2d_2}$) number of $B$s will be trimmed from the end (resp., start) of the output $y$ as shown in Fig.~\ref{fig:Uni_N} in Appendix~\ref{sec:app}. 
\begin{figure}[h!]
	\centering
	\includegraphics[scale = 0.45]{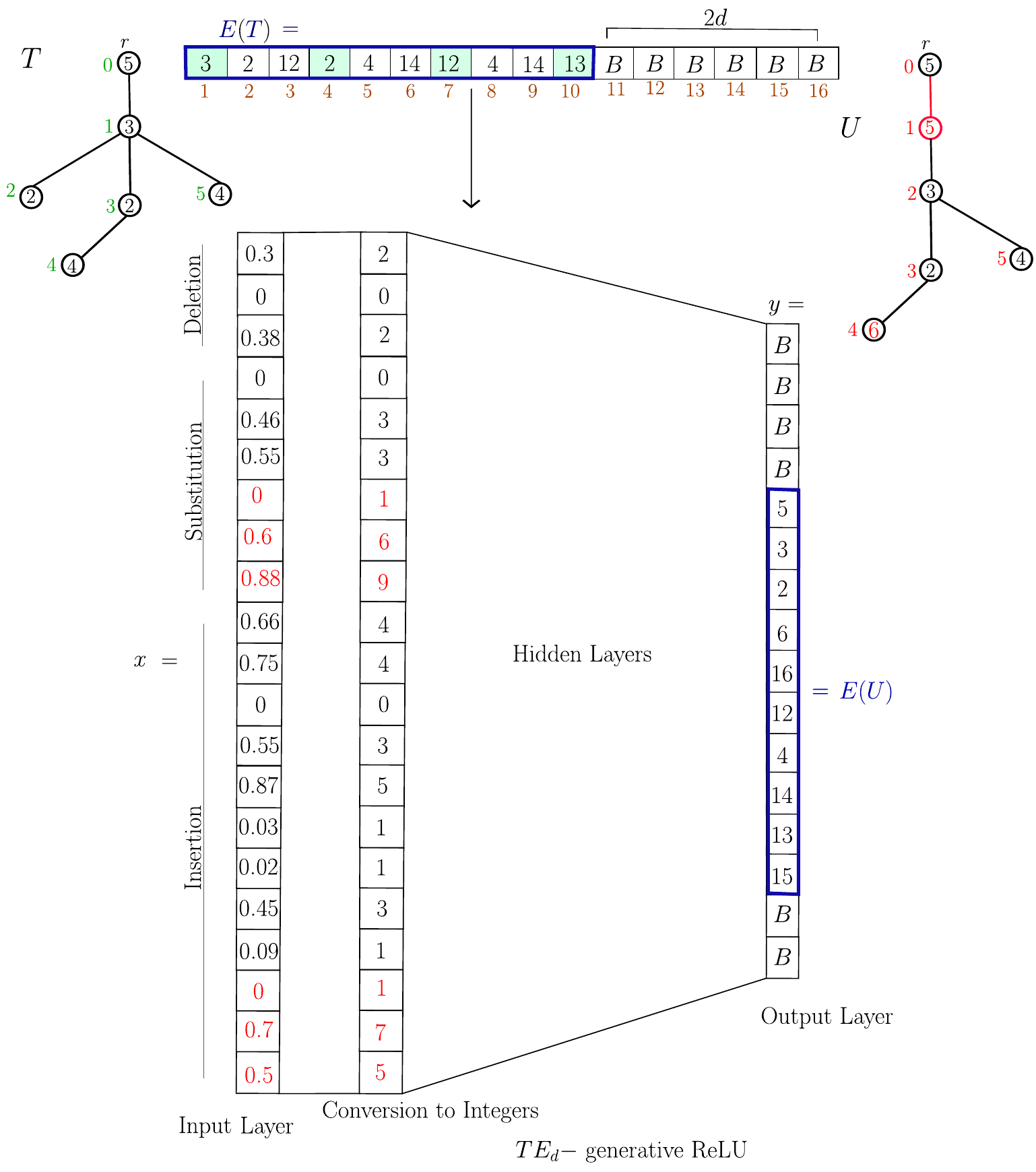}
	\caption{An illustration of a TE$_d$-generative ReLU for $d = 3$, 
	with the input layer 
$x=$0.3, 0, 0.38, 0, 0.46, 0.55, 0, 0.6, 0.88, 0.66, 0.75, 0, 0.55, 0.87, 0.03, 0.02, $0.45, 0.09, 0, 0.7, 0.5$, 
the integer conversion of $x$ in 2, 0, 2, 0, 3, 3, 1, 6, 9, 4, 4, 0, 3, 5, 1, 1, 3, 1, 1, 7, 5, padded Euler string $3, 2, 12, 2, 4, 14, 12, 4,$ $14, 13, B, B, B, B, B, B$ and the output layer $B, B, B, B, 5, 3, 2, 6, 16, 12, 4, 14, 13, 15, B, B, B, B$ with 
$d_1= d_2 = 1$. 
By trimming $B$s, we can get the resultant string $E(U)=5, 3, 2, 6, 16, 12, 4, 14, 13, 15$ obtained by deleting, substituting and inserting the indicated nodes. 
	}\label{fig:Uni_N}
\end{figure}

The existence of TE$_d$-generative ReLU network is discussed in Theorem~\ref{thm:Enn}.

\begin{theorem}
\label{thm:Enn}
For a rooted ordered tree $T$ of size $n+1$ with nodes from 
$\Sigma = \{1, 2, \ldots, m\}$, and a non-negative integer $d$, 
there exists a TE$_d$-generative ReLU network with size 
$\mathcal{O}(n^3)$ and constant depth.
\end{theorem}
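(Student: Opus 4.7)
The plan is to realize TE$_d$ as the sequential composition of the three networks constructed in Theorems~\ref{thm:Ednn},~\ref{thm:Esnn}, and~\ref{thm:EInn}, preceded by a discretization preprocessor that converts the real-valued input $x\in[0,1)^{7d}$ into integer indices and labels. Since every tree within tree edit distance $d$ of $T$ can be reached by first performing some deletions, then some substitutions, and finally some insertions (in that canonical order), covering all such trees reduces to cascading the three single-operation networks.

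First I would build the discretization layer. For each entry that should represent a DFS index, the mapping $x_j\in((i-1)/n,i/n]\mapsto i$ for $i\in\{0,\ldots,n\}$ is a piecewise constant function with $n+1$ pieces that can be written as a linear combination of shifted Heaviside functions, each realized by a constant-depth ReLU gadget via Theorem~1 of~\cite{KA2022}. Label entries are discretized analogously on a grid of size $m$. This preprocessor has size $\mathcal{O}(dn)$ and constant depth.

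Next I would cascade the three subnetworks. The TD$_d$ block receives the padded Euler string $E(T)\cdot B^{2d}$ together with the discretized deletion indices $x_1,\ldots,x_d$ and produces an intermediate string $\tilde{y}$ of length $2n$, consisting of the post-deletion Euler string followed by trailing $B$'s. That string is passed, together with the discretized substitution inputs $x_{d+1},\ldots,x_{3d}$, into a TS$_d$ block acting on the tree currently encoded by $\tilde{y}$; substitutions whose target DFS index falls in a $B$-region are suppressed by a mask layer that zeroes out any substitution term hitting a $B$-position, realized by the $\delta$-function gadget of Proposition~1 of~\cite{MT2024}. The resulting string of length $2n$ is then prepended with $2d$ $B$'s and fed, together with the discretized insertion inputs $x_{3d+1},\ldots,x_{7d}$, into a TI$_d$ block whose output has length $2n+2d$; the leading $B$'s serve as the trim region in the definition of TE$_d$ and do not interfere with the inward/outward pairing arguments of Lemma~\ref{thm:ebar4} because the Heaviside counters in Eqs.~(\ref{eqp3})--(\ref{eqz'3}) can be restricted to non-$B$ entries by the same mask.

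The main obstacle is keeping DFS indexing coherent across the three phases, because deletion renumbers the surviving nodes and insertion shifts indices again. This is resolved by letting each subnet operate on the DFS indexing of its own input Euler string: TS$_d$ reads indices of the post-deletion tree and TI$_d$ reads indices of the post-substitution tree. Because any sequence of at most $d$ tree edits can be rearranged into the canonical order (deletions, then substitutions, then insertions) without changing the resulting tree, the composed network covers every tree at edit distance at most $2d$ in the Euler-string metric. Summing the sizes $\mathcal{O}(n^2)$, $\mathcal{O}(dn^2)$, and $\mathcal{O}(n^3)$ of the three blocks, together with the $\mathcal{O}(dn)$ preprocessor and $\mathcal{O}(n^2)$ masking layers, and observing that $d=O(n)$ without loss of generality, yields a total size of $\mathcal{O}(n^3)$ and constant depth, which establishes the theorem.
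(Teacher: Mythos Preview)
Your high-level architecture---discretize the real input, then cascade TD$_d$, TS$_d$, TI$_d$ in the canonical order deletion$\to$substitution$\to$insertion---is exactly the paper's plan. But you skip the one technical point the paper takes care to spell out. The blocks of Theorems~\ref{thm:Esnn} and~\ref{thm:EInn} are constructed for a \emph{fixed} tree: the Euler-string entries $t_i$ sit in the weights as constants, notably in $r'_{ji}=t_i\cdot q_{ji}$ (Eq.~(\ref{eqr'3})), $r_i=t_i\cdot p_i$ (Eq.~(\ref{eqr3})), and $z'_{ji}=t_i\cdot\sum_\ell w'_{j\ell i}$ (Eq.~(\ref{eqz'3})). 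Once you feed these blocks the variable output $\tilde y$ of a preceding stage, those expressions become products of two network outputs, which a ReLU layer cannot realize in general. The paper's Step~4 resolves this by rewriting each product with a binary factor as $\max(t_i-C(1-q_{ji}),0)$ (and analogously for the other two), exploiting that $q_{ji},p_i,\sum_\ell w'_{j\ell i}\in\{0,1\}$. Your $B$-masking remarks show you know the blocks must handle variable input, but they address the padding issue, not this multiplication issue; without the rewrite the composition is not a ReLU network.

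A smaller divergence: the paper inserts a budgeting stage (its Steps~2--3) that counts the non-zero, non-repeated operation positions across deletion, substitution, and insertion, and overwrites with $B$ every position beyond the $d$-th. Your cascade permits up to $d$ of each kind, hence up to $3d$ operations in total. That still \emph{covers} every tree within edit distance $d$, so it meets the formal definition of TE$_d$ as stated, but it can also emit trees at strictly larger distance---which matters for the exact-enumeration use the paper has in mind.
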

A proof of Theorem~\ref{thm:Enn}, and an explanation of each variable used in Theorem~\ref{thm:Enn}  is given in Example~\ref{exa:Uni} in Appendix~\ref{sec:app}.
\pagebreak
\section{Computational Experiments}\label{sec:comp}
We implemented the proposed networks on a machine  with an AMD Ryzen 7 4800H,  Radeon Graphics processor (2.90 GHz), 16 GB of RAM, and Windows 11 Pro using Python {\tt version 3.11.6}.
Note that the proposed $\mathrm{TE}_d$-generative ReLU network supports all three edit operations, deletion, substitution, and insertion, whereas the $\mathrm{TI}_d$-generative ReLU network supports only insertion operations; however, $\mathrm{TI}_d$ has the same order $\mathcal{O}(n^3)$ as the corresponding $\mathrm{TE}_d$ Therefore, for analysis and comparison, we generated trees using the $\mathrm{TI}_d$ and $\mathrm{TE}_d$ networks for four trees $T_i$, $i = 1,2,3,4$, shown in Fig.~\ref{fig:ER}, with $8, 10, 6,$ and $11$ nodes, respectively, labels from $\Sigma=\{1,2,\ldots,10\}$, and distance bounds $d = 2, 2, 3,$ and $2$.
%
%
\begin{figure*}[h!]
	\centering
	\includegraphics[scale = 0.5,  trim = 11cm 0cm 12cm 0cm]{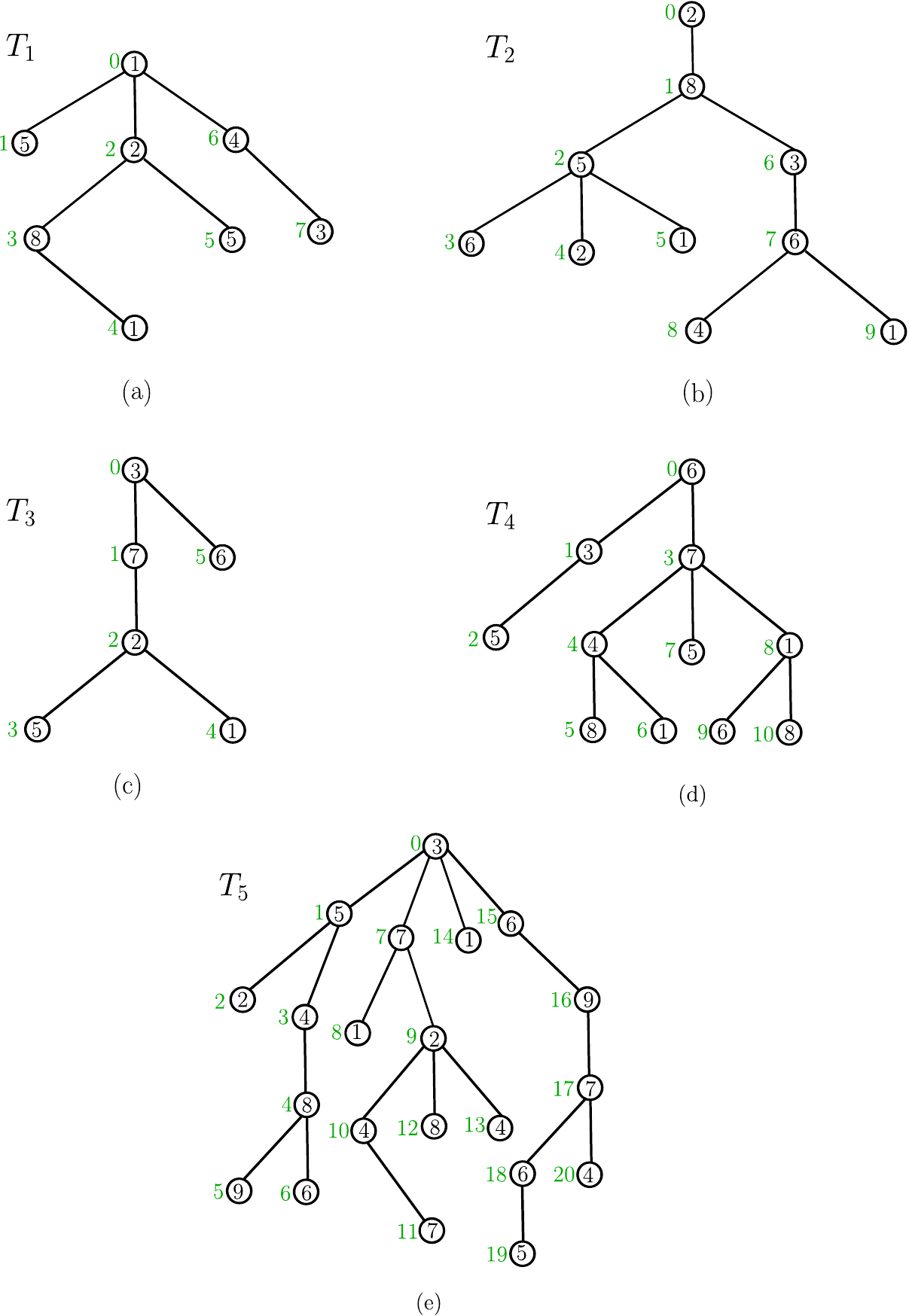}
	\caption{Trees used in the computational experiments of TI$_d$-generative ReLU and TE$_d$-generative ReLU.} \label{fig:ER}
\end{figure*}
For each tree $T_i$, $i = 1, 2,3, 4$ illustrated in Figs.~\ref{fig:ER}(a)-(d), we supplied the proposed $\mathrm{TI}_d$-generative ReLU network with a collection of input sequences $x$ that can cover all possible insertion operations to generate all possible similar trees with distance exactly $d$ due to the insertion operations. 
Duplicate outputs were removed in a post-processing step, yielding the set of distinct trees $U$ whose tree edit distance from $T_i$ is exactly $d$.
In these experiments, two (resp., two; three; and two) new nodes were inserted with labels 7, 7 (resp.,  9, 9;  8, 8, 8; and 2, 2) in $T_1$ (resp., $T_2$; $T_3$; and $T_4$) to generate similar trees of edit distance exactly $2$ (resp., 2; 3; and 2)
from $T_1$ (resp., $T_2$; $T_3$; and $T_4$) using the proposed 
$\mathrm{TI}_d$ network.
The computational results such as the number of nodes in each layer of the constructed TI$_d$ network along with the number of all distinct trees obtained after removing duplication from the similar trees generated by the proposed network for each input tree are provided in Table~\ref{tab:res-ins}.  
A summary of these computational results is given below. For $T_i$, let 
$(\rm{L}, 
\rm{TN}, 
\rm{MinN},
\rm{AvgN},
 \rm{MaxN}, 
 RT)^{\rm I}$$_i$ denote 
 the sequence of 
 number of hidden layers in TI$_d$, 
 total number of hidden nodes in TI$_d$, 
 minimum number of hidden nodes in TI$_d$, 
 average number of hidden nodes in TI$_d$,  
 maximum number of hidden nodes in TI$_d$, and 
 {running time (sec.) to generate one similar tree  using TI$_d$-generative ReLU network for $T_i$}, respectively. 
From these experiments, we have 
 $(57, 32876, 6, 576.77, 11903, 17.65)^{\rm I}_1$, 
$(57, 57324, 6, 1005.68, 24841, 46.23)^{\rm I}_2$,
$(57, 20556, 9, 360.63, $ $4500, 7.11)^{\rm I}_3$,
$(57, 73064, 6, 1281.82, 33860, 69.84)^{\rm I}_4$. 
These values are illustrated in Fig.~\ref{fig:plot}(a) for each tree $T_i$, $i = 1,2, 3, 4$. 
Observe that the depth remains fixed across all trees, confirming Theorem~\ref{thm:EInn}. 
Moreover, the widest layers are significantly larger than the others; for example, in $T_4$ with 11 nodes, the widest layer contains 33860 nodes which is over 26.41 times the average layer size (1281.82). 
Additionally, the total number of nodes grows faster than the tree size, e.g., while the number of nodes (11) in $T_4$ are roughly 1.8 times the number of nodes (6) in $T_3$, the corresponding total node count (73064) in TI$_d$ for $T_4$ is 3.56 times the node count (20556) in TI$_d$ for $T_3$. 
As a result, the running time also increases faster than the tree size.
 
 \begin{figure*}[t!]
	\centering
	\includegraphics[scale = 0.43]{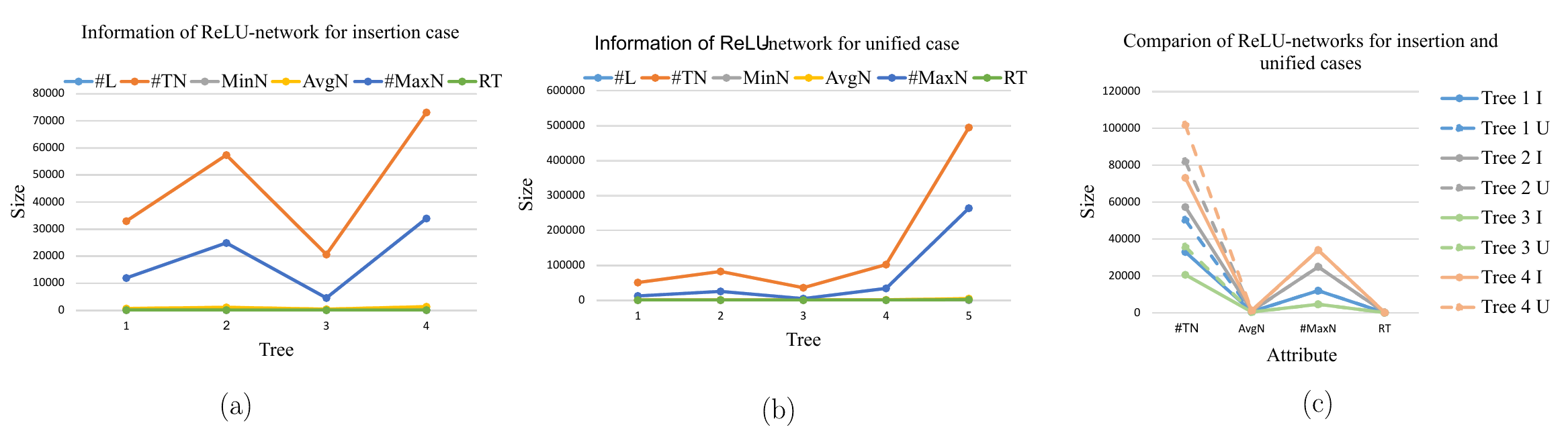}
	\caption{Information of the architectures of the proposed ReLU-networks for insertion and unified cases and their comparison: 
	(a) Information of $(\rm{L}, 
\rm{TN}, 
\rm{MinN},
\rm{AvgN},$ $
 \rm{MaxN}, 
 \rm{RT})^{\rm I}$$_i$ of the proposed ReLU-networks for insertion case for each tree $T_i$, $i = 1, 2, 3,4$; 
 (b) Information of $(\rm{L}, 
\rm{TN}, 
\rm{MinN},
\rm{AvgN},
 \rm{MaxN}, 
 {\rm RT})^{\rm E}$$_i$ of the proposed ReLU-networks for unified case for each tree $T_i$, $i = 1, 2, 3,4, 5$. 
The ranges of \rm{L}, \rm{MinN}, \rm{AvgN}, and {\rm RT} are much smaller than those of \rm{TN} and \rm{MaxN}, which causes their plots to appear compressed and overlap when shown on the same scale;  
 (c) A comparison of 
 the information of $( \rm{TN}, 
\rm{AvgN},
 \rm{MaxN}, \rm{RT})$$_i$ of the proposed ReLU-networks for insertion and unified cases for each tree $T_i$, $i = 1, 2, 3,4$. } \label{fig:plot}
\end{figure*}
Similarly, TE$_d$-generative ReLU neural networks were constructed for the trees $T_i$, $i = 1, 2, 3, 4,5$ given in Figs.~\ref{fig:ER}(a)-(e), where $T_5$ has $21$ nodes, labels from $\Sigma=\{1,2, \ldots,10\}$, and $d = 2$. 
The newly inserted (resp., substituted) nodes in $T_1$ are 0.7, 0.7 (resp., 0.55, 0.55), in $T_2$ are 0.9, 0.9 (resp., 0.7, 0.7), in $T_3$ are 0.8, 0.8, 0.8 (resp., 0.88, 0.88, 0.88), in $T_4$ are 0.2, 0.2 (resp., 0.9, 0.9) and in $T_5$ are 0.3, 0.3 (resp., 0.99, 0.99).

%

\begin{table}[H]
\centering
\caption{Experimental results for TI$_d$-generative ReLU}
\renewcommand{\arraystretch}{1.3} 
\setlength{\tabcolsep}{5pt} 
\footnotesize 

\begin{tabular}{|>{\centering\arraybackslash}p{2.2cm}|>{\centering\arraybackslash}p{11cm}|>{\centering\arraybackslash}p{2.2cm}|}  
\hline
Input trees & Size of each hidden layer of TI$_d$-generative ReLU & Number of all distinct similar trees obtained by TI$_d$ \\ \hline\hline
$T_1$, Fig.~\ref{fig:ER}(a)& 
\begin{minipage}{11cm}
\centering
\vspace{7pt}
36, 106, 806, 666, 232, 902, 400, 204, 23, 848, 218, 428, 218, 1073, 443, 758, 11903, 458, 670, 377, 252, 524, 520, 528, 524, 520,  
544, 528, 520, 540, 528, 518, 516, 528, 518, 518, 516, 640, 962, 482, 6, 26, 10, 22, 12, 6, 278, 23, 22, 352, 82, 90, 50, 26, 262, 78, 36
\vspace{3pt}
\end{minipage} & 318 \\ \hline
$T_2$, Fig.~\ref{fig:ER}(b)& 
\begin{minipage}{11cm}
\centering
\vspace{7pt}
44, 134, 1322, 1070, 368, 1446, 656, 332, 27, 1376, 350, 692, 350, 1737, 711, 1186, 24841, 730, 1074, 583, 392, 812, 808, 816, 812, 808,  
832, 816, 808, 828, 816, 806, 804, 816, 806, 806, 804, 960, 1522, 762, 6, 26, 10, 22, 12, 6, 342, 27, 26, 432, 102, 94, 54, 30, 330, 98, 44
\vspace{3pt}
\end{minipage} & 518 \\ \hline
$T_3$, Fig.~\ref{fig:ER}(c)& 
\begin{minipage}{11cm}
\centering
\vspace{7pt}
32, 82, 422, 362, 132, 529, 212, 112, 23, 452, 122, 232, 122, 573, 243, 474, 4500, 320, 378, 258, 164, 453, 447, 465, 456, 447,  
501, 465, 447, 489, 459, 441, 438, 456, 441, 441, 438, 576, 795, 399, 9, 51, 15, 45, 27, 9, 345, 23, 22, 436, 88, 172, 88, 28, 286, 82, 32
\vspace{3pt}
\end{minipage} & 546 \\ \hline
$T_4$, Fig.~\ref{fig:ER}(d)& 
\begin{minipage}{11cm}
\centering
\vspace{7pt}
48, 148, 1628, 1308, 448, 1766, 808, 408, 29, 1688, 428, 848, 428, 2129, 869, 1436, 33860, 890, 1312, 704, 474, 980, 976, 984, 980, 976,  
1000, 984, 976, 996, 984, 974, 972, 984, 974, 974, 972, 1144, 1850, 926, 6, 26, 10, 22, 12, 6, 374, 29, 28, 472, 112, 96, 56, 32, 364, 108, 48
\vspace{3pt}
\end{minipage} & 660 \\ \hline
\end{tabular}
\label{tab:res-ins}
\end{table}
The detailed computational results of these experiments are given in Table~\ref{tab:res-uni} which are summarized below. For $T_i$, let 
$(\rm{L}, 
\rm{TN}, 
\rm{MinN},
\rm{AvgN},
 \rm{MaxN}, 
 \rm{RT})^{\rm E}$$_i$ denote 
 the sequence of 
 number of hidden layers in TE$_d$, 
 total number of hidden nodes, in TE$_d$, 
 minimum number of hidden nodes in TE$_d$, 
 average number of hidden nodes in TE$_d$, 
 maximum number of hidden nodes in TE$_d$, and 
{running time (sec.) to generate one similar tree  using TE$_d$-generative ReLU network for $T_i$}, respectively. 
We have
$(108, 50360, 14, 466.30, 11917, 25.59)       ^{\rm E}_1$,
$(108, 82060, 14, 759.81$, $24859, 55.23)       ^{\rm E}_2$,
$(108, 35803, 19, $ $331.51, 4510, 11.36)        ^{\rm E}_3$,
$(108, 101930, 14$, $943.80, 33880, $ $85.04)       ^{\rm E}_4$,
$(108, 493790, 14, $ $4572.13, 263350, 1581.17)^{\rm E}_5$ 
for TE$_d$-generative ReLU networks.
These values are illustrated in Fig.~\ref{fig:plot}(b) for each tree $T_i$, $i = 1,2, 3, 4, 5$. 
Moreover a comparison of the TI$_d$ and TE$_d$ is given in Fig.~\ref{fig:plot}(c).
Observe that TE$_d$ networks are deeper and, on average, narrower than TI$_d$ networks for the same tree. Similarly, TE$_d$ networks exhibit larger minimum widths compared to TI$_d$ networks, while the maximum layer widths are nearly identical in both cases. These findings indicate that the insertion operation contributes the most to the overall size of the unified networks, which are composed of substitution, deletion, and insertion components. This observation also supports Theorems~\ref{thm:EInn} and~\ref{thm:Enn}.

\begin{table*}[t!]
\centering
\caption{Experimental results for TE$_d$-generative ReLU}
\renewcommand{\arraystretch}{1.3} 
\setlength{\tabcolsep}{5pt} 
\footnotesize 

\begin{tabular}{|>{\centering\arraybackslash}p{2.2cm}|>{\centering\arraybackslash}p{11cm}|>{\centering\arraybackslash}p{2.2cm}|}  
\hline
Input trees & Size of each hidden layer of TE$_d$-generative ReLU & Number of all distinct similar trees obtained by TE$_d$ \\ \hline\hline
$T_1$, Fig.~\ref{fig:ER}(a)& 
\begin{minipage}{11cm}
\centering
\vspace{7pt}
2016, 28, 120, 42, 34, 40, 42, 46, 30, 24, 22, 14, 66, 144, 1418, 482, 392, 1470, 696, 1650, 678, 354, 48, 30, 84, 30, 102, 30, 292, 82, 26, 70, 156, 54, 124, 908, 374, 262, 934, 458, 1816, 640, 444, 80, 164, 80, 276, 78, 36, 22, 50, 148, 50, 120, 890, 356, 218, 806, 414, 218, 37, 862, 232, 442, 232, 1087, 457, 772, 11917, 472, 684, 391, 266, 538, 534, 542, 538, 534, 558, 542, 534, 554, 542, 532, 542, 532, 532, 530, 654, 976, 496, 20, 40, 24, 36, 26, 20, 292, 37, 36, 366, 82, 90, 50, 26, 262, 78, 36
\vspace{3pt}
\end{minipage} & 747 \\ \hline
$T_2$, Fig.~\ref{fig:ER}(b)& 
\begin{minipage}{11cm}
\centering
\vspace{7pt}
2240, 28, 120, 42, 34, 40, 42, 46, 30, 24, 22, 14, 74, 172, 2082, 674, 564, 2146, 1024, 2454, 1002, 518, 56, 34, 100, 34, 122, 34, 372, 102, 30, 82, 196, 66, 156, 1452, 550, 406, 1486, 730, 2980, 1036, 712, 100, 208, 100, 352, 98, 44, 26, 62, 188, 62, 152, 1430, 528, 350, 1322, 674, 350, 45, 1394, 368, 710, 368, 1755, 729, 1204, 24859, 748, 1092, 601, 410, 830, 826, 834, 830, 826, 850, 834, 826, 846, 834, 824, 834, 824, 824, 822, 978, 1540, 780, 24, 44, 28, 40, 30, 24, 360, 45, 44, 450, 102, 94, 54, 30, 330, 98, 44
\vspace{3pt}
\end{minipage} & 1223 \\ \hline
$T_3$, Fig.~\ref{fig:ER}(c)& 
\begin{minipage}{11cm}
\centering
\vspace{7pt}
2688, 42, 204, 63, 51, 60, 63, 69, 45, 36, 33, 21, 89, 142, 1141, 405, 325, 1250, 562, 1314, 546, 290, 50, 34, 82, 34, 98, 34, 298, 88, 28, 84, 127, 48, 98, 498, 275, 165, 555, 265, 1355, 455, 355, 85, 175, 85, 295, 82, 32, 22, 42, 112, 42, 92, 482, 259, 122, 422, 222, 122, 33, 462, 132, 242, 132, 583, 253, 484, 4510, 330, 388, 268, 174, 463, 457, 475, 466, 457, 511, 475, 457, 499, 469, 451, 466, 451, 451, 448, 586, 805, 409, 19, 61, 25, 55, 37, 19, 355, 33, 32, 446, 88, 172, 88, 28, 286, 82, 32
\vspace{3pt}
\end{minipage} & 2525 \\ \hline
$T_4$, Fig.~\ref{fig:ER}(d)& 
\begin{minipage}{11cm}
\centering
\vspace{7pt}
 2352, 28, 120, 42, 34, 40, 42, 46, 30, 24, 22, 14, 78, 186, 2462, 782, 662, 2532, 1212, 2916, 1188, 612, 60, 36, 108, 36, 132, 36, 412, 112, 32, 88, 216, 72, 172, 1772, 650, 490, 1810, 890, 3670, 1270, 870, 110, 230, 110, 390, 108, 48, 28, 68, 208, 68, 168, 1748, 626, 428, 1628, 828, 428, 49, 1708, 448, 868, 448, 2149, 889, 1456, 33880, 910, 1332, 724, 494, 1000, 996, 1004, 1000, 996, 1020, 1004, 996, 1016, 1004, 994, 1004, 994, 994, 992, 1164, 1870, 946, 26, 46, 30, 42, 32, 26, 394, 49, 48, 492, 112, 96, 56, 32, 364, 108, 48
\vspace{3pt}
\end{minipage} & 1550 \\ \hline
$T_5$, Fig.~\ref{fig:ER}(e)& 
\begin{minipage}{11cm}
\centering
\vspace{7pt}
3472, 28, 120, 42, 34, 40, 42, 46, 30, 24, 22, 14, 118, 326, 8022, 2302, 2082, 8152, 3972, 9736, 3928, 1992, 100, 56, 188, 56, 232, 56, 812, 212, 52, 148, 416, 132, 332, 6732, 2090, 1770, 6810, 3370, 14530, 4930, 3330, 210, 450, 210, 770, 208, 88, 48, 128, 408, 128, 328, 6688, 2046, 1648, 6448, 3248, 1648, 89, 6608, 1688, 3328, 1688, 8289, 3369, 5296, 263350, 3410, 5052, 2614, 1774, 3580, 3576, 3584, 3580, 3576, 3600, 3584, 3576, 3596, 3584, 3574, 3584, 3574, 3574, 3572, 3904, 6930, 3486, 46, 66, 50, 62, 52, 46, 734, 89, 88, 912, 212, 116, 76, 52, 704, 208, 88
\vspace{3pt}
\end{minipage} & 6309 \\ \hline
\end{tabular}
\label{tab:res-uni}
\end{table*}
For each tree $T_i$, the inputs $x$ and the corresponding Euler strings $E(U)$ generated by the TI$_d$  and TE$_d$ networks are listed in the supplementary material S1 which is available on
~\url{https://github.com/MGANN-KU/TreeGen\_ReLUNetworks}.

\comblue{Additionally, we conducted experiments to generate trees with a given edit distance by using state-of-the-art graph generative models called GraphRNN by You et al.~\cite{You2018GraphRNN} and GraphGDP by  Huang et al.~\cite{Huang2022GraphGDP} for comparison. 
For this purpose, we randomly generated datasets of sizes 100, 150, 150, 200, and 800 of trees whose distances from $T_1$, $T_2$, $T_3$, $T_4$ and $T_5$ are $2, 2, 3, 2$ and $2$, respectively. 
For simplicity, the labels of the underlying trees were ignored.  
We trained GraphRNN (dependent Bernoulli variant) (resp., GraphGDP) on each of these datasets by using a 4-layer RNN (resp., 4-layer GNN) with hidden neuron size 128 (resp., 128).  Training ran for around 3000 epochs with batch size 32, learning rate 0.003 for GraphRNN and  0.00002 for GraphGDP by using 80\% of the input dataset, whereas 20\% dataset was used for testing, and default parameters were retained for other settings. 
As a result, for each tree, $1024$ (resp., 512) samples were generated by using each GraphRNN (resp., GraphGDP).
Computational results are given in Table~\ref{tab:grnn}. }

\comblue{The results demonstrate notable variability in the performance of both models across different tree structures. It is important to note that both GraphRNN and GraphGDP were expected to generate tree structures, yet neither model guarantees that all generated samples are valid trees. Additionally, among the generated trees, only those that satisfy the specified edit distance constraint are considered valid in this evaluation.}

\comblue{GraphRNN consistently generated a high percentage of trees (ranging from 86.3\% to 96.6\%), yet only a small fraction of these were valid with respect to the target distance constraint. 
For instance, for trees $T_1$, $T_2$, and $T_4$ with distance $d=2$, the proportion of valid trees remained low 6.4\%, 6.9\%, and 2.0\%, respectively. 
In the case of $T_5$, GraphRNN failed to produce any valid tree, despite generating over 92\% of trees. 
The only tree where GraphRNN achieved relatively high success was $T_3$ (with $d=3$), where 35.8\% of the generated samples were valid.}

\comblue{In contrast, GraphGDP generated fewer trees overall, with percentages ranging between 3.7\% and 48.6\% across different trees. However, it was able to generate a higher proportion of valid trees among those it produced, especially in the case of $T_3$, where 47.8\% of its outputs were valid. 
For the other trees, especially those with $d=2$, the valid tree percentages were noticeably lower: 9.2\% for $T_1$, 3.9\% for $T_2$, and 0.0\% for both $T_4$ and $T_5$.}
\begin{table}[h!]
\centering
\caption{Percentage of trees and valid trees generated by GraphRNN~\cite{You2018GraphRNN} and Huang et al.~\cite{Huang2022GraphGDP} with a given distance $d$}
\comblue{
\begin{tabular}{|c|c|c|c|c|c|c|}
\hline
\multirow{2}{*}{Tree} & \multirow{2}{*}{$n+1$} & \multirow{2}{*}{$d$} 
& \multicolumn{2}{c|}{GraphRNN~\cite{You2018GraphRNN}} 
& \multicolumn{2}{c|}{GraphGDP~\cite{Huang2022GraphGDP}} \\
\cline{4-7}
& & & Trees & Valid trees & Trees & Valid trees \\
\hline
$T_1$ & 8  & 2  & 86.3\% & 6.4\%  & 48.6\% & 9.2\%  \\
$T_2$ & 10 & 2  & 96.6\% & 6.9\%  & 35.0\% & 3.9\%  \\
$T_3$ & 6  & 3  & 92.0\% & 35.8\% & 47.9\% & 47.8\% \\
$T_4$ & 11 & 2  & 94.1\% & 2.0\%  & 32.8\% & 0.0\%  \\
$T_5$ & 21 & 2  & 92.1\% & 0.0\%  & 3.7\%  & 0.0\%  \\
\hline
\end{tabular}}
\label{tab:grnn}
\end{table}

\comblue{These findings suggest that GraphRNN and GraphGDP struggle to enforce the tree edit distance constraint, particularly as tree size and complexity increase, and may generate samples that are not even trees. 
}

\comblue{The datasets and the graphs generated by GraphRNN and GraphGDP are available in the supplementary materials S2 and S3, resp., which are available on~\url{https://github.com/MGANN-KU/TreeGen\_ReLUNetworks}}
\section{Conclusion} \label{sec:concl}
We study the existence of ReLU-based generative networks for producing trees similar to a given tree with respect to the tree edit distance.  
Our approach transforms a rooted, ordered, and vertex-labeled tree into a rooted, ordered, and edge-labeled directed tree.   
This directed tree is then encoded as an Euler string, which serves as both the input and output of the ReLU generative networks.
First, we proved that there exists a ReLU network of size $\mathcal{O}(dn)$ and constant depth that can identify the labels of $d$ inward edges in the Euler string.   
Furthermore, we showed that the outward edges corresponding to these $d$ inward edges can be identified using a ReLU network of size $\mathcal{O}(dn^2)$ and constant depth.   
Building on these results, we demonstrated that all similar trees generated through substitution (resp., deletion and insertion) operations can be constructed by ReLU networks of size   
$\mathcal{O}(dn^2)$ (resp., $\mathcal{O}(n^2)$ and $\mathcal{O}(n^3)$), all with constant depth.   
Finally, we proved that there exists a ReLU network of size $\mathcal{O}(n^3)$ and constant depth capable of generating any tree within distance  $d$ from the original tree under combined substitution, deletion, and insertion operations. These findings provide a theoretical foundation towards construction of  compact generative models and open new directions for efficient tree-structured data generation.

In this study, we do not consider scenarios where a newly inserted node becomes the parent of subsequent inserted nodes. This design choice simplifies the construction and supports tractable enumeration, but it limits the completeness of the editing model by excluding certain nested insertions. Addressing this limitation would extend the expressiveness of the framework and is a promising direction for future work.
 comparison with the state-of-the-art graph generative models GraphRNN by You et al.~\cite{You2018GraphRNN} and GraphGDP by 
Huang et al.~\cite{Huang2022GraphGDP} revealed that these models struggled to generate trees at a specified edit distance, particularly as tree size and structural complexity increased. For instance, GraphRNN and GraphGDP could not generate a single valid tree with 21 vertices and edit distance 2. While this experiment serves as an initial benchmark, it also underscores the need for further comparative evaluations with other models, such as TreeGAN and diffusion-based tree generators, to more clearly position the strengths of our proposed ReLU-based construction.
On the other side, our construction demonstrates that our proposed ReLU network with constant depth and polynomial size can generate all trees within a given edit distance; however, the number of neurons in certain hidden layers increases rapidly with tree size. For instance, generating trees with 21 nodes can require layers containing over 263350 neurons. Although this wide structure ensures theoretical expressivity and completeness, it presents challenges for scalability, implementation, and deployment in resource-constrained environments. \comb{To address this limitation, future work may focus on strategies such as width pruning, parameter sharing, and compression  to control and reduce the rapid growth in network width.} 
An implementation of the proposed networks is available at~\url{https://github.com/MGANN-KU/TreeGen\_ReLUNetworks}.

\section*{Author contributions}
Conceptualization, M.G. and T.A.; methodology, M.G. and T.A.; software, M.G.  validation, T.A.; formal analysis, M.G. and T.A.; investigation, M.G. data curation, M.G.; writing—original draft preparation, M.G. and T.A.; writing—review and editing, M.G. and T.A..; supervision, T.A.; project administration, T.A. All authors have read and agreed to the published version of the manuscript.

\section*{Acknowledgments }
The work of Tatsuya Akutsu was supported in part by Grants 22H00532 and 22K19830 from Japan Society for the Promotion of Science
(JSPS), Japan. 
The authors would like to thank Dr. Naveed Ahmed Azam, Quaid-i-Azam University Pakistan, for the useful technical discussions.

\section*{Conflict of interest}
All authors have no conflicts of interest in this paper.


\section{Appendix}\label{sec:app}
\subsection*{Proofs and Examples}
\begin{proof}[\proofname\ of Theorem 1]
Suppose $E(T)=t_1, t_2, \ldots, t_{2n}$ and $E(U) = u_1, u_2, \ldots, u_{2n}$ are two Euler strings over $\Sigma$ 
of trees $T$ and $U$, resp., such that 
$E(U)$ is obtained from $E(T)$ by substituting 
$x_{1+d}, x_{2+d}, \ldots, x_{2d}$ (resp., 
$x_{1+d}+m, x_{2+d}+m, \ldots, x_{2d}+m$) at the  inward edges (resp., outward edges) of  
$x_1, x_2, \ldots, x_{d}$.
We claim that the substitution operations on $E(T)$ to obtain $E(U)$ can be performed in the following three steps, where
$i \in \{1, 2, \ldots, 2n\}$, $j \in \{1, 2, \ldots, d\}$ and $C$ is a constant with $C\gg \max\{m, n\}$.
These steps are demonstrated on an example tree in~Example~\ref{exa:Sub}. \bigskip\\
Step 1. Remove non-zero repetitions from $x_1, \ldots, x_d$ by setting repeated non-zero values to $0$ to get $x'$.
\begin{align}
x'_{j} &= \max (x_{j}-C  \sum_{k=1}^{j-1} \delta(x_{j}, x_{k}) , 0). \label{eqx'1}
\end{align}
Step 2. Get the labels of the inward and outward edges that will remain unchanged after the substitution operation by using $P'_{i}$. 
The non-zero value of $P'_{i}$ is the unchanged label at the $i$-th entry in $E(T)$.  
\begin{align}
P_{ji} &= r'_{ji} + z'_{ji}, \label{eqP1}\\
P'_{i} &= t_i - \sum_{j=1}^{d} P_{ji} , \label{eqP'1}
\end{align}
where $r'_{ji}$ and $ z'_{ji}$ are the labels of the inward and outward edges corresponding to $x'$ which can be obtained by Lemmas~\ref{thm:inward} and ~\ref{thm:ebar4}, respectively.  \\
Step 3.  Perform substitution at the inward and outward edges corresponding to $x'$ by using $Q_{ji}$ and $Q'_{ji}$, respectively.
$R_{i}$ stores all the substituted labels in the resultant Euler string. 
\begin{align}
Q_{ji} &= \max (x_{j+d}-C\delta(r'_{ji}, 0) , 0),  \label{eqQ1}\\
Q'_{ji} &= \max (x_{j+d}+m -C\delta(z'_{ji}, 0) , 0),  \label{eqQ'1}\\
R_{i} &={ \sum_{j=1}^{d} (Q_{ji} + Q'_{ji})}. \label{eqR1}
\end{align}
Finally, combine the original and substituted entries to get the required Euler string $E(U)$ by 
\begin{align}
u_i &= P'_i + R_i . \label{equ1} 
\end{align}

All the above equations involve the maximum function or $\delta$ function which can be simulated 
by ReLU activation function by using \comblue{Proposition~1 by Ghafoor and Akutsu~\cite{MT2024}.}
Therefore there exists a TS$_d$-generative ReLU network with size 
$\mathcal{O}(dn^2)$ and constant depth.
\end{proof}
\begin{example}
\label{exa:Sub}
Consider the tree $T$ as shown in Fig.~\ref{fig:ET}(a) with 
$E(T)=3, 2, 7, 2, 4, 9, 7, 4$, $9, 8$, $d = 3$, $m = 5$, and 
$x=1, 3, 1, 5, 1, 2$. 
The resultant tree $E(U)$ obtained by applying the substitution operations on $E(T)$ due to 
the given $x$ is shown in Fig.~\ref{fig:sub_N}, where the 
repetition $x_3 = 1$ is ignored by setting it $0$. 
We demonstrate the process of obtaining 
$E(U)=5, 2, 7, 1$, $4, 9, 6, 4, 9, 10$ by using Theorem~\ref{thm:Esnn} as follows. 
\begin{figure*}[h]
	\centering
	\includegraphics[scale = 0.65]{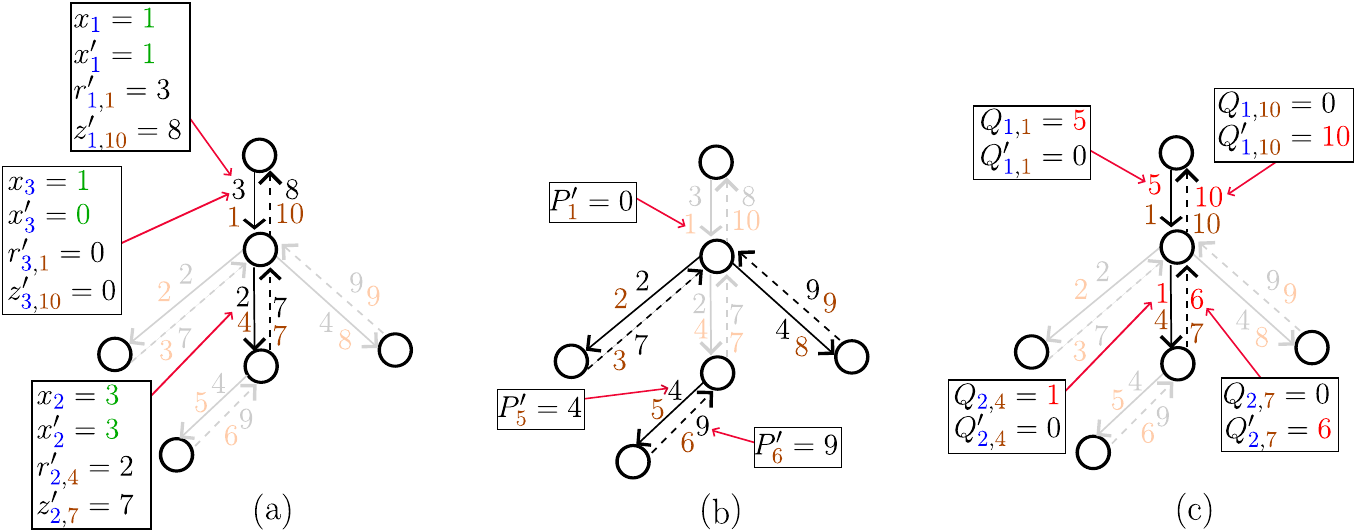}
	\caption{An illustration of the variables used in Theorem~\ref{thm:Esnn}.
	}\label{fig:sub}
\end{figure*}

\begin{longtable}{c p{16cm}}\addtocounter{table}{-1}
$x_j$ & Specify the inward edge and outward edge of $x_j \neq 0 $ to substitute $x_{j+d}$ and $x_{j+d}+m$. In this case $x=1, 3, 1, 5, 1, 2$ as illustrated in 
Fig.~\ref{fig:sub}(a), where the inward and outward edges that correspond to $x$ are depicted in black. \\
$x'_j$  & A variable that replaces repeated $x_j$ with zero, e.g.,  $x_1 =x_3 = 1$, therefore $x'_3 = 0$. The values of the variables are $x' = 1, 3, 0$ as illustrated in 
Fig.~\ref{fig:sub}(a).\\
$r'_{ji}$, $z'_{ji}$ & The labels of inward and outward edges of $x'_j$, resp., as explained in Examples~\ref{ex:in} and~\ref{ex:out}. 
The non-zero values of $r'_{ij}$ and $z'_{ji}$ are 
$r'_{1,1}=3$, $r'_{2,4}=2$, $z'_{1, 10}=8$ and $z'_{2,7}=7$, and are depicted in Fig.~\ref{fig:sub}(a). 
\\
 $P_{ji}$  & A variable that keeps the labels of the inward and outward edges simultaneously by taking the sum of $r'_{ji}$ and $z'_{ji}$. {Since $r'_{1,1}=3$ and $z'_{1,1}=0$ therefore, 
 $P_{1,1}=r'_{1,1} + z'_{1,1}=3$. Similarly, $P_{1, 10}=r'_{1,10} + z'_{1,10}=0+8=8$, $P_{2,4}=r'_{2,4} + z'_{2,4}=2+0=2$, $P_{2,7}=r'_{2,7} + z'_{2,7}=0+7=7$, and all other variables are zero}. \\
$P'_i$  & Stores the original entries of $E(T)$ where no substitution operation is performed by setting the $i$-th entry of $E(T)$ zero if $P_{ji}$ is non-zero for some $x'_j$, i.e., the inward and outward edges that correspond to $x'$ are set to zero in $E(T)$. 
For example, $P_{1,1}=3 \neq 0$, therefore $P'_{1} = 0$, whereas 
$P_{j,5} = 0$ for all $j$, therefore $P'_{5}= 4$ which is the $5$-th entry of $E(T)$. 
In this case $P' = [0, 2, 7, 0, 4, 9, 0, 4, 9, 0]$ as depicted in Fig.~\ref{fig:sub}(b).\\ 
$Q_{ji}$  & Performs substitution at the inward edges, i.e., 
 $Q_{ji} = x_{j+d}$ if $r'_{ji} \neq 0$, e.g., 
 $Q_{2,4}=1$ as $r'_{2,4}=2 \neq 0$, implying that the inward edge of $x'_2$ has index $4$ in $E(T)$ and is substituted by $1=x_{2+3}$. 
 Similarly  $Q_{1,1}=5$, and all other variables are zero as depicted in Fig.~\ref{fig:sub}(c).\\
$Q'_{ji}$  & Performs substitution at outward edges, i.e., 
$Q'_{ji} = x_{j+d} +m$ if $z'_{ji} \neq 0$, e.g., 
 $Q'_{2,7}=6$ as $z'_{2,7}=7 \neq 0$, implying that the outward edge of $x'_2$ has index $7$ in $E(T)$ and is substituted by $6=x_{2+3}+5$. 
 Similarly  $Q'_{1,10}=10$ , and all other variables are zero as depicted in Fig.~\ref{fig:sub}(c).\\
$R_{i}$  & Stores substituted value at index $i$. 
The values of the variables in this case are $R = [5, 0, 0, 1, 0, 0, 6, 0, 0, 10]$.\\
$u_{i}$  &The resultant string $E(U)$. 
The values of the variables are $u = [5, 2, 7, 1, 4, 9, 6, 4, 9, 10]$.
The corresponding tree $U$ is shown in Fig.~\ref{fig:sub_N}.
\end{longtable}

\end{example}

\begin{proof}[\proofname\ of Theorem 2]
Suppose $E(T)=t_1, t_2, \ldots, t_{2n}$ and $E(U) = u_1, u_2, \ldots, u_{2(n-d')}$, 
$d' \leq d$ are two Euler strings over $\Sigma$ 
corresponding to the trees $T$ and $U$,  resp., such that 
$E(U)$ is obtained from $E(T)$ by deleting at most $2d$ edges  of 
$x_1, x_2, \ldots, x_{d}$ from $T$.
We claim that the $E(U)$ can be obtained by using the following system of equations, where 
$i, \ell \in \{1, 2, \ldots, 2n+2d\}$, $j \in \{1, 2, \ldots, d\}$ unless stated otherwise, and $B, C$ are large numbers such that $C \gg B\gg \max(m,n)$ .\bigskip\\
Step 1. Remove non-zero repetitions from $x$ to get $x'$ as explained in Theorem~\ref{thm:Esnn}. \\
Step 2.
Identify the positions and labels of the inward and outward edges to be deleted by using Lemmas~\ref{thm:inward} and~\ref{thm:ebar4} as follows.
 
\begin{align}
q_i&= \sum_{j=1}^{d} q_{ji},   \label{eqq0}\\
r'_i&= t_i q_{i},  \label{eqr'0}\\
w'_{{\ell}i}&= 
\begin{cases}
0 &\text{~if~}  i\leq {\ell},\\
\max(\delta(s_i, r'_{\ell} + m) - \sum_{k=1, k \neq {\ell}}^{2n} w_{ki}, 0) &\text{~otherwise}, \label{eqw'ji}\\
\end{cases} \\
z'_i&= t_i \cdot \sum_{{\ell}=1}^{2n} w'_{{\ell}i}, \label{eqw0}\\
P_{i}&= r'_{i} + z'_i  \label{eqP0}.
\end{align}
%
Step 3.
Identify the labels to be retained to construct the resultant string after the deletion operations as follows. 
\begin{align} 
Q_i&= \delta(P_i, 0),  \label{eqQ0}\\
R_i  &= \max( B  \sum_{k=1}^{i} Q_k - C  \delta(Q_i, 0),  0),
 \label{eqR0}\\
{R'}_{i}^{j} &= \left[i  B \leq R_{i+j-1} \leq i  B + 1\right] \cdot t_{i+j-1} \text{~for~} 
i \in \{1, 2, \ldots, 2n\} \nonumber\\
&~~~~~~~ j \in \{1, 2, \ldots, 2d+1\}, \label{eqR'0}
\end{align}
Finally, get the required Euler string $E(U)$ from $y_i$ by removing $B$s. 
\begin{align}
y_i  &= \sum_{j=1}^{2d+1} {R'}^j_i \text{~for~} i \in \{1, 2, \ldots, 2n\}. \label{eqy0}
\end{align}

All the above equations involve the maximum function, $\delta$ function, or threshold function, which can be simulated 
by ReLU activation function by using \comblue{Proposition~1 by Ghafoor and Akutsu~\cite{MT2024}.} 
Therefore there exists a TD$_d$-generative ReLU network with size 
$\mathcal{O}(n^2)$ and constant depth.
\end{proof}
\begin{figure*}[ht!]
	\centering
	\includegraphics[scale = 0.65,  trim = 11cm 0cm 12cm 0cm]{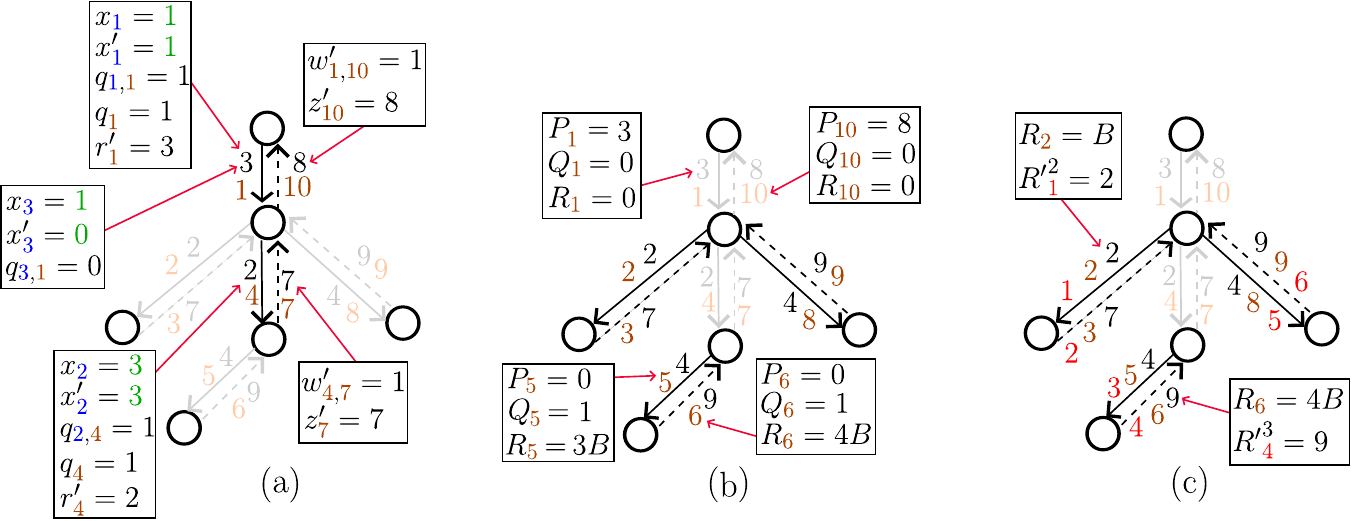}
	\caption{An illustration of the variables used in Theorem~\ref{thm:Ednn}.}\label{fig:Del}
\end{figure*}
\begin{example}
\label{exa:Del}
Reconsider the tree $T$ given in Fig.~\ref{fig:ET} with 
$E(T)=3, 2, 7, 2, 4, 9, 7, 4$, $9, 8$, $d = 3$, $m = 5$, and 
$x=1, 3, 1$. 
The resultant tree $U$ obtained by applying the deletion operations on $T$ due to 
the given $x$ is shown in Fig.~\ref{fig:Del_N}, where the 
repetition $x_3 = 1$ is ignored by setting it $0$ and deleting two $B$s from the padded  Euler string $E(T)$.  
We demonstrate the process of obtaining 
$E(U)=2, 7, 4, 9, 4, 9$ by using Eqs.~(\ref{eqq0})-~(\ref{eqy0}) as follows. 
An illustration of the variables used in these equations is given in Fig.~\ref{fig:Del}.

\begin{longtable}{c p{16cm}}\addtocounter{table}{-1}
$x_j$ & Specify the inward edge and outward edge of $x_j \neq 0 $ to be deleted. In this case $x=1, 3, 1$ as illustrated in 
Fig.~\ref{fig:Del}(a), where the inward and outward edges that correspond to $x$ are depicted in black. \\
$x'_j$  & A variable that replaces repeated non-zero $x_j$ with 0, e.g., 
$x_3 = 1$ is repeated, and therefore $x'_j = 0$. 
The values of the variables in this case are $x' = [1, 3, 0]$, and are depicted in 
Fig.~\ref{fig:Del}(a). \\
\multicolumn{2}{l}{$p_i, p_i', p_i'', q_{ji}$ and $s_i, w_{\ell i}$ are explained in Examples~\ref{ex:in} and~\ref{ex:out}, respectively.}\\
$q_{i}$  &A binary variable which is one if the inward edges of $x'_j \neq 0$ is 
the $i$-th entry of $E(T)$. 
In other words, this variable identifies the positions of the inward edges to be deleted from $E(T)$. 
In this case  $q'_{1}=q'_{4}=1$, since $q_{1,1} = q_{2,4} = 1$ as shown in Fig.~\ref{fig:Del}(a). \\
$r_i'$ & Stores the label of the inward edge of $x'_j \neq 0$ which is the $i$-th entry of $E(T)$. The non-zero values of this variable are $r'_1 = 3$ and $r'_4 = 2$ as shown in Fig.~\ref{fig:Del}(a). \\
$w'_{\ell i}$ &A binary variable to identify the outward edges of $x_j \neq 0$. More precisely, $w'_{\ell i}$ is one when $t_i$ is the outward edge of the inward edge $t_{\ell}$, and $t_{\ell}$ is the inward edge of $x_j$, e.g., $w'_{1,10}=w'_{4,7} = 1$  because $t_{10}$ and $t_7$ are the outward edges of the inward edges $t_{1}$ and $t_4$, resp., as depicted in Fig.~\ref{fig:Del}(a). All other values are zero.\\ 
$z_i'$ & Stores the label of the outward edge of $x'_j \neq 0$ which is the $i$-th entry of $E(T)$. The non-zero values of this variable are $z'_7 = 7$ and $z'_{10} = 8$ as shown in Fig.~\ref{fig:Del}(a). \\
$P_{i}$  & A variable that keeps the labels of both inward and outward edges to be deleted by taking the sum of $r'_i$ and $z'_i$. 
In this case, $P= [3, 0, 0, 2, 0, 0, 7, 0, 0, 8, 0, 0, 0, 0, 0, 0]$. \\
$Q_{i}$  &A binary variable to identify which entries of the padded $E(T)$ should be retained to get the string of  the  resultant tree, e.g., 
$P_5 = 0$ implies that the $5$th entry of $E(T)$ should appear in the resultant tree, and so $Q_5 = 1$ as depicted in Fig.~\ref{fig:Del}(b). 
Therefore $Q= [0, 1, 1, 0, 1, 1, 0, 1, 1, 0, 1, 1, 1, 1, 1, 1]$.\\
$R_{i}$  &Assigns weights to the retained entries in the ascending order, e.g., 
$t_5 = 4$ is the $3$rd entry to be retained as $Q_5 = 1$, and therefore $R_{5} = 3B$. In this case, 
$R = [0, B, 2B, 0, 3B, 4B, 0, 5B, 6B, 0, 7B, 8B, 9B, 10B, 11B, 12B]$ as depicted in Fig.~\ref{fig:Del}(b).\\
${R'}^j_i$  & Determines the label of the $i$-th entry of the resultant string obtained after the deletion operation. 
More precisely, the non-zero ${R'}^j_i$ is equal to the label of the entry  of $E(T)$ which has value $iB$
in $R$, e.g., when $i = 4$, $R_6 =  4B$ and the $6$th entry of $E(T)$ is $9$, therefore ${R'}^3_4 = 9$ where $i+j-1=4+3-1=6$, shows that the element of $6$th position of $E(T)$ has a shift of $j-1=2$ and becomes the $4$th element of resultant string as depicted in Fig.~\ref{fig:Del}(c).
The non-zero values of ${R'}^j_i$ are ${R'}^2_1 =2$, ${R'}^2_2=7$, ${R'}^3_3=4$, ${R'}^3_4=9$, ${R'}^4_5=4$, ${R'}^4_6=9$, ${R'}^5_7={R'}^5_8={R'}^6_9={R'}^6_{10}=B$.
\\
$y_{i}$  &Returns the non-zero entries of ${R'}^j_i$ for a fixed $i$ from which $E(U)$ can be obtained by removing $B$s. 
In this case $y = [2, 7, 4, 9, 4, 9, B, B, B, B]$ and so $ E(U) = 2, 7, 4, 9, 4, 9$ as required.
\end{longtable}

\end{example}
\begin{proof}[\proofname\ of Theorem 3]
Consider two trees $T$ and $U$ over $\Sigma$ such that 
$E(U)$ is obtained from $E(T)$ by inserting exactly $d$ inward and $d$ outward edges based on an appropriate $ x = x_1, \ldots, x_{4d}$. 
We claim that the insertion operations on $E(T)$ to obtain $E(U)$ can be performed in the following 10 steps, where $\ell \in \{0, 1, \ldots, 2n\}$, 
$i \in \{1, 2, \ldots, 2n\}$, $j \in \{1, 2, \ldots, d\}$, unless stated otherwise, and $C$ is a large number.\bigskip\\
Step 1. 
Determine the positions of inward and outward edges. 
The variable $q_j$ determines the position of the inward edge of $x^{1}_j$ by using 
Eq.~(\ref{eqq3}), and $b_{\ell}$ determines the position of each outward edge by using 
Eq.~(\ref{eqw3}), where $b_{0}$ corresponds to the root.  
\begin{align}
q_{j} &= \sum_{i=1}^{2n} i \cdot q_{ji}, \label{eqq'2}\\
b_{0} &= 2n+1,  b_{\ell} = \sum_{i=1}^{2n} i \cdot w_{{\ell}i},  
 \text{~for~}  {\ell} \in \{1, 2, \ldots, 2n\}. \label{eqz2}
\end{align}
Step 2. 
Determine the number of children of the node $x^{1}_j$, which is equal to the number of 
the descendant inward edges adjacent to the inward edge of $x^{1}_j$. 
The variable $A_{{\ell}i}$ is non-zero if and only if
the $i$-th edge (inward or outward) is adjacent to the $\ell$-th inward edge in $E(T)$, whereas $a_{\ell}$ is the number of adjacent inward edges and
$D_j$ is the number of children of $x^{1}_j$.
\begin{align}
A_{{\ell}i} &=  \max( \left[{\ell}+1 \leq i \leq b_{\ell} - 1\right] - \nonumber\\
&~~~~~~~  \sum_{k={\ell}+1}^{2n} \left[k+1 \leq i \leq b_k-1 \right], 0 ), 
  \label{eqA2}\\
a_{\ell} &= \sum_{i=1}^{2n} A_{{\ell}i} /2,  \label{eqa2}\\
D_{j} &= \sum_{k=1}^{n} \sum_{{\ell}=0}^{2n}  k \cdot (\delta(k, a_{\ell}) \wedge \delta({\ell}, q_j)). \label{eqD2}
\end{align}
Step 3. 
Refine the invalid lower bound $x^{2}$ and the upper bound $x^{3}$ as follows, where 
the refinements (i)-(ix) are performed by Eqs.~(\ref{eqQ2})-(\ref{eqQ32}), respectively. 
\begin{align}
Q^{1}_{j} &= \max (x^{2}_j - C(1-H(D_j -x^{2}_j) ),0), \label{eqQ2}\\
P^{1}_{j} &= \max (x^{3}_j - C(1-H(D_j -x^{3}_j) ),0), \label{eqP2}\\
P^{2}_{j} &= \max \left( P^{1}_{j} - C \cdot H(Q^{1}_{j}- P^{1}_{j} -1),0 \right),  \label{eqP22}\\
P^{3}_{j} &= \max ( P^{2}_{j} - C \cdot \sum_{k=j+1}^{d}(\delta(q_j , q_k) \wedge \nonumber\\
&~~~~~~~ H(P^{2}_{j}-Q^{1}_k -1) ),0 ),  \label{eqP32}\\
P^{4}_{j} &= \max (P^{3}_{j} - C \cdot \sum_{k=1}^{j-1}(\delta(q_j , q_k) \wedge \delta(Q^{1}_j, P^{3}_k) ),0 ), \label{eqP42}
\end{align}
\begin{align}
Q^{2}_{j} &= \max ( Q^{1}_{j} - C \cdot \sum_{k=j+1}^{d}(\delta(q_j , q_k) \wedge \nonumber\\
&~~~~~~~ H(Q^{1}_j - Q^{1}_k -1) ),0 ),  \label{eqQ22}\\
P^{5}_{j} &= \max (P^{4}_{j} - C \cdot \sum_{k=1, k\neq j}^{d}  (\delta(q_j , q_k) \wedge \delta(Q^{2}_j, Q^{2}_k)\wedge \nonumber\\
&~~~~~~~ H(P^{4}_{j}-Q^{2}_{j} -1)) ,0 ), \label{eqP52}\\
P^{6}_{j} &= \max (P^{5}_{j} - C\cdot \delta(Q^{2}_{j}, 0) ,0), \label{eqP62}\\
Q^{3}_{j} &= \max \left( Q^{2}_{j} +1- C (1-\delta(P^{6}_{j} , 0) \wedge H(Q^{2}_{j} -1) ),0 \right) \nonumber\\
&~~~~ + \max \left( Q^{2}_{j} - C (\delta(P^{6}_{j} , 0) \wedge H(Q^{2}_{j} -1) ),0 \right). \label{eqQ32}
\end{align}
\\
Step 4.  
For the inward edge at the $\ell$-th position, find the position of the $k$-th adjacent inward edge (child) and its outward edge in $E(T)$, 
where $k \in \{1, \ldots, n\}$.  
The variables ${G}^{k}_{\ell}$ and ${G'}^{k}_{\ell}$ are equal to $i$ if and only if the inward edge and the outward edge, resp,. of the $k$-th child of the $\ell$-th edge has index $i$ in $E(T)$. ${G}^{k}_{\ell}$ and ${G'}^{k}_{\ell}$ are $0$ if $\ell$ corresponds to an outward edge. The variable ${G''}^{k}_{\ell}$ identifies the position for the insertion of an edge after the last child. ${H}^{0}_{\ell j}$ and ${H'}^{0}_{\ell j}$ are used to insert leaves before the first child of a node. 
\begin{align}
F_{{\ell} i} &= \max(\sum_{k=1}^{i} A_{{\ell}k} /2 -C\delta(A_{\ell i, 0}), 0), \label{eqE2}\\
G^{k}_{\ell} &= \sum_{i=1}^{2n} i \cdot \delta(F_{{\ell} i} + 1/2 , k),  \label{eqG2}\\
{G'}^{k}_{\ell} &= \sum_{i=1}^{2n} i \cdot \delta(F_{{\ell} i} , k),  \label{eqG'2}\\
{G''}^{k}_{\ell} &= G^{k}_{\ell}  + \max ( {G'}^{k-1}_{\ell}+1 - C (1-\delta(G^{k}_{\ell} , 0) ), 0),   \label{eqG''2}\\
{J}^{0}_{\ell j} &=q_{j}+1, 
{J}^{k}_{\ell j} = {G''}^{k}_{\ell},   \label{eqH2}\\
{J'}^{0}_{\ell j} &= q_j, 
{J'}^{k}_{\ell j} = {G'}^{k}_{\ell}. \label{eqH'2}
\end{align}
Step 5. 
Find the positions before which new inward and outward edges to be inserted by using the variables  $L_{j}$ and $L''_{j}$, respectively. 
\begin{align}
L_{j} &= \sum_{k=0}^{n} \sum_{{\ell}=0}^{2n} \max ( {J}^{k}_{\ell  j}  - C (1-\delta({\ell} , q_j) \wedge \delta(k , Q^{3}_{j}) ) , 0), \label{eqL2}\\
L'_{j} &= \sum_{k=0}^{n} \sum_{{\ell}=0}^{2n} \max ( {J'}^{k}_{\ell  j}  - C (1-\delta({\ell} , q_j) \wedge \delta(k , P^{6}_{j})) , 0),  \label{eqL'2}\\
{L''}_{j} &= \max(L_{j}-1 - C(1-H ( L_{j}-{L'}_j) ),0) + \nonumber\\
&~~~~~~~\max({L'}_{j} - C \cdot H ( L_{j}-{L'}_j) ,0) + 1.  \label{eqL''2}
\end{align}
Step 6. 
Arrange $L_{j}$ in the ascending order, and then adjust the corresponding entries  ${L''}_{j}$ and $x^{4}_j$ accordingly.   
\begin{align}
R_{j} &= \max (\sum_{k=1}^{d} H (L_{j}-L_{k} )- \sum_{k=j}^{d} \delta(L_{k}, L_{j}), 0), \label{eqR2}\\
R'_{j} &= \sum_{k=1}^{d}  \max (L_{k} -C(1-\delta(k, R_{j}+1) ), 0),  \label{eqR'2}\\
R''_{j} &= \sum_{k=1}^{d}  \max (L''_{k} -C(1-\delta(k, R_{j}+1) ), 0),  \label{eqR''2}\\
x'^{4}_{j} &= \sum_{k=1}^{d}  \max (x^{4}_{k} -C(1-\delta(k, R_{j}+1) ), 0).  \label{eqX'42}
\end{align}
Step 7. 
Determine the increment and new positions of the entries of $E(T)$ in $E(U)$ due to the insertions. 
\begin{align}
M_{i} &= \sum_{j=1}^{d} \left( \delta(R'_{j}, i) + \delta({R''}_{j}, i) \right),   \text{~for~}   i \in \{1, 2, \ldots, 2n+1\}, \label{eqM2}\\
M'_{i} &= i + \sum_{k=1}^{i} M_{i},   \label{eqM'2}\\
N^{k}_{i}&= \max ( t_i - C (1-\delta(M'_{i} , i+k-1) ), 0),\nonumber\\
&~~~~~~~ \text{~for~} k \in \{1, \ldots, 2d+1\},  \label{eqN2}\\
N'_{h} &= \sum_{h=i+k-1} N^{k}_{i}, \text{~for~} h \in \{1, \ldots, 2n+ 2d\}. \label{eqN'2}
\end{align}
Step 8. 
Determine the positions of the new inward and outward edges in $E(U)$. 
\begin{align}  
S_{j} &= R'_{j} + 2(j-1) - \sum_{k=1}^{j-1}  H(R''_{k}-{R'}_{j} ) +
\sum_{k=1}^{j-1} \delta({R''}_{k}, R'_{j}) ,  
\label{eqS2}\\
S'_{j} &= R''_{j} +2d - \sum_{k=j+1}^{d}  H(R'_{k}-R''_{j} ) -
\sum_{k=1}^{d}  H(R''_{k}-R''_{j} ) +\nonumber\\
&~~~~~~~
\sum_{k=1}^{j-1} \delta(R''_k, R''_j). \label{eqS12}
\end{align}
Step 9. Arrange $S_j$ and $S_j'$ in the ascending order, and then adjust the corresponding labels of the new edges. 
\begin{align}
V_{k} &= S_k, 1 \leq k \leq d, V_{k} = S'_{k-d}, d+1 \leq k \leq 2d, \label{eqV2}\\ 
V'_{k} &= {x'}^{4}_k, 1 \leq k \leq d, V'_{k} = {x'}^{4}_{k-d} +m, d+1 \leq k \leq 2d, \label{eqV'2}\\ 
W_{k} &= \max (\sum_{r=1}^{2d} H (V_{k}-V_{r} )- \sum_{r=k}^{2d} \delta(V_{r}, V_{k}), 0), \label{eqW2}\\
W'_{k} &= \sum_{r=1}^{2d}  \max (V_{r} -C(1-\delta(r, W_{k}+1) ), 0),  \label{eqW'2}\\
W''_{k} &= \sum_{r=1}^{2d}  \max (V'_{r} -C(1-\delta(r, W_{k}+1) ), 0). \label{eqW''2}
\end{align}
Step 10. 
Insert the new inward and outward edges. 
\begin{align}
Z^{k}_{i}&= \max ( W''_k - C (1-\delta(W'_{k} , i+k-1) ), 0),\nonumber\\
&~~~~~~~ \text{~for~} k \in \{1, \ldots, 2d\}, i \in \{1, \ldots, 2n+1\},  \label{eqZ2}\\
Z'_{h} &= \sum_{h=i+k-1} Z^{k}_{i}, \text{~for~} h \in \{1, 2, \ldots, 2(n+d)\}. \label{eqGZ'}
\end{align}
Finally obtain the Euler string of the desired tree as follows:
\begin{align}
u_h &= N'_h + Z'_h, \text{~for~} h \in \{1, 2, \ldots, 2(n+d)\}. \label{eqy2} 
\end{align}
%
Notice that all the equations involve maximum function, Heaviside function, $\delta$
 or $[a \geq \theta]$ function which can be simulated by ReLU activation function
 by using \comblue{Theorem~1 by Kumano and Akutsu~\cite{KA2022} and Proposition~1 by Ghafoor and Akutsu~\cite{MT2024}.} 
The number of variables in these equations is $\mathcal{O}(n^3)$. 
Therefore we can construct a  TI$_d$-generative ReLU with size 
$\mathcal{O}(n^{3})$ and constant depth.
\end{proof}
\begin{example}
\label{exa:ins}
Reconsider the rooted tree $T$ with 
$E(T)=3, 2, 7, 2, 4, 9, 7, 4, 9, 8$, as shown in Figure~\ref{fig:ET}(a),  
$d = 4$. 
We discussed, in detail, the process of insertion to obtain  $E(U)
=1, 6, 5, 3, 2, 7, 4$, $2, 4, 9, 3, 8, 7, 4, 9, 9, 8, 10$ by using 
$x=1, 0, 3, 0, 2, 4, 1, 1, 3, 2, 5, 1, 4, 1, 3, 5$ in Fig.~\ref{fig:in_N}, and demonstrate the same by using 
Eqs.~(\ref{eqq'2})-(\ref{eqy2}) as follows. \\

\begin{figure*}[t!]
	\centering
	\includegraphics[scale = 0.6,  trim = 11cm 0cm 12cm 0cm]{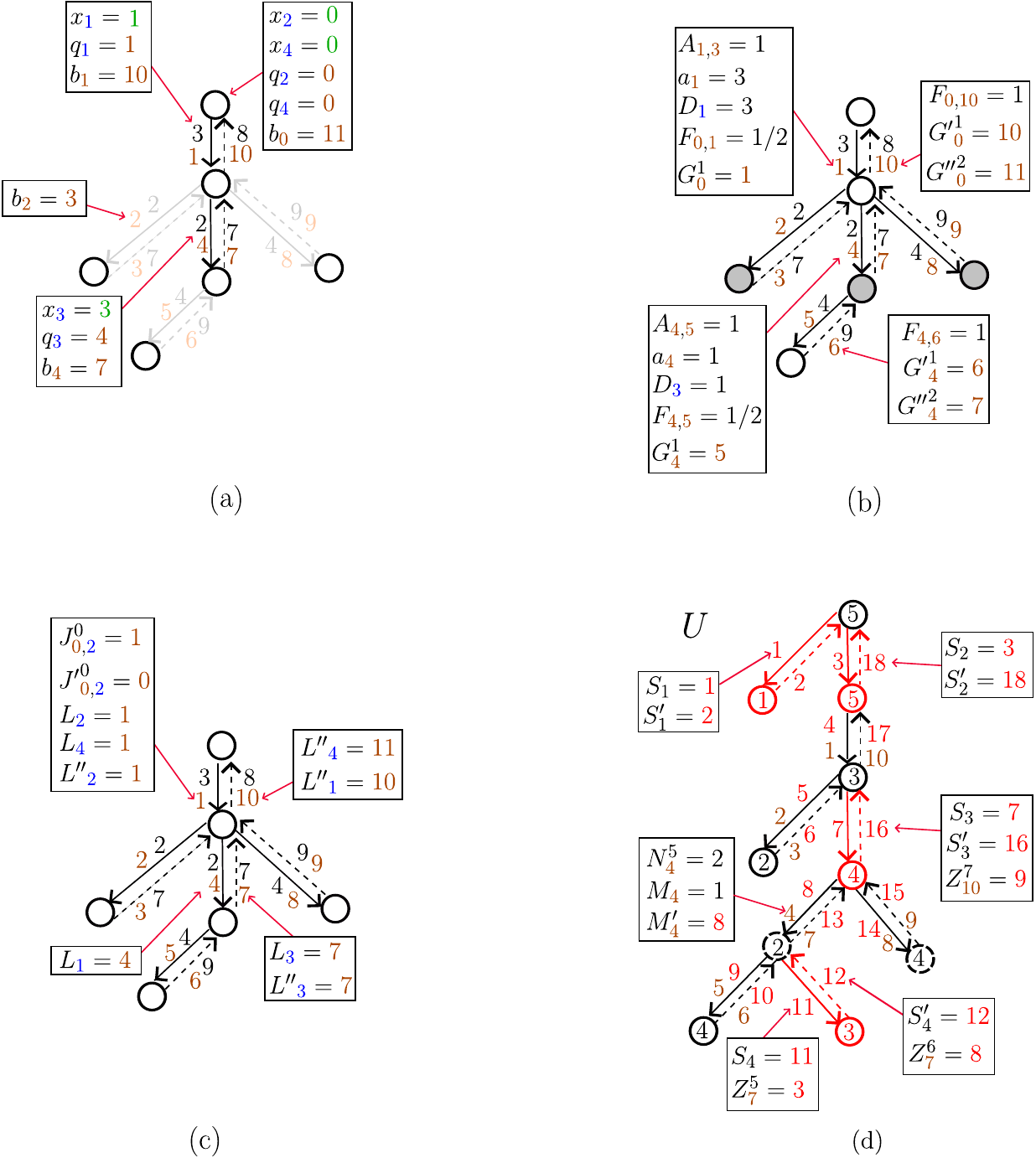}
	\caption{An illustration of the variables used in Theorem~\ref{thm:EInn}.}\label{fig:Ins}
\end{figure*}
Step 1. Determine the positions of inward and outward edges. 

\begin{longtable}{c p{16cm}}\addtocounter{table}{-1}
$x^1_j$ & Specify the node of $x^1_j$ for insertion with bounds $x^2_{j}$ and $x^3_{j}$ on the children and insertion value $x^4_{j}$.  
In this case $x=1, 0, 3, 0, 2, 4, 1, 1, 3, 2, 5, 1, 4, 1, 3, 5$ as depicted in 
Fig.~\ref{fig:in_N}. \\
%
%
\multicolumn{2}{l}{$q_{ji}, w_{\ell i}$ are explained in Examples~\ref{ex:in} and~\ref{ex:out}, respectively.} \\
$q_j$  & A variable that gives the position of the inward edge of $x^1_j$ in $E(T)$, where we consider $q_j = 0$ for $x^1_j = 0$ and so $q_2=q_4=0$.  
For example, the inward edges of $x^1_1=1$ and $x^1_3=3$ have positions $1$ and $4$, resp., in 
$E(T)$, and therefore $q_1=1$ and $q_3=4$ as depicted in Fig.~\ref{fig:Ins}(a).\\
$b_{\ell}$  &A variable that stores the position of the outward edge corresponding to the inward edge, if any, at the $\ell$-th position of the Euler string, e.g.,  
$b_{1}=10$ since the inward edge at the 1st position has the outward edge at the 10th position as shown in Fig.~\ref{fig:Ins}(a). 
Similarly, $b_{0}=11= 2n+1$ (by default),  $b_{2}=3$, $b_{4}=7$, $b_{5}=6$, $b_{8}=9$, and all other values are zero.
\end{longtable}
\noindent 
Step 2. Determine the number of children of $x^1_j$.
\begin{longtable}{c p{16cm}}\addtocounter{table}{-1}
$A_{\ell i}$ & A binary variable which is one if the $i$-th inward or outward edge is adjacent with 
the $\ell$-th inward edge in the directed $T$, e.g.,  $A_{1,3} =1$ as the outward edge at 3rd position in the directed $T$ is adjacent with the inward edge at the 1st position as shown in Fig.~\ref{fig:Ins}(b).
Similarly, $A_{0,1} =A_{0,10} = 1$ (by default), $A_{1,2} = A_{1,4} =A_{1,7} =A_{1,8} =A_{1,9} =A_{4,5} =A_{4,6} =1$.\\
$a_{\ell}$  & A variable that gives the number of descendant inward edges that are adjacent with the $\ell$-th inward edges, e.g., 
$a_1=3$ as there are three inward edges at the positions $2, 4, 8$ that are adjacent with the 
edge at the 1st position as shown in Fig.~\ref{fig:Ins}(b).
In this case, $a_0=1$, and $a_4=1$. All other values are zero. \\
$D_j$  &This variable gives the number of children of $x^1_j$, e.g., 
when $x^1_1=1$
$D_1=3$ as $a_{1} = 3$, and there is an inward edge at the 1st position of $E(T)$. 
Similarly for $x^1_2=x^1_4=0$ and $x^1_3=3$, we have $D_2=D_4= 1$ and $D_3=1$, respectively.
The children of $x^1_1=1$ are shown in gray in Fig.~\ref{fig:Ins}(b).
\end{longtable}
\noindent
Step 3. Refine the invalid lower and upper bounds. 
\begin{longtable}{c p{16cm}}\addtocounter{table}{-1}
$Q^1_{j}$ & A variable that sets $x^2_j:=0$ if $x^2_j>D_j$ following the refinement (i) of 
Table~\ref{tab:reset}. 
This means that the lower bound is set to $0$ if it is greater than the number of children. 
In this case, $Q^1_{2}=0$ because $x^2_2=4>D_2=1$. Whereas $Q^1_{1}=x^2_1=2$, $Q^1_{3}=x^2_3=1$ and $Q^1_{4}=x^2_4=1$.\\
$P^1_{j}$ & A variable that sets $x^3_j:=0$ if $x^3_j>D_j$ following the refinement (ii) of 
Table~\ref{tab:reset}. 
This means that the upper bound is set to $0$ if it is greater than the number of children. Here, $P^1_{2}=P^1_{3}=0$ because $x^3_2=2>D_2=1$ and $x^3_3=5>D_3=1$.  Whereas $P^1_{1}=x^3_1=3$ and $P^1_{4}=x^3_4=1$.\\
$P^2_{j}$ & A variable that sets the upper bound $P^1_{j}:=0$
following the refinement (iii) of 
Table~\ref{tab:reset}. 
This means that the upper bound is set to $0$ if it is smaller than the lower bound. Here, $P^2_{j}=P^1_{j}$. \\
$P^3_{j}$ & A variable that sets $P^2_{j}:=0$
following the refinement (iv) of 
Table~\ref{tab:reset}.
Here, $P^3_{j}=P^2_{j}$. \\
$P^4_{j}$ & A variable that sets $P^3_{j}:=0$ 
following the refinement (v) of 
Table~\ref{tab:reset}.
Here, $P^4_{j}=P^3_{j}$.\\
$Q^2_{j}$ & A variable that sets $Q^1_{j}:=0$ 
following the refinement (vi) of 
Table~\ref{tab:reset}.
Here, $Q^2_{j}=Q^1_{j}$. \\
$P^5_{j}$ & A variable that sets $P^4_{j}:=0$ {following the refinement (vii) of Table~\ref{tab:reset}. In $P^5_{j}$, $\delta(q_j , q_k)$ (resp., $\delta(Q^{2}_j, Q^{2}_k),  H(P^{4}_{j}-Q^{2}_{j} -1)$ ) correspond to $x_j^1= x_k^1$ (resp., $x_j^2 = x_k^2, x_j^3 > x_j^2$) of the refinement (vii). If these three are equal to 1 then $P_j^4$ which is refined form of $x_j^2$ becomes $0$}.  In this case, $P^5_{j}=P^4_{j}$. \\
$P^6_{j}$ & A variable that sets $P^5_{j}:=0$ if $Q^2_{j}=0$
following the refinement (viii) of 
Table~\ref{tab:reset}. 
This means that the upper bound is set to $0$ if the lower bound is 0. 
Here, $P^{6}_{j}=P^5_{j}$. \\
$Q^3_{j}$ & A variable to compute $Q^2_{j}+1$ if $P^6_{j}=0$ 
following the refinement (ix) of 
Table~\ref{tab:reset}.
Here, $Q^{3}_{3}=2$ as $P^6_{3}=0$, whereas $Q^{3}_{j}=Q^{2}_{j}$ for $j=1,2,4$.\\
\end{longtable}
\noindent
Step 4. Identify the position of the $k$-th child of a given node. 
\begin{longtable}{c p{16cm}}\addtocounter{table}{-1}
$F_{\ell i}$ & 
$F_{\ell i} = k$ (resp., $F_{\ell i} = k-{1/2}$) represents that the outward (resp.,  inward) edge of the $k$-th child of the ${\ell}$-th inward edge has index $i$,
e.g., $F_{0,10}=1$ and $F_{0,1} =1/2$, show that the outward edge and the inward edge of the first child of the root have positions $10$ and $1$, respectively, as Fig.~\ref{fig:Ins}(b).  
Similarly, $F_{1, 3} =1$, $F_{1, 2} =1/2$, $F_{1, 7} =2$, $F_{1, 4} =3/2$, $F_{1, 9} =3$, $F_{1, 8} =5/2$, $F_{4, 6}=1$, $F_{4, 5}=1/2$  and all other values of this variable are $0$. \\
${G}^k_{\ell}$  & It gives the position of the inward edge of the $k$-th child of the inward edge at the ${\ell}$-th position, e.g., 
$G_{0}^1=1$ and $G_{4}^1=5$ show that the inward edges of the $1$st child of the root and the inward edge at $4$ are $1$ and $5$, resp., as shown in Fig.~\ref{fig:Ins}(b).
Similarly, $G_{1}^1=2, G_{1}^2=4, G_{1}^3=8, G_{4}^1=5$.\\
${G'}^k_{\ell}$ & It gives the position of the outward edge of the $k$-th child of the inward edge with position ${\ell}$, e.g., 
${G'}_{0}^1=10$ and ${G'}_{4}^1=6$ show that the outward edges of the $1$st child of the root and inward edge at $4$ are $10$th and $6$th positions, resp., as shown in the Fig.~\ref{fig:Ins}(b).
Similarly ${G'}_{1}^1=3, {G'}_{1}^2=7, {G'}_{1}^3=9, {G'}_{4}^1=6$.\\
${G''}^k_{\ell}$ & This variable determines the position of the child, if any, to be inserted to the right of the children of the inward edge at the $\ell$-th position, when 
$k$ is equal to $D(\ell)  +1 $. 
When $k < D(\ell)$, this variable determines the position of the $k$-th child, whereas
${G''}^k_{\ell}$ will be ignored  when $k >D(\ell)  +1$, e.g., 
when $\ell = 0$, $D(0) = 1$, ${G''}^2_{0} = 11$ means that an inward edge as a child of the root on the right will be inserted at the 11th position as shown in Fig.~\ref{fig:Ins}(b), 
 ${G''}^1_{0} = 1$ is the position of the 1st child, and  
 ${G''}^3_{0} = 1$ will be ignored. 
Similarly, ${G''}^4_{1}=10$ and ${G''}^2_{4}=7$.\\
${J}^k_{\ell j}$ & For $k = 0$, this variable determines the position of the inward edge of the child, if any, to be inserted to the left of children of the inward edge at the $\ell$-th position
when $\ell$ is the position of the inward edge of $x_j^1$. 
For example, ${J}^0_{0, 2}=1$ means that an inward edge as a child of the root will be inserted before the 1st position as shown in Fig.~\ref{fig:Ins}(c). Also, ${J}^0_{1, 1}=2$, ${J}^0_{4, 3}=5$, and ${J}^0_{0, 4}=1$. For $k\geq1$ and any $j$,  ${J}^k_{\ell j} = {G''}^k_{\ell}$. 
For example, ${J}^4_{1, 2}={G''}^4_{1}=10$ .\\
${J'}^k_{\ell j}$ & For $k = 0$, this variable determines the position of the outward edge of the child, if any, to be inserted to the left of children of the inward edge at the $\ell$-th position
when $\ell$ is the position of the inward edge of $x_j^1$. 
For example, ${J'}^0_{0, 2}=0$ means that an outward edge as a child of the root will be inserted after the 0th position as shown in Fig.~\ref{fig:Ins}(c). Also, ${J'}^0_{1, 1}=1$, ${J'}^0_{4, 3}=4$, and ${J'}^0_{0, 4}=0$. For $k\geq1$ and any $j$,  ${J'}^k_{\ell j} = {G'}^k_{\ell}$. For example, ${J'}^1_{4, 2}={G'}_{4}^1=6$\\
\end{longtable}
\noindent
Step 5. Identify the insertion positions. 
\begin{longtable}{c p{16cm}}\addtocounter{table}{-1}
$L_{j}$ & This variable identifies the position for the new inward edge to be inserted corresponding to $x^1_j$, e.g., 
$L_1 = 4$ means that the first new inward edge will be inserted at the 3rd position (before the 4th entry of $E(T)$). 
Similarly, we get $L = [4, 1, 7, 1]$, as shown in the
 Fig.~\ref{fig:Ins}(c).\\
${L'}_{j}$  & This variable keeps the valid ${J'}^k_{\ell j}$, i.e., when 
the position of the inward edge of $x_j^1$ is $\ell$ and $k$ is equal to the refined, if necessary, upper bound corresponding to $x_j^3$. Thus $L' = [9, 0, 4, 10]$. \\
${L''}_{j}$ & 
This variable identifies the position of the new outward edge to be inserted corresponding to $x^1_j$. {If the index of inward edge is greater or equal to the index of outward edge, $\max(L_{j}-1 - C(1-H ( L_{j}-{L'}_j) ),0)= L_{j}-1$ and 
$\max({L'}_{j} - C \cdot H ( L_{j}-{L'}_j) ,0) + 1=1$. Therefore ${L''}_{j}= L_{j}$, which is required. 
If the index of inward edge is smaller than the index of outward edge, 
$\max(L_{j}-1 - C(1-H ( L_{j}-{L'}_j) ),0)= 0$ and 
$\max({L'}_{j} - C \cdot H ( L_{j}-{L'}_j) ,0) + 1={L'}_{j} +1$. Therefore ${L''}_{j}={L'}_{j} +1$ , which is required.}
e.g., 
$L''_1 = 10$ means that the new outward edge corresponding to $x^1_1$ will be inserted at the 9th position (before the 10th entry of $E(T)$). 
Similarly, we get $L'' = [10, 1,7, 11]$, as shown in the Fig.~\ref{fig:Ins}(c).
\end{longtable}
\noindent
Step 6. Arrange $L_{j}$ in the ascending order, and adjust  ${L''}_{j}$, $x^{4}_j$ accordingly.   
\begin{longtable}{c p{16cm}}\addtocounter{table}{-1}
$R_{j}$ & This variable identifies the position of $L_j$ in the arranged $L$, e.g., 
$R_1 = 2$ means that in the ascending order, $L_1$ will appear at the 2nd position. 
So $R=[2, 0, 3, 1]$. \\
${R'}_{j}$  & This variable arranges the value of $L_j$ in the ascending order w.r.t. $R_j$. In this case, $ R' = [1, 1, 4, 7]$. \\
${R''}_{j}$ & This variable arranges the value of $L''_j$ w.r.t. ${R}_{j}$. 
So, $ R'' = [1, 11, 10, 7]$. \\
${x'}^4_{j}$ & This variable arranges the value of ${x}^4_{j}$ w.r.t. ${R}_{j}$. 
So, $ x'^4 = [1, 5, 4, 3]$.
\end{longtable}
\noindent
Step 7. Determine the increment and new positions of the entries of $E(T)$ in $E(U)$ due to the insertions.
\begin{longtable}{c p{16cm}}\addtocounter{table}{-1}
$M_{i}$ & This variable identifies the number of insertions before the $i$-th position that corresponds to some  $x^k_j$, 
e.g., $M_{4}=1$ means that there will be one insertion before the 4th position of $E(T)$ as shown in Fig.~\ref{fig:Ins}(d).
The non-zero values are  $M_{1}=3$, $M_{7}=2$, $M_{11}=1$. \\
${M'}_{i}$  & Determines the new position of the $i$-th entry of $E(T)$ by summing up the increments before it, e.g., 
$M'_4 = 8$ since there are four increments $M_1 = 3$ and $M_4 = 1$, which implies that the 4th entry of $E(T)$ will be at the 8th entry of $E(U)$, as shown in the Fig.~\ref{fig:Ins}(d). 
In this case $ M' = [4, 5, 6, 8, 9, 10, 13, 14, 15, 17]$. \\
${N}^k_{i}$ & This variable links the $i$-th position with its increment and label, e.g., 
$N_4^5=2$ means that the 4th entry of $E(T)$ has an increment of $k-1=5-1=4$ and label $2$, as shown in the Fig.~\ref{fig:Ins}(d). 
Similarly, $N_1^4=3, N_2^4=2, N_3^4=7, N_5^5=4, N_6^5=9, N_7^7=7, N_8^7=4, N_9^7=9, N_{10}^{8}=8$. \\
${N'}_{h}$ & Finally, this variable lists the $i$-th entry of $E(T)$ with an increment $k$ at the position $h$ if $h = i+k-1$. 
In this case $N' = [0, 0, 0, 3, 2, 7, 0, 2, 4, 9, 0, 0, 7, 4, 9, 0, 8, 0]$.
\end{longtable}
\noindent
Step 8. Determine the positions of the new inward and outward edges in $E(U)$. 
\begin{longtable}{c p{16cm}}\addtocounter{table}{-1}
$S_{j}$ & It gives the position of the new inward edge corresponding to $R'_j$, 
e.g., $S_j = 1$ means that an inward edge corresponding to $R'_1 = 1$ will be inserted at the 1st position of $E(U)$. 
Thus $S = [1, 3, 7, 11]$, as shown in the Fig.~\ref{fig:Ins}(d).  \\
${S'}_{j}$ & It gives the position of the new outward edge corresponding to $R'_j$. 
In this case $S' = [2, 18, 16, 12]$, as shown in the Fig.~\ref{fig:Ins}(d).  
\end{longtable}
\noindent
Step 9. Arrange $S_j$ and $S_j'$ in the ascending order, and then adjust the corresponding labels of the new edges. 
\begin{longtable}{c p{16cm}}\addtocounter{table}{-1}
$V_{k}$ & This variable concatenates $S$ and $S'$, 
and so $ V = [1, 3, 7, 11, 2, 18, 16, 12]$. \\
${V'}_{k}$ & This variable lists the label of the new inward edge or outward edge corresponding to $V_k$.  In this case $ V' = [1, 5, 4, 3, 6, 10, 9, 8]$.\\
$W_{k}$ & This variable identifies the position of $V_k$ in the arranged $V$. 
In this case $ W = [0, 2, 3, 4, 1, 7, 6, 5]$. \\
${W'}_{k}$ &  This variable arranges $V_{k}$ w.r.t. $W_{k}$. 
In this case, $W' = [1, 2, 3, 7, 11, 12, 16, 18]$. \\
${W''}_{k}$ & This variable arranges the value of the label ${V'}_{k}$ w.r.t. $W_{k}$. So, $ W'' = [1, 6, 5, 4, 3, 8, 9, 10]$. 
\end{longtable}
\noindent
Step 10. Insert the new inward and outward edges. 
\begin{longtable}{c p{16cm}}\addtocounter{table}{-1}
${Z}^k_{i}$ & The variable $Z^{k}_{i} = W''_k$ if and only if $W''_k$ will be inserted as a new label at the $(i+k-1)$-th position in the resultant string, e.g., $Z_7^5=3$ means that the 5th insertion has label $3$ is performed before $7$th position of $E(T)$, and at $i+k-1=11$-th position in the resultant string as shown in the Fig.~\ref{fig:Ins}(d). Also, $Z_1^1=1, Z_1^2=6, Z_1^3=5, Z_4^4=4, Z_7^6=8, Z_{10}^7=9, Z_{11}^8=10$, and all other values are $0$.\\
${Z'}_{h}$ & This variable lists the label of the inward edge or outward edge 
$W''_k$ at the position 
$h$ if $h = i+k-1$. Thus, $ Z' = [1, 6, 5, 0, 0, 0, 4, 0, 0, 0, 3, 8, 0, 0, 0, 9, 0, 10]$. \\
$u_h$ & This variable sum up the entries of $N'_h$ and $Z'_h$ to obtained the resultant Euler string which, in this case, is $E(U) = [1, 6, 5, 3, 2, 7, 4, 2, 4, 9, 3, 8, 7, 4, 9, 9, 8, 10]$.
\end{longtable}
\end{example}
\begin{proof}[\proofname\ of Theorem 4]
Consider two trees $T$ and $U$ over $\Sigma$ such that 
$E(U)$ is obtained from $E(T)$ by performing $d$ edit operations based on an appropriate $ x = x_1, \ldots, x_{7d}$. 
The edit operations on $E(T)$ to obtain $E(U)$ can be performed in the following  steps, where 
$B$ and $C$ are large numbers with $C \gg B \gg \max(m,n)$.\bigskip\\
Step 1. 
Convert the input $x_j$ into integers. 
 The variables $P'_j $ and $Q'_j$ store the integer values corresponding to $x_j$, i.e.,  
 $P'_j = i$ where $i \in \{0, \ldots, n\}$,  (resp., $Q'_j = \ell$ where $\ell  \in \Sigma$) if and only if  $P_i^j = 1$ (resp., $Q_\ell^i = 1$).
\begin{align} 
P_{i}^{j} &= \left[ (i-1)/n \leq x_{j} \leq i/ n \right] -  \delta(x_j, (i-1)/n),  
 \nonumber\\
&~~~~ \text{~for~} i \in \{0, 1, \ldots, n\}, j \in \{1, \ldots, 2d, 3d+1, \ldots, 6d\}, \label{eqP5}\\
Q_{\ell}^{j} &= 
\begin{cases}
\left[ (\ell-1)/m \leq x_{j} \leq {\ell}/ m \right] ~~~~~~~~~\text{~if~}  {\ell} =1, \\[5pt]
\left[ (\ell-1)/m \leq x_{j} \leq {\ell}/ m \right] -   ~~~~~\text{~if~}  
{\ell} \in \{2, \ldots, m\},  \\
~~\delta(x_j, (\ell-1)/m)  , \label{eqQ6}
\end{cases}
 \nonumber\\
&~~~~ \text{~for~}  j \in \{2d+1, \ldots, 3d, 6d+1, \ldots, 7d\},\\
P'_{j} &= \sum^{n}_{i=0} p_{i}^{j} \cdot i  \text{~for~} 
 j \in \{1, \ldots, 2d, 3d+1, \ldots, 6d\}, \label{eqP'5}\\
Q'_{j} &=\sum^{m}_{{\ell}=1}  q_{\ell}^{j} \cdot {\ell}  \text{~for~}  j \in \{2d+1, \ldots, 3d, 6d+1, \ldots, 7d\}. \label{eqQ'5}
\end{align}
Step 2. 
Ignore $x_j$, $1 \leq j \leq 2d$, which are zero or repeated  to avoid redundant deletion and substitution operations. 
Similarly, ignore $x_j$, $1 \leq j \leq 2d$ which has index greater than $d$ among the non-zero and non-repeated positions. 
For  $x_{j=3d+h}$, $1 \leq h \leq d$ set weights $d-h+1$.
Finally, identify the valid operation positions in $x$ with  index at most $d$ using 
$R'_j$.  
\begin{align}
R_{j} &= 
\begin{cases}
\max (1- (\delta(P'_j, 0) + \sum^{j-1}_{k=1} \delta(P'_j, P'_k) ), 0)    \\~~~\text{~for~}  j \in \{1, \ldots, d\}, \\[10pt]
\max (1- (\delta(P'_j, 0) + \sum^{j-1}_{k=d+1} \delta(P'_j, P'_k) ), 0)    \\~~~\text{~for~}  j \in \{d+1, \ldots, 2d\},\\[10pt]
d+1- \sum^{j}_{k=3d+1}  H(P'_k) ~~~\text{~for~}  j \in \{3d+1, \ldots, 4d\}  \label{eqR5},
\end{cases}\\[2pt]
R'_{j} &=
\begin{cases}
\left[ \sum^{j}_{k=1} R_{k} \geq d+1 \right]  \\ ~~~\text{~for ~}  j \in \{1, \ldots, 2d\}, \\[15pt]
\left[ \sum^{2d}_{k=1} R_{k} +  R_{j} \geq d+1 \right]  \\ ~~~\text{~for~}  j \in \{3d+1, \ldots, 4d\}. \label{eqR'5}
\end{cases}
\end{align}
Step 3. 
Set the value of the redundant positions and their corresponding bounds and values $B$.
\begin{align}
S_{j} &= \max ( B - C (1 - R'_{j}) , 0) + \max (P_j' -\nonumber\\
&~~~~ C \cdot R'_{j}) , 0)    \text{~for~}  j \in \{1, \ldots, 2d, 3d+1, \ldots, 4d\}, \label{eqS5}\\
S'_{j} &= 
\begin{cases}
\max ( B - C (1 - \delta(S_{j-d}, B) ) , 0) + \max ( Q'_j -\\ ~~~ C \delta(S_{j-d}, B)  , 0) \text{~for~}  j \in \{4d+1, \ldots, 5d\}, \\
\max ( B - C (1 - \delta(S_{j-2d}, B) ) , 0) + \max ( Q'_j -\\ ~~~ C \delta(S_{j-2d}, B)  , 0) \text{~for~}  j \in \{5d+1, \ldots, 6d\},
\\
\max ( B - C (1 - \delta(S_{j-3d}, B)) , 0) + \max ( Q'_j - \\ ~~~C \delta(S_{j-3d}, B)  , 0) 
 \text{~for~}  j \in \{6d+1, \ldots, 7d\}. \label{eqS'5}
\end{cases}
\end{align}
{Finally, get the preprocessed input $x'_j$ as follows:}
\begin{align}
x'_{j} &= 
\begin{cases}
S_j ~~~\text{~for~}  j \in \{1, \ldots, 2d\}, \\
x_j ~~~\text{~for~}  j \in \{2d+1, \ldots, 3d\},\\
S_j ~~~\text{~for~}  j \in \{3d+1, \ldots, 4d\},\\
S'_j ~~~\text{~for~}  j \in \{4d+1, \ldots, 7d\}. \label{eqx'5}
\end{cases}
\end{align}
Step 4. 
Apply deletion operations on padded $E(T)$ by following Theorem~\ref{thm:Ednn} with 
$x'_{j}$, $j \in \{1, \ldots, d\}$ as an input to get $E(T')$. 
Apply substitution operations on $E(T')$ following Theorem~\ref{thm:Esnn} using $x'_{j}$, $j \in \{d+1, \ldots, 3d\}$, 
to get $E(T'')$. 
Apply insertion operations on $E(T'')$ using Theorem~\ref{thm:EInn} and $x'_{j}$, $j \in \{3d+1, \ldots, 7d\}$
to get $E(T''')$.
During substitution and insertion operations, replace Eq.~(\ref{eqr'3}) $r'_{ji} = t_i \cdot q_{ji}$ with $r'_{ji} = \max(t_i -C(1-\delta(q_{ji} , 1), 0)$. 
Similarly, replace Eq.~(\ref{eqr3}) $r_i = t_i \cdot p_i$ and Eq.~(\ref{eqz'3}) $z'_{ji}= t_i \cdot \sum_{{\ell}=1}^{2n} w'_{j{\ell}i}$  with 
$r_{i} = \max(t_i -C(1-\delta(p_{i} , 1), 0)$ and 
$z'_{ji}= \max(t_i -C(1-\delta(\sum_{{\ell}=1}^{2n} w'_{j{\ell}i} , 1), 0)$, respectively.
Finally, the required $E(U)$ can be obtained by trimming $B$s from $E(T''')$.
Notice that all equations involve maximum function, Heaviside function, $\delta$
 or $[a \geq \theta]$ function, and therefore due to Theorems~\ref{thm:Esnn},~\ref{thm:Ednn} and~\ref{thm:EInn}, there exists a 
 TE$_d$-generative ReLU network with size 
$\mathcal{O}(n^3)$ and constant depth.
\end{proof}
\begin{table}[h!]
\centering
\caption{Conversion table from real to integers.}
\footnotesize
\setlength{\tabcolsep}{2pt} %
\renewcommand{\arraystretch}{1.1} %

\begin{tabular}{|p{3.6cm}|p{3.6cm}|} %
\hline
\makecell[tc]{For positions $x_j$ \\  with $1 \leq j \leq 2d$ and ~\\$3d+1 \leq j \leq 6d$} & 
\makecell[tc]{For values $x_j$ \\ with $2d+1 \leq j \leq 3d$ and ~\\$6d+1 \leq j \leq 7d$} \\
\hline
$~~~~~~~~~~(-1/5,0] \rightarrow 0$ & $~~~~~~~~[0,1/10] \rightarrow 1$ \\
$~~~~~~~~~~(0,1/5] \rightarrow 1$  & $~~~~~~~~(1/10,2/10] \rightarrow 2$ \\
$~~~~~~~~~~(1/5,2/5] \rightarrow 2$ & $~~~~~~~~(2/10,3/10] \rightarrow 3$ \\
$~~~~~~~~~~(2/5,3/5] \rightarrow 3$ & $~~~~~~~~(3/10,4/10] \rightarrow 4$ \\
$~~~~~~~~~~(3/5,4/5] \rightarrow 4$ & $~~~~~~~~(4/10,5/10] \rightarrow 5$ \\
$~~~~~~~~~~(4/5,5/5] \rightarrow 5$   & $~~~~~~~~(5/10,6/10] \rightarrow 6$ \\
                          & $~~~~~~~~(6/10,7/10] \rightarrow 7$ \\
                          & $~~~~~~~~(7/10,8/10] \rightarrow 8$ \\
                          & $~~~~~~~~(8/10,9/10] \rightarrow 9$ \\
                          & $~~~~~~~~(9/10,10/10] \rightarrow 10$ \\
\hline
\end{tabular}
\label{tab:con}
\end{table}
\begin{example}
\label{exa:Uni} Reconsider the tree $T$ given in Fig.~\ref{fig:ET} with 
$E(T)=3, 2, 12, 2, 4, 14, 12, 4$, $14, 13$, $d = 3$, $m = 10$, and 
$x=0.3, 0, 0.38, 0, 0.46, 0.55, 0, 0.6, 0.88, 0.66, 0.75, 0, 0.55$, 
$0.87, 0.03, 0.02, 0.45, 0.09, 0, 0.7, 0.5$.   
We demonstrate the process of obtaining 
$E(U)$ by using Theorem~\ref{thm:Enn}. 
We explain the meaning of Eqs.~(\ref{eqP5})-~(\ref{eqx'5}), whereas the details of the  deletion, substitution and insertion operations can be followed from 
Examples~\ref{exa:Del},~\ref{exa:Sub} and~\ref{exa:ins}.

\begin{longtable}{c p{16cm}}\addtocounter{table}{-1}
${P}^j_i$ & Specify the interval for each position $x_j $, where 
$ j \in \{1, \ldots, 2d, 3d+1, \ldots, 6d\}$. 
${P}^j_i=1$ 
means that the $j$-th input lies in the $i$-th interval, i.e., the interval 
$( (i-1)/n , i/n]$. In this case ${P}^{2}_0={P}^{4}_0={P}^{12}_0$~$=
{P}^{15}_1={P}^{16}_1={P}^{18}_1={P}^{1}_2={P}^{3}_2=
{P}^{5}_3={P}^{6}_3={P}^{13}_3={P}^{17}_3=$~ ${P}^{10}_4=
{P}^{11}_4={P}^{14}_5=1$. All other values are zero. \\
${Q}^j_{\ell}$  & Specify the interval for each value $x_j$, where
 $j \in \{2d+1, \ldots, 3d, 6d+1, \ldots, 7d\}$.  
 ${Q}^j_{\ell}=1$ 
means that the $j$-th input lies in the ${\ell}$-th interval, i.e., 
for $\ell = 1$ (resp., $\ell \geq 2$), $x_j$ lies in $[0, \ell/m]$ (resp., $((\ell-1)/m, \ell/m]$). 
 In this case, ${Q}^{7}_1={Q}^{19}_1={Q}^{21}_5={Q}^{8}_6={Q}^{20}_7={Q}^{9}_9=1$, and all other values are zero.\\
$P'_{j}$  & Assigns each position $x_j$ an integer $i$ if $x_j$ belongs to the $i$-th interval, i.e., ${P}^j_i = 1$.
$P'_{1}=2$, $P'_{2}=0$, $P'_{3}=2$, $P'_{4}=0$, $P'_{5}=3$, 
$P'_{6}=3$, $P'_{10}=4$, $P'_{11}=4$, $P'_{12}=0$. \\
$Q'_{j}$  & Assigns each value $x_j$ an integer $\ell$ if $x_j$ belongs to the 
$\ell$-th interval, i.e., $Q_\ell^j = 1$. 
$Q'_{7}=1$, $Q'_{8}=6$, $Q'_{9}=9$, $Q'_{19}=1$, $Q'_{20}=7$, 
$Q'_{21}=5$.\\
$R_{j}$ &$R_{j}=1$ if $x_j$, $1 \leq j \leq 2d$, is a non-zero and non-repeated position.  
For $3d+1 \leq j \leq 4d$, $R_j$ is a weight from $\{d, d-1, \ldots, 1\}$ assigned to $x_j$ in the descending order. 
In this case $R_{1}=R_{5}=1$ for $1 \leq j \leq 2(3)$,
whereas for $3(3)+1 \leq j \leq 4(3)$, $R_{10}=3, R_{11}=2, R_{12}=1$.\\ 
$R'_{j}$ & The variable $R'_{j}=1$ if the position $x_j, 1 \leq j \leq 2d$ among the non-zero and non-repeated entries is at least $d+1$. 
Similarly, for $x_j, 3d+1 \leq j \leq 4d$, $R'_{j}=1$ if the sum of the number of non-zero and non-repeated position entries before $x_{3d+1}$ and weight $R_j$ is at least $d+1$. 
In this case, $R'_{10}=R'_{11}=1$. $R'_{j}=0$ for $x_j, 1 \leq j \leq 2d$.\\
%
$S_{j}$  & $S_j = B$ if $x_j$  has index at least $d+1$  among the positions, i.e., $S_{j}=B$ if $R'_{j}=1$ otherwise it is $P'_{j}$. In this case 
$S_{10}=S_{11}=B$. For all other values of $j$, $S_{j}=P'_{j}$\\
$S'_{j}$  & If $S_{j}=B$ for  $3d+1 \leq j \leq 4d$, then the corresponding bounds and values of $x_j$ are also set to $B$, i.e, $S'_{j} = B$. 
In this case 
$S'_{13}=S'_{14}=S'_{16}=S'_{17}=S'_{19}=S'_{20}=B$.\\
$x'_{j}$  &Gives the preprocessed input for edit operations; $x'=[2, 0, 2, 0, 3, 3, 1, 6, 9, B$, $B, 0, B, B, 1, B, B, 1, B, B, 5]$.
\end{longtable}

\noindent
Apply deletion operations on padded $E(T)$ to get $E(T')=3, 2, 4, 14, 12, 4, 14$, $13, B, B$, 
substitution operations to get  $E(T'')=3, 2, 6, 16, 12, 4, 14, 13, B, B$, and 
insertion operations to get  $E(T''')=B, B, B, B, 5, 3, 2, 6, 16, 12, 4, 14, 13, 15$, $B, B$. 
Finally, obtain $E(U)= 5, 3, 2, 6, 16, 12, 4, 14, 13, 15$ by trimming $B$s as shown in Fig.~\ref{fig:Uni_N}.  
\end{example}
\subsection*{Examples of Code Execution}
All codes are freely available at \url{https://github.com/MGANN-KU/TreeGen\_ReLUNetworks}. 
An explanation of the program codes is given below. \bigskip\\
The  file {\ttfamily Finding\_outward\_edges.py} contains an implementation of the proposed generative ReLU to find the indices and labels of outward edges of the given inward edges in the Euler string.\\
{\bf Input:} \\
		 {\ttfamily t} := Input Euler string of length {\ttfamily 2n}\\
		{\ttfamily m} := The size of the symbol set\\
		{\ttfamily x} := The string of length {\ttfamily d} to identify inward edges\\
{\bf Output:}\\  
		{\ttfamily y} := The Outward edges
	of {\ttfamily t} following {\ttfamily x}. \\
{\bf An example:} 
		{\ttfamily  t = 3, 2, 7, 2, 4, 9, 7, 4, 9, 8, \\
		d = 3,\\
		m = 5,\\
		x = 1, 3, 0, \\
		y = 10, 7, 0.}	\bigskip\\	
%
The  file {\ttfamily TS\_d.py}  contains an implementation of TS$_d$-generative ReLU  to generate Euler strings with a given Edit distance due to substitution. \\
{\bf Input:} \\
		 {\ttfamily t} := Input Euler string of length {\ttfamily 2n}\\
		{\ttfamily  d} := Edit distance\\
		{\ttfamily m} := The size of the symbol set\\
		{\ttfamily x} := The string of length {\ttfamily 2d} to identify substitution operation\\
{\bf Output:}\\  
		{\ttfamily u} := The Euler string obtained by applying substitution operation
	on {\ttfamily t} following {\ttfamily x} and has at most distance {\ttfamily 2d}. \\
{\bf An example:} 
		{\ttfamily  t = 3, 2, 7, 2, 4, 9, 7, 4, 9, 8,\\
		d = 3,\\
		m = 5,\\
		x = 1, 3, 1, 5, 1, 2,\\
		u = 5, 2, 7, 1, 4, 9, 6, 4, 9, 10.}		\bigskip\\
The  file {\ttfamily TD\_d.py}  contains an implementation of 
TD$_d$-generative ReLU  to
generate Euler strings with a given edit distance due to deletion 
operation only.  \\
{\bf Input:} \\
		 {\ttfamily t} := Input Euler string of length {\ttfamily 2n}\\
		{\ttfamily  d} := Edit distance\\
		{\ttfamily m} := The size of the symbol set\\
		{\ttfamily x} := The string of length {\ttfamily d} to identify deletion operation\\
{\bf Output:}\\  
		{\ttfamily u} := The Euler string obtained by applying deletion operation and trimming B 
	on {\ttfamily t} following {\ttfamily x} and has distance {\ttfamily 2d}. \\
{\bf An example:} 
		{\ttfamily  t = 3, 2, 7, 2, 4, 9, 7, 4, 9, 8,\\
		d = 3,\\
		m = 5,\\
		x = 1, 3, 1,\\
		u = 2, 7, 4, 9, 4, 9.}\bigskip\\
The  file {\ttfamily TI\_d.py}  contains an implementation of 
TI$_d$-generative ReLU  to
generate strings with a given edit distance due to insertion 
operation only.  \\
{\bf Input:} \\
		 {\ttfamily t} := Input Euler string of length {\ttfamily 2n}\\
		{\ttfamily  d} := Edit distance\\
		{\ttfamily m} := The size of the symbol set\\
		{\ttfamily x} := The string of length {\ttfamily 4d} to identify insertion operation\\
{\bf Output:}\\  
		{\ttfamily u} := The string obtained by applying insertion operation
	on {\ttfamily t} following {\ttfamily x} and has distance {\ttfamily 2d}. \\
{\bf An example:} 
		{\ttfamily  t = 3, 2, 7, 2, 4, 9, 7, 4, 9, 8,\\
		d = 4,\\
		m = 5,\\
		x = 1, 0, 3, 0, 2, 4, 1, 1, 3, 2, 5, 1, 4, 1, 3, 5,\\
		u = 1, 6, 5, 3, 2, 7, 4, 2, 4, 9, 3, 8, 7, 4, 9, 9, 8, 10.}\bigskip\\
The  file {\ttfamily TE\_d\_unified.py}  contains an implementation of  
TE$_d$-generative ReLU  to
generate strings with a given edit distance due to deletion,
substitution and insertion operations simultaneously. \\
{\bf Input:} \\
		 {\ttfamily t} := Input Euler string of length {\ttfamily 2n}\\
		{\ttfamily  d} := Edit distance\\
		{\ttfamily m} := The size of the symbol set\\
		{\ttfamily $\Delta$} := The small number\\
		{\ttfamily x} := The string of length {\ttfamily 7d} to identify substitution, insertion and deletion operations\\
{\bf Output:}\\  
		{\ttfamily u} := The string obtained by applying deletion, substitution, 
			 and insertion operations simultaneously
	on {\ttfamily t} following {\ttfamily x} and has at most distance {\ttfamily 2d}. \\
{\bf An example:} 
		{\ttfamily  t := 3, 2, 12, 2, 4, 14, 12, 4, 14, 13,\\}
		{\ttfamily d := 3,\\}
		{\ttfamily m := 10,\\}
		{\ttfamily $\Delta$ := 0.01 \\}
		{\ttfamily x := 0.3, 0, 0.38, 0, 0.46, 0.55, 0, 0.6, 0.88, 0.66, 0.75, 0, 0.55$, 
$0.87, 0.03, 0.02, 0.45, 0.09, 0, 0.7, 0.5,\\}
		{\ttfamily u := 5, 3, 2, 6, 16, 12, 4, 14, 13, 15.}	

\begin{thebibliography}{999}

\bibitem{KN2017}
N. Killoran, L. J. Lee, A. Delong, D. Duvenaud, and B. J. Frey, {Generating and designing DNA with deep generative models},  
{\it arXiv preprint arXiv:1712.06148},  
(2017). \ \url{https://doi.org/10.48550/arXiv.1712.06148}  

\bibitem{TB2013}
T. Buschmann and L. V. Bystrykh, {Levenshtein error-correcting barcodes for multiplexed DNA sequencing},  
{\it BMC Bioinformatics},  
\textbf{14} (2013), 1–10. \ \url{http://www.biomedcentral.com/1471-2105/14/272}  

\bibitem{BA2017}  
B. Al Kindhi, M. A. Hendrawan, D. Purwitasari, T. A. Sardjono, and M. H. Purnomo, {Distance-based pattern matching of DNA sequences for evaluating primary mutation},  
in {\it Proceedings of the 2017 Second International Conference on Information Technology, Information Systems and Electrical Engineering}, pp. 310–314, 2017.  
%
\bibitem{GE2022}
G. Hinton, L. Deng, D. Yu, G. E. Dahl, A. R. Mohamed, N. Jaitly, A. Senior, V. Vanhoucke, P. Nguyen, T. N. Sainath, and B. Kingsbury, {Deep neural networks for acoustic modeling in speech recognition: The shared views of four research groups},  
{\it IEEE Signal Processing Magazine},  
\textbf{29} (2012), 82–97. \ \url{https://doi.org/10.1109/MSP.2012.2205597}  
%
\bibitem{WC2019}  
C. H. Wan, S. P. Chuang, and H. Y. Lee, {Towards audio-to-scene image synthesis using generative adversarial network},  
in {\it Proceedings of the IEEE International Conference on Acoustics, Speech and Signal Processing}, 2019.  
%
\bibitem{VD2016}[arXiv:1609.03499]  
A. V. D. Oord, S. Dieleman, H. Zen, K. Simonyan, O. Vinyals, A. Graves, N. Kalchbrenner, A. Senior, and K. Kavukcuoglu, {Wavenet: A generative model for raw audio},  
{\it arXiv preprint arXiv:1609.03499},  
(2016). \ \url{https://doi.org/10.48550/arXiv.1609.03499}  
%
\bibitem{AN2022}
N. Aldausari, A. Sowmya, N. Marcus, and G. Mohammadi, {Video generative adversarial networks: A review},  
{\it ACM Computing Surveys},  
\textbf{55} (2022), 1–25. \ \url{https://doi-org.kyoto-u.idm.oclc.org/10.1145/3487891}  
%
\bibitem{ZJ2014}  
J. Zhou and O. Troyanskaya, {Deep supervised and convolutional generative stochastic network for protein secondary structure prediction},  
in {\it Proceedings of the International Conference on Machine Learning}, 2014.  
%
\bibitem{LJ2018}[10.1186/s13321-018-0287-6]  
J. Lim, S. Ryu, J. W. Kim, and W. Y. Kim, {Molecular generative model based on conditional variational autoencoder for de novo molecular design},  
{\it Journal of Cheminformatics},  
\textbf{10} (2018), 1–9. \ \url{https://doi.org/10.1186/s13321-018-0286-7}  
%
\bibitem{SK2020}  
D. Schwalbe-Koda and R. Gómez-Bombarelli, {Generative models for automatic chemical design},  
in {\it Machine Learning Meets Quantum Physics}, pp. 445–467, 2020.  
%
\bibitem{GC2020}
C. H. Gronbech, M. F. Vording, P. N. Timshel, C. K. Sonderby, T. H. Pers, and O. Winther, {scVAE: Variational auto-encoders for single-cell gene expression data},  
{\it Bioinformatics},  
\textbf{36} (2020), 4415–4422. \ \url{https://doi.org/10.1093/bioinformatics/btaa293}  
%
\bibitem{KD2019}
D. P. Kingma and M. Welling, {An introduction to variational autoencoders},  
{\it Foundations and Trends in Machine Learning},  
\textbf{12} (2019), 307–392. \ \url{http://dx.doi.org/10.1561/2200000056}  
%
\bibitem{BY2013}  
Y. Bengio, L. Yao, G. Alain, and P. Vincent, {Generalized denoising auto-encoders as generative models},  
in {\it Advances in Neural Information Processing Systems}, vol. 26, 2013.  
%
\bibitem{GI2020}
I. Goodfellow, J. Pouget-Abadie, M. Mirza, B. Xu, D. Warde-Farley, S. Ozair, A. Courville, and Y. Bengio, {Generative adversarial networks},  
{\it Communications of the ACM},  
\textbf{63} (2020), 139–144. \ \url{https://doi-org.kyoto-u.idm.oclc.org/10.1145/3422622}  
%
\bibitem{GF2018} 
F. Gao, Y. Yang, J. Wang, J. Sun, E. Yang, and H. Zhou, {A deep convolutional generative adversarial networks-based semi-supervised method for object recognition in synthetic aperture radar images},  
{\it Remote Sensing},  
\textbf{10} (2018), 846. \ \url{https://doi.org/10.3390/rs10060846}  
%
\bibitem{HG2009}  
G. Hinton and R. Salakhutdinov, {Deep Boltzmann machines},  
in {\it Journal of Machine Learning Research Workshop and Conference Proceedings}, 2009.  

\bibitem{AG2016}  
G. Alain, Y. Bengio, L. Yao, J. Yosinski, E. Thibodeau-Laufer, S. Zhang, and P. Vincent, {GSNs: Generative stochastic networks},  
{\it Information and Inference: A Journal of the IMA},  
\textbf{5} (2016), 210–249. \ \url{https://doi.org/10.1093/imaiai/iaw003}  
%
\bibitem{VO2016}  
V. den Oord, {Conditional image generation with PixelCNN decoders},  
in {\it Advances in Neural Information Processing Systems}, vol. 29, 2016.  
%
\bibitem{VD2016b}  
A. Van Den Oord, N. Kalchbrenner, and K. Kavukcuoglu, {Pixel recurrent neural networks},  
in {\it Proceedings of the International Conference on Machine Learning}, 2016.  
%
\bibitem{GK2015}  
K. Gregor, I. Danihelka, A. Graves, D. Rezende, and D. Wierstra, {DRAW: A recurrent neural network for image generation},  
in {\it Proceedings of the International Conference on Machine Learning}, 2015.  
%
\bibitem{HJ2020}  
J. Ho, A. Jain, and P. Abbeel, {Denoising diffusion probabilistic models},  
in {\it Proc. Adv. Neural Inf. Process. Syst. (NeurIPS)}, vol. 33, pp. 6840–6851, 2020.  
%
\bibitem{YL2023} 
L. Yang, Z. Zhang, Y. Song, S. Hong, R. Xu, Y. Zhao, W. Zhang, B. Cui, and M.-H. Yang, {Diffusion models: A comprehensive survey of methods and applications},  
{\it ACM Computing Surveys},  
\textbf{56} (2023), 1–39. \ \url{https://doi-org.kyoto-u.idm.oclc.org/10.1145/3626235}  
%
\bibitem{KA2022} 
S. Kumano and T. Akutsu, {Comparison of the representational power of random forests, binary decision diagrams, and neural networks},  
{\it Neural Computation},  
\textbf{34} (2022), 1019–1044. \ \url{https://doi.org/10.1162/neco\_a\_01486}
%
\bibitem{HSW1989} 
K. Hornik, M. Stinchcombe, and H. White, {Multilayer feedforward networks are universal approximators},  
{\it Neural Networks},  
\textbf{2} (1989), 359–366. \ \url{https://doi.org/10.1016/0893-6080(89)90020-8}

\bibitem{MPCB2014}  
G. F. Montúfar, R. Pascanu, K. Cho, and Y. Bengio, {On the number of linear regions of deep neural networks},  
in {\it Proceedings of Advances in Neural Information Processing Systems}, vol. 27, pp. 1–9, 2014.  

\bibitem{RPKGS2017}  
M. Raghu, B. Poole, J. Kleinberg, S. Ganguli, and J. S. Dickstein, {On the expressive power of deep neural networks},  
in {\it Proceedings of the International Conference on Machine Learning}, vol. 70, pp. 2847–2854, 2017.  

\bibitem{T2015}  
M. Telgarsky, {Representation benefits of deep feedforward networks},  
{\it arXiv preprint arXiv:1509.08101},  
(2015). \ \url{https://doi.org/10.48550/arXiv.1509.08101}  

\bibitem{SM2014} 
L. Szymanski and B. McCane, {Deep networks are effective encoders of periodicity},  
{\it IEEE Transactions on Neural Networks and Learning Systems},  
\textbf{25} (2014), 1816–1827. \ \url{https://doi.org/10.1109/TNNLS.2013.2296046}  

\bibitem{CNPW2019} 
V. Chatziafratis, S. G. Nagarajan, I. Panageas, and X. Wang, {Depth-width trade-offs for ReLU networks via Sharkovsky’s theorem},  
{\it arXiv preprint arXiv:1912.04378},  
(2019). \ \url{
https://doi.org/10.48550/arXiv.1912.04378}  

\bibitem{BR2019}  
B. Hanin and D. Rolnick, {Complexity of linear regions in deep networks},  
in {\it Proceedings of the International Conference on Machine Learning}, pp. 2596–2604, 2019.  

\bibitem{YDS2010} 
Y. Bengio, O. Delalleau, and C. Simard, {Decision trees do not generalize to new variations},  
{\it Computational Intelligence},  
\textbf{26} (2010), 449–467. \ \url{ https://doi-org.kyoto-u.idm.oclc.org/10.1111/j.1467-8640.2010.00366.x}  

\bibitem{BSW2019}  
G. Biau, E. Scornet, and J. Welbl, {Neural random forests},  
{\it Sankhyā: The Indian Journal of Statistics, Series A},  
\textbf{81} (2019), 347–386. \ \url{
https://doi.org/10.48550/arXiv.1604.07143}  

\bibitem{MT2024} 
M. Ghafoor and T. Akutsu, {On the Generative Power of Rectified Linear Unit Network for Generating Similar Strings},  
{\it IEEE Access},  
\textbf{12} (2024), 52603–52622. \ \url{https://doi.org/10.1109/ACCESS.2024.3387306} 

\bibitem{SM1977}  
S. M. Selkow, {The tree-to-tree editing problem},  
{\it Information Processing Letters},  
\textbf{6} (1977), 184–186. \ \url{https://doi.org/10.1016/0020-0190(77)90064-3}  

\bibitem{Gus97}  
D. Gusfield, {\em Algorithms on Strings, Trees, and Sequences: Computer Science and Computational Biology},  
Cambridge University Press, 1997.  

\bibitem{SZ90}
B. A. Shapiro and K. Zhang, {Comparing multiple RNA secondary structures using tree comparisons},  
{\it Bioinformatics},  
\textbf{6} (1990), 309–318. \ \url{https://doi.org/10.1093/bioinformatics/6.4.309}  

\bibitem{HTGK03}  
M. Höchsmann, T. Töller, R. Giegerich, and S. Kurtz, {Local similarity in RNA secondary structures},  
in {\it Proceedings of the 2003 IEEE Bioinformatics Conference on Computational Systems Bioinformatics}, 2003.  

\bibitem{Wat95}  
M. S. Waterman, {\em Introduction to Computational Biology: Maps, Sequences and Genomes},  
CRC Press, 1995.  

\bibitem{BGK03}  
P. Buneman, M. Grohe, and C. Koch, {Path queries on compressed XML},  
in {\it Proceedings of the Twenty-Ninth International Conference on Very Large Data Bases}, pp. 141–152, 2003.  

\bibitem{Cha99}  
S. S. Chawathe, {Comparing hierarchical data in external memory},  
in {\it Proceedings of the Twenty-Fifth International Conference on Very Large Data Bases}, pp. 90–101, 1999.  

\bibitem{FLMM09}[10.1145/1613676.1613680]  
P. Ferragina, F. Luccio, G. Manzini, and S. Muthukrishnan, {Compressing and indexing labeled trees, with applications},  
{\it Journal of the ACM},  
\textbf{57} (2009), 4:1–4:33. \ \url{ http://doi.acm.org/10.1145/1613676.1613680} 

\bibitem{BK99}  
J. Bellando and R. Kothari, {Region-based modeling and tree edit distance as a basis for gesture recognition},  
in {\it Proceedings of the Tenth International Conference on Image Analysis and Processing}, pp. 698–703, 1999.  

\bibitem{KTSK00}  
P. N. Klein, S. Tirthapura, D. Sharvit, and B. B. Kimia, {A tree-edit-distance algorithm for comparing simple, closed shapes},  
in {\it Proceedings of the Eleventh Annual ACM-SIAM Symposium on Discrete Algorithms}, pp. 696–704, 2000.  

\bibitem{KSK01}  
P. N. Klein, T. B. Sebastian, and B. B. Kimia, {Shape matching using edit-distance: An implementation},  
in {\it Proceedings of the Twelfth Annual Symposium on Discrete Algorithms}, pp. 781–790, 2001.  

\bibitem{SKK04} 
T. B. Sebastian, P. N. Klein, and B. B. Kimia, {Recognition of shapes by editing their shock graphs},  
{\it IEEE Transactions on Pattern Analysis and Machine Intelligence},  
\textbf{26} (2004), 550–571. \ \url{https://doi.org/10.1109/TPAMI.2004.1273924}   

\bibitem{KC1979}  
K.-C. Tai, {The tree-to-tree correction problem},  
{\it Journal of the ACM},  
\textbf{26} (1979), 422–433. \ \url{https://doi-org.kyoto-u.idm.oclc.org/10.1145/322139.322143} 

\bibitem{KZ1989} 
K. Zhang and D. Shasha, {Simple fast algorithms for the editing distance between trees and related problems},  
{\it SIAM Journal on Computing},  
\textbf{18} (1989), 1245–1262. \ \url{https://doi-org.kyoto-u.idm.oclc.org/10.1137/0218082}  

\bibitem{PN1998}  
P. N. Klein, {Computing the edit-distance between unrooted ordered trees},  
in {\it Proceedings of the Sixth Annual European Symposium on Algorithms}, pp. 91–102, 1998.  

\bibitem{ED2010}
E. D. Demaine, S. Mozes, B. Rossman, and O. Weimann, {An optimal decomposition algorithm for tree edit distance},  
{\it ACM Transactions on Algorithms},  
\textbf{6} (2010), 1–19.  

\bibitem{KB2020}  
K. Bringmann, P. Gawrychowski, S. Mozes, and O. Weimann, {Tree Edit Distance Cannot be Computed in Strongly Subcubic
Time (unless APSP can)},  
{\it ACM Transactions on Algorithms}, 
\textbf{16} (2020), 1–22. \ \url{
https://doi.org/10.48550/arXiv.1703.08940}  

\bibitem{XM2022}  
X. Mao, {Breaking the cubic barrier for unweighted tree edit distance},  
in {\it Proceedings of the IEEE Annual Symposium on Foundations of Computer Science}, pp. 792–803, 2022.  
\ \url{https://doi.org/10.48550/arXiv.2106.02026}

\bibitem{NJ2024} 
J. Nogler, A. Polak, B. Saha, V. V. Williams, Y. Xu, and C. Ye, {Faster weighted and unweighted tree edit distance and all-pairs shortest paths equivalence},  
{\it arXiv preprint arXiv:2411.06502},  
(2024). \ \url{
https://doi.org/10.48550/arXiv.2411.06502}  

\bibitem{You2018GraphRNN}  
J. You, R. Ying, X. Ren, W. Hamilton, and J. Leskovec, {GraphRNN: Generating Realistic Graphs with Deep Auto-regressive Models},  
in {\it Proceedings of the 35th International Conference on Machine Learning}, pp. 5708–5717, 2018.  

\bibitem{Wang2025LearningOrder}  
Z. Wang, J. Shi, N. Heess, A. Gretton, and M. K. Titsias, {Learning-Order Autoregressive Models with Application to Molecular Graph Generation},  
{\it arXiv preprint arXiv:2503.05979},  
(2025). \ \url{
https://doi.org/10.48550/arXiv.2503.05979}  

\bibitem{Chen2025AutoGraph} 
D. Chen, M. Krimmel, and K. Borgwardt, {Flatten Graphs as Sequences: Transformers are Scalable Graph Generators},  
{\it arXiv preprint arXiv:2502.02216},  
(2025). \ \url{
https://doi.org/10.48550/arXiv.2502.02216}  

\bibitem{Liu2018TreeGAN}  
X. Liu, X. Kong, L. Liu, and K. Chiang, {TreeGAN: Syntax-Aware Sequence Generation with Generative Adversarial Networks},  
in {\it Proceedings of the IEEE International Conference on Data Mining (ICDM)}, pp. 1140–1145, 2018.  

\bibitem{Huang2022GraphGDP}  
H. Huang, L. Sun, B. Du, Y. Fu, and W. Lv, {GraphGDP: Generative Diffusion Processes for Permutation Invariant Graph Generation},  
in {\it Proceedings of the 2022 IEEE International Conference on Data Mining (ICDM)}, pp. 201–210, 2022.  

\bibitem{Liu2025BetaDiff} 
X. Liu, Y. He, B. Chen, and M. Zhou, {Advancing Graph Generation through Beta Diffusion},  
{\it arXiv preprint arXiv:2406.09357},  
(2025). \ \url{
https://doi.org/10.48550/arXiv.2406.09357}  

\bibitem{Madeira2024StructGraph}  
M. Madeira, C. Vignac, D. Thanou, and P. Frossard, {Generative Modelling of Structurally Constrained Graphs},  
in {\it Proceedings of the 38th Conference on Neural Information Processing Systems}, pp. 137218–137262, 2024.  
%
\bibitem{TA2006}
T.~Akutsu, ``A relation between edit distance for ordered trees and edit distance for Euler strings,'' 
{\it Inf.\ Process.\ Lett.}, vol.~100, no.~3, pp.~105--109, 2006.
\end{thebibliography}
\end{document}